\newtheorem{theorem}{Theorem}
\newtheorem{corollary}[theorem]{Corollary}
\newtheorem{lemma}[theorem]{Lemma}
\newtheorem{proposition}[theorem]{Proposition}
\theoremstyle{definition}
\title{Sketch-and-solve approaches to $k$-means clustering\\by semidefinite programming}
\author{
Charles~Clum\footnote{Department of Mathematics, The Ohio State University, Columbus, Ohio, USA}
\qquad
Dustin~G.~Mixon\footnotemark[1] \footnote{Translational Data Analytics Institute, The Ohio State University, Columbus, Ohio, USA}
\qquad
Soledad~Villar\footnote{Department of Applied Mathematics \& Statistics, Johns Hopkins University, Baltimore, Maryland, USA} \footnote{Mathematical Institute for Data Science, Johns Hopkins University, Baltimore, Maryland, USA}
\qquad
Kaiying~Xie\footnote{Department of Electrical and Computer Engineering, The Ohio State University, Columbus, Ohio, USA}
}
\date{}
\begin{document}
\maketitle

\begin{abstract}
We introduce a sketch-and-solve approach to speed up the Peng-Wei semidefinite relaxation of $k$-means clustering. When the data is appropriately separated we identify the $k$-means optimal clustering. Otherwise, our approach provides a high-confidence lower bound on the optimal $k$-means value. This lower bound is \emph{data-driven}; it does not make any assumption on the data nor how it is generated. We provide code and an extensive set of numerical experiments where we use this approach to certify approximate optimality of clustering solutions obtained by k-means++.

\end{abstract}

\section{Introduction}

One of the most fundamental data processing tasks is clustering.
Here, one is given a collection of objects, a notion of similarity between those objects, and a clustering objective that scores any given partition according to how well it clusters similar objects together.
The goal is then to partition the objects in a way that optimizes this clustering objective.
For example, to partition the vertices of a simple graph into two clusters, one might take the clustering objective to be edge cut in the graph complement.
This clustering problem is equivalent to MAX-CUT, which is known to be NP-hard~\cite{Karp:72}.
To (approximately) solve MAX-CUT, one may pass to the Goemans--Williamson semidefinite relaxation~\cite{GoemansW:95} and randomly round to a nearby partition.
For certain random graph models, it is known that this semidefinite relaxation is tight with high probability, meaning the clustering problem is exactly solved before the rounding step~\cite{AbbeBH:15,Bandeira:18}.

The above discussion suggests a workflow to solve clustering problems: relax to a semidefinite program (SDP) and round the solution to a partition if necessary.
Since SDPs can be solved to machine precision in polynomial time~\cite{NesterovN:94}, this has been a worthwhile pursuit for several clustering settings~\cite{AbbeBH:15,AwasthiBCKVW:15,MixonVW:17,IguchiMPV:17,MixonX:21,AbdallaB:22}.
However, the polynomial runtime of semidefinite programming is notoriously slow in practice, making it infeasible to cluster thousands of objects (say).
As an alternative, one may pass to a random subset of the objects, cluster the subset by semidefinite programming, and then infer a partition of the full data set based on proximity to the small clusters.
This \textit{sketch-and-solve} approach was studied in~\cite{MixonX:21,AbdallaB:22} in the context of clustering vertices of a graph into two clusters.
By passing to a small subset of objects, the SDP step is no longer burdensome, and one may prove performance guarantees in terms of planted structure in the data.

In this paper, we develop analogous sketch-and-solve approaches to cluster data in Euclidean space.
Here, the standard clustering objective is the $k$-means objective.
Much like MAX-CUT, the $k$-means problem is NP-hard~\cite{AloiseDHP:09,AwasthiCKS:15}, but one may relax to the Peng--Wei semidefinite relaxation~\cite{PengW:07} and obtain performance guarantees~\cite{AwasthiBCKVW:15,IguchiMPV:17,MixonVW:17,LiLLSW:20}.
By sketching to a random subset of data points, one may solve this relaxation quickly and then cluster the remaining data points according to which cluster mean is closest.
Taking inspiration from~\cite{MixonX:21}, this approach was studied in~\cite{ZhuangCY:22} in the setting of Gaussian mixture models, where exact recovery is possible provided the planted clusters are sufficiently separated and the size of the sketch is appropriately large.
In Section~\ref{Sec.exact recover}, we consider data that are not necessarily drawn from a random model.
As we will see, the sketch-and-solve approach exactly recovers planted clusters provided they are sufficiently separated and the size of the sketch is appropriately large relative to the shape of the clusters.

In practice, it is popular to solve the $k$-means problem using Lloyd's method~\cite{Lloyd:82} with the random initialization afforded by $k$-means++~\cite{VassilvitskiiA:06}.
This algorithm produces a partition that is within an $O(\log k)$ factor of the optimal partition in expectation.
In principle, the semidefinite relaxation could deliver an \textit{a posteriori} approximation certificate for this partition, potentially establishing that the partition is even closer to optimal than guaranteed by the $k$-means++ theory.
In Section~\ref{sec.lower bound}, we describe sketch-and-solve approaches to this process that certify constant-factor approximations for data drawn from Gaussian mixture models whenever the dimension of the space is much larger than the number of clusters (even when the clusters are not separated!).
In addition, as established in Section~\ref{sec:numerics}, these algorithms give improved bounds for most of the real-world data sets considered in the original $k$-means++ paper~\cite{VassilvitskiiA:06}.

\subsection{Summary of contributions}

What follows is a brief description of this paper's contributions:

\begin{itemize}
    \item A sketch-and-solve algorithm for $k$-means clustering that consists of subsampling the input dataset, solving an SDP on the samples, and extrapolating the solution on the samples to the entire dataset. We provide theoretical guarantees when the data points are drawn from separated clusters. We phrase the conditions of optimality in terms of \emph{proximity conditions}, a classical concept within the computer science literature \cite{KumarK:10}. These results make use of work from \cite{LiLLSW:20} that connects the clustering SDP with proximity conditions. (See Section \ref{Sec.exact recover}.)
    
    \item An algorithm that takes a dataset as input and outputs a lower bound on the $k$-means optimal value. This algorithm leverages the sketch-and-solve scheme and can be used to certify approximate optimality of clusters obtained with fast, greedy methods such as $k$-means++. We provide bounds on the tightness of this lower bound when data is sampled from spherical Gaussians. (See Section \ref{sec.lower bound}.)
    
    \item Open source code and an extensive set of numerical experiments showing how tight the lower bounds are on several real-world datasets.\footnote{Our code is available here: \url{https://github.com/Kkylie/Sketch-and-solve_kmeans.git}}
\end{itemize}

\subsection{Related work}

The classical semidefinite programming relaxation for $k$-means clustering was proposed by Peng and Wei in~\cite{PengW:07}.
A few years later, \cite{AwasthiBCKVW:15} proved that the Peng--Wei relaxation is tight (i.e., it recovers the solution to the original NP-hard problem) if the clusters are sampled from the so-called \textit{stochastic ball model}~\cite{NelloreW:15} with sufficiently separated balls. 
The proof is based on the construction of a dual certificate. 
These results were significantly improved by \cite{IguchiMPV:17} and~\cite{LiLLSW:20}, and applied to spectral methods in~\cite{LingS:20}. 
In particular, \cite{LiLLSW:20} shows a connection between the tightness of the SDP and the proximity conditions from theoretical computer science~\cite{KumarK:10}. 
The actual threshold at which the SDP becomes tight is currently unknown, an open conjecture was posed in~\cite{LiLLSW:20}. 
 
The Peng--Wei SDP relaxation has also been studied in the context of clustering mixtures of Gaussians, a classical problem in theoretical computer science. 
Introduced in~\cite{Dasgupta:19}, this problem has been approached with many methods including spectral-like methods~\cite{KannanSV:05}, methods of moments~\cite{HsuK:13, WuY:20}, integer programming~\cite{DavisDW:21}, and it has been studied from an information-theoretic point of view~\cite{DiakonikolasKS:17} under many different settings and assumptions.
The performance of the SDP relaxation for clustering Gaussian mixtures was first studied in~\cite{MixonVW:17}, using proof techniques from~\cite{GuedonV:16}. 
It was later shown that the SDP error decays exponentially on the separation of the Gaussians~\cite{FeiC:18, GiraudV:19}.
Other conic relaxations of $k$-means clustering have been recently studied, for instance~\cite{PrasadH:18, PiccialliSW:22}.
A (quite loose) linear programming relaxation of $k$-means was proposed in~\cite{AwasthiBCKVW:15}, with mostly negative results. 
A significantly better LP relaxation was recently introduced and analyzed in~\cite{DerosaK:22}.

Another related line of work concerns efficiently certifying optimality of solutions to data problems via convex relaxations and dual certificates. 
In~\cite{Bandeira:16}, Bandeira proposes leveraging a dual certificate to efficiently certify optimality of solutions obtained with other (more efficient) methods. 
The goal is to combine the theoretical guarantee from (slow) convex relaxations with solutions provided by (fast) algorithms that may not have theoretical guarantees. 
This idea has been used to provide \textit{a posteriori} optimality certificates in data science problems such as point cloud registration~\cite{YangSC:20}, $k$-means clustering~\cite{IguchiMPV:17}, and synchronization~\cite{RosenCBL:19}. 
However, in order for this method to succeed, the relaxation must be tight.

Sketch-and-solve methods provide a looser guarantee (not optimality, but \textit{approximate} optimality) that can work in broader contexts, including ones where no convex relaxation is involved (for instance, numerical linear algebra algorithms~\cite{Woodruff:14}).
Here, we consider a setting where a data problem is relaxed via convex relaxation.
First, the original dataset is subsampled to a \textit{sketch}, then the convex problem is solved in the sketch, and finally a solution is inferred for the entire dataset.
The approximation guarantees from the convex relaxation combined with regularity assumptions on the data (and how well the sample can represent it) can be used to derive approximation guarantees for the general approach.
These ideas were used in~\cite{MixonX:21, AbdallaB:22} in the context of graph clustering, and in~\cite{ZhuangCY:22} in the context of clustering mixtures of Gaussians.
Here, we provide approximation guarantees for a sketch-and-solve approach for general $k$-means clustering, but we also show that the approach provides an efficient algorithm that computes a high-probability lower bound on the $k$-means objective for any dataset, with no assumptions on the data nor how it is generated.
A similar observation was made by two of the authors in a preprint~\cite{MixonV:17}. 

\subsection{Roadmap}

The following section contains some preliminaries and introduces notation for the remainder of the paper.
Next, Section~\ref{Sec.exact recover} introduces a sketch-and-solve algorithm that determines the optimal clustering provided the clusters are appropriately separated.
Section~\ref{sec.lower bound} then introduces sketch-and-solve algorithms to compute lower bounds on the optimal $k$-means value of a given dataset, and we prove that these lower bounds are nearly sharp for data drawn from Gaussian mixtures.
In Section~\ref{sec:numerics}, we illustrate the quality of these bounds on real-world datasets. 
We discuss opportunities for future work in Section~\ref{sec.discussion}, and our main results are proved in Sections~\ref{sec.proof thm2} and~\ref{sec.proof thm 5 and 6}.

\section{Preliminaries and notation}

We are interested in clustering $n$ points in $\mathbb{R}^d$ into $k$ clusters.
Denoting the index set $[n]:=\{1,\ldots,n\}$, let $\Pi(n,k)$ be the set of partitions of $[n]$ into $k$ nonempty sets, i.e., if $\Gamma\in\Pi(n,k)$, then $|\Gamma|=k$, $\bigsqcup_{S\in\Gamma}S=[n]$, and $|S|>1$ for each $S\in\Gamma$.
Given a tuple $X:=\{x_i\}_{i\in[n]}$ of points in $\mathbb{R}^d$ and a nonempty set $S\subseteq[n]$ of indices, we denote the corresponding centroid by
\[
c_S
:=\frac{1}{|S|}\sum_{i\in S}x_i.
\]
Note that we suppress the dependence of $c_S$ on $X$ for simplicity.
With this notation, the (normalized) $k$-means problem is given by
\[
\text{minimize}
\qquad
\frac{1}{n}\sum_{S\in\Gamma}\sum_{i\in S}\|x_i-c_S\|^2
\qquad
\text{subject to}
\qquad
\Gamma\in\Pi(n,k),
\]
and we denote the value of this program by $\operatorname{IP}(X,k)$.
This problem is trivial when $k=1$, and so we assume $k\geq2$ in the sequel.

Lloyd's algorithm is a popular approach to solve the $k$-means problem in practice.
This algorithm alternates between computing cluster centroids and re-partitioning the data points according to the nearest centroid.
These iterations are inexpensive, costing only $O(kdn)$ operations each, and the algorithm eventually converges to a (possibly sub-optimal) fixed point.
The value $V^{(0)}$ of the $k$-means++ random initialization enjoys the following guarantee of approximate optimality (see Theorem~3.1 in~\cite{VassilvitskiiA:06}):
\begin{equation}
\label{eq.kmean++ lower bound}
\operatorname{IP}(X,k)
\geq \mathbb{E} L,
\qquad
L:=\frac{V^{(0)}}{8(\log k +2)}.
\end{equation}
Since the $k$-means objective monotonically decreases with each iteration of Lloyd's algorithm, the $k$-means++ initialization ensures a $O(\log k)$-competitive solution to the $k$-means problem on average.

As a theory-friendly alternative, one may instead solve the $k$-means problem by relaxing to the Peng--Wei semidefinite program~\cite{PengW:07}.
Let $1_S\in\mathbb{R}^n$ denote the indicator vector of $S\subseteq[n]$, and encode $\Gamma\in\Pi(n,k)$ with the matrix
\[
Z_\Gamma
:=\sum_{S\in\Gamma}\frac{1}{|S|}1_S1_S^\top
\in\mathbb{R}^{n\times n}.
\]
Define $D_X\in\mathbb{R}^{n\times n}$ by $(D_X)_{ij}:=\|x_i-x_j\|^2$.
A straightforward manipulation gives
\[
\sum_{S\in\Gamma}\sum_{i\in S}\|x_i-c_S\|^2
=\frac{1}{2}\sum_{S\in\Gamma}\frac{1}{|S|}\sum_{i\in S}\sum_{j\in S}\|x_i-x_j\|^2
=\frac{1}{2}\operatorname{tr}(D_XZ_\Gamma).
\]
Thus, the $k$-means problem is equivalently given by
\[
\text{minimize}
\qquad
\frac{1}{2n}\operatorname{tr}(D_XZ_\Gamma)
\qquad
\text{subject to}
\qquad
\Gamma\in\Pi(n,k).
\]
Considering the containment
\[
\Big\{Z_\Gamma:\Gamma\in\Pi(n,k)\Big\}
\subseteq \mathcal{Z}(n,k)
:=\Big\{Z\in\mathbb{R}^{n\times n}:Z1=1,~\operatorname{tr}Z=k,~Z\geq0,~Z\succeq0\Big\},
\]
we obtain the (normalized) Peng--Wei semidefinite relaxation~\cite{PengW:07}:
\[
\text{minimize}
\qquad
\frac{1}{2n}\operatorname{tr}(D_XZ)
\qquad
\text{subject to}
\qquad
Z\in\mathcal{Z}(n,k).
\]
We denote the value of this program by $\operatorname{SDP}(X,k)$.
When the relaxation is tight, the minimizer recovers the optimal $k$-means partition, and otherwise, the value delivers a lower bound on the optimal $k$-means value.
For many real-world clustering instances, this SDP is too time consuming to compute in practice.
To resolve this issue, we introduce a few \textit{sketch-and-solve} approaches that work similarly well with far less run time.

\section{Sketch-and-solve clustering}
\label{Sec.exact recover}

In this section, we introduce a sketch-and-solve approach to SDP-based $k$-means clustering that works well provided the clusters are sufficiently separated.
Suppose we run the Peng--Wei SDP on a random subset of the data.
If the original clusters are well separated, then this random subset satisfies a proximity condition from~\cite{LiLLSW:20} with high probability, which in turn implies that the Peng--Wei SDP recovers the desired clusters in this subset.
Then we can partition the full data set according to which of these cluster centroids is closest.
This approach is summarized in Algorithm~\ref{alg.exact}. 
The main result of this section (Theorem~\ref{thm.sketch-and-solve-exact}) is a theoretical guarantee for this algorithm.

We start by introducing the necessary notation to enunciate the proximity condition from~\cite{LiLLSW:20}.
For $X\in(\mathbb{R}^d)^n$ and $\Gamma\in\Pi(n,k)$, and for each $S,T\in\Gamma$ with $S\neq T$, we define
\[
\alpha_{ST}
:=\min_{i\in S}\Big\langle x_i-\frac{c_S+c_T}{2},\frac{c_S-c_T}{\|c_S-c_T\|}\Big\rangle,
\quad
\beta_{ST}
:=\frac{1}{2}\Big(\Big(\frac{1}{|S|}+\frac{1}{|T|}\Big)\sum_{R\in\Gamma}\|X_R\|_{2\to2}^2\Big)^{1/2}.
\]
Here, $X_R$ denotes the $d\times n$ matrix whose $i$th column equals zero unless $i\in R$, in which case the column equals $x_i-c_R$.
Notice that $\alpha_{ST}$ captures how close $S$ is to the hyperplane that bisects the centroids $c_S$ and $c_T$, while $\beta_{ST}$ scales some measure of variance of the entire dataset by the sizes of $S$ and $T$.
Intuitively, $k$-means clustering is easier when clusters are well separated, e.g., when the following quantity is positive:
\[
\operatorname{prox}(X,\Gamma)
:=\min_{\substack{S,T\in\Gamma\\S\neq T}}\Big(\alpha_{ST}-\beta_{ST}\Big).
\]

\begin{proposition}[Theorem~2 in~\cite{LiLLSW:20}]
\label{prop.prox}
For each $X\in(\mathbb{R}^d)^n$, there exists at most one $\Gamma\in\Pi(n,k)$ for which $\operatorname{prox}(X,\Gamma)>0$.
Furthermore, if such $\Gamma$ exists, then $Z_\Gamma$ is the unique minimizer of the Peng--Wei semidefinite relaxation for $X$, and therefore $\Gamma$ is the unique minimizer of the $k$-means problem.
\end{proposition}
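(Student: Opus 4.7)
The plan is to establish the second assertion (that $Z_\Gamma$ is the unique Peng--Wei minimizer whenever $\operatorname{prox}(X,\Gamma) > 0$), from which the first assertion follows immediately: if two distinct partitions $\Gamma_1 \neq \Gamma_2$ both satisfied $\operatorname{prox}>0$, then each would force the corresponding $Z_{\Gamma_j}$ to be the \emph{unique} SDP minimizer, a contradiction. The standard route for the second assertion is to exhibit an explicit dual certificate satisfying the KKT conditions of the Peng--Wei program, together with strict complementarity to force uniqueness.

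First I would write down the Lagrangian dual. The equality constraints $Z\mathbf{1}=\mathbf{1}$ and $\operatorname{tr}(Z)=k$ contribute a vector multiplier $y\in\mathbb{R}^n$ and a scalar $z\in\mathbb{R}$; the entrywise constraint $Z\geq 0$ contributes an entrywise-nonnegative slack matrix $B$; and $Z\succeq 0$ contributes a PSD slack $Q$. The KKT conditions reduce to the stationarity identity
\[
\tfrac{1}{2n}D_X + B - y\mathbf{1}^\top - \mathbf{1}y^\top - zI = Q,
\]
together with complementary slackness $QZ_\Gamma = 0$ and $\langle B, Z_\Gamma\rangle = 0$. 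Since $Z_\Gamma$ is strictly positive on intra-cluster index pairs, the latter forces $B$ to vanish on intra-cluster blocks, while $QZ_\Gamma = 0$ forces $\operatorname{span}\{\mathbf{1}_S : S\in\Gamma\}\subseteq \ker Q$.

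Next I would construct $y$, $z$, and $B$ explicitly. Restricting the stationarity identity to a block $S\times S$, the requirement that the resulting matrix annihilate $\mathbf{1}_S$ pins down $y_i$ for $i\in S$ and $z$ in terms of centroid-based quantities (squared distances $\|x_i-c_S\|^2$ and within-cluster variance). Having fixed $y$ and $z$, each inter-cluster entry of $B$ on block $S\times T$ is then determined by the identity, and a direct computation shows that it is proportional, up to a positive factor, to $\langle x_i - (c_S+c_T)/2,\,(c_S-c_T)/\|c_S-c_T\|\rangle$. Thus the nonnegativity $B\geq 0$ reduces precisely to the point-wise hyperplane inequalities whose minimum is $\alpha_{ST}$.

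The main obstacle is verifying $Q\succeq 0$ with rank exactly $n-k$, which is the strict complementarity needed for uniqueness. By construction the within-cluster blocks of $Q$ already annihilate the indicators $\mathbf{1}_S$; the residual obstruction to positive semidefiniteness is a cross term built from the centered data matrices $X_R$, whose spectral norm on the orthogonal complement of $\operatorname{span}\{\mathbf{1}_S : S\in\Gamma\}$ is precisely captured by $\beta_{ST}$. I would close the argument by showing that the strict inequalities $\alpha_{ST}>\beta_{ST}$ simultaneously deliver the strict positivity of the off-block entries of $B$ and the spectral lower bound certifying $Q\succ 0$ on $(\ker Q)^\perp$, so that $Z_\Gamma$ is the unique primal optimum of the Peng--Wei SDP, and hence $\Gamma$ is the unique $k$-means optimizer.
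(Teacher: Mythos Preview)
The paper does not prove this proposition; it is quoted verbatim as Theorem~2 of Li--Li--Ling--Strohmer--Wei~\cite{LiLLSW:20} and used as a black box. So there is nothing in the present paper to compare your argument against.

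That said, your sketch tracks the dual-certificate strategy of~\cite{LiLLSW:20} correctly at a high level: derive the first assertion from the second, write down KKT/complementary slackness, solve for $(y,z)$ on the diagonal blocks, read off $B$ and $Q$, and check $B\ge 0$ and $Q\succeq 0$ with $\operatorname{rank}(Q)=n-k$. One imprecision worth flagging: for $i\in S$, $j\in T$, the off-block entry $B_{ij}$ is not ``proportional'' to the single signed distance $\langle x_i-\tfrac{c_S+c_T}{2},\tfrac{c_S-c_T}{\|c_S-c_T\|}\rangle$; it depends on both $x_i$ and $x_j$, and the nonnegativity $B_{ij}\ge 0$ ultimately reduces to a \emph{product} of two such signed distances (one for $i\in S$, one for $j\in T$), which is why the simultaneous conditions $\alpha_{ST}>0$ and $\alpha_{TS}>0$ are both needed. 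Similarly, the passage from $\alpha_{ST}>\beta_{ST}$ to $Q\succ 0$ on $(\operatorname{span}\{1_S\})^\perp$ is the technical heart of the argument in~\cite{LiLLSW:20} and requires a careful block decomposition and Schur-complement-type estimate; your sentence gestures at it but would not stand as a proof on its own.
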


Next, we introduce notation necessary to enunciate the main result in this section.
Given $X\in(\mathbb{R}^d)^n$ and $\Gamma\in\Pi(n,k)$, consider the quantities
\[
\Delta:=\min_{\substack{S,T\in\Gamma\\S\neq T}}\|c_S-c_T\|,
\qquad
r:=\max_{S\in\Gamma}\max_{i\in S}\|x_i-c_S\|.
\]
We will consider a \textit{shape parameter} of $(X,\Gamma)$ that is determined by the following quantities:
\[
d, \qquad
\frac{\Delta}{r}, \qquad
\frac{\operatorname{prox}(X,\Gamma)}{r}, \qquad
k, \qquad
\frac{1}{n}\min_{S\in\Gamma}|S|, \qquad
\frac{1}{n}\max_{S\in\Gamma}|S|.
\]
Notice that the first term above is completely determined by $X$, the next two terms are determined by $X$ and $\Gamma$, and the last three terms are completely determined by $\Gamma$.
Since its dependence on $X$ factors through $d$, $\frac{\Delta}{r}$, and $\frac{\operatorname{prox}(X,\Gamma)}{r}$, the shape parameter is invariant to rotation, translation, and dilation.

\begin{algorithm}[t]
\SetAlgoLined
\KwData{Points $X=\{x_i\}_{i\in[n]}\in(\mathbb{R}^d)^n$, number of clusters $k$, Bernoulli rate $p$}
\KwResult{Optimal $k$-means clustering $\Gamma\in\Pi(n,k)$}
Draw $W\subseteq[n]$ according to a Bernoulli process with rate $p$ and put $X':=\{x_i\}_{i\in W}$\\
Define $D_{X'}\in\mathbb{R}^{W\times W}$ by $(D_{X'})_{ij}:=\|x_i-x_j\|^2$ for $i,j\in W$\\
Solve Peng--Wei semidefinite relaxation to find optimal $\Gamma'\in\Pi(|W|,k)$ for $X'$\\
Compute centroids $\{c_{S'}\}_{S'\in \Gamma'}$\\
Output $\Gamma\in\Pi(n,k)$ that partitions $X$ according to closest $c_{S'}$
\caption{Sketch-and-solve algorithm for Peng--Wei semidefinite relaxation
 \label{alg.exact}}
\end{algorithm}

\begin{theorem}
\label{thm.sketch-and-solve-exact}
There exists an explicit shape parameter $C\colon(\mathbb{R}^d)^n\times\Pi(n,k)\to[0,\infty]$ for which the following holds:
\begin{itemize}
\item[(a)]
Suppose $X\in(\mathbb{R}^d)^n$ and $\Gamma\in\Pi(n,k)$ satisfy $\operatorname{prox}(X,\Gamma)>0$ and $r\leq\frac{\Delta}{2}$.
Then $C(X,\Gamma)<\infty$, and for $\epsilon\in(0,\frac{1}{2})$, Algorithm~\ref{alg.exact} exactly recovers $\Gamma$ from $X$ with probability $1-\epsilon$ provided
\[
\mathbb{E}|W|
> C(X,\Gamma)\cdot\log(1/\epsilon).
\]
(Here, the randomness is in the Bernoulli process in Algorithm~\ref{alg.exact}.)
\item[(b)]
$C(X,\Gamma)$ is directly related to $d$, $k$, and $\frac{1}{n}\max_{S\in\Gamma}|S|$, and inversely related to  $\frac{\Delta}{r}$, $\frac{\operatorname{prox}(X,\Gamma)}{r}$, and
$\frac{1}{n}\min_{S\in\Gamma}|S|$.
\end{itemize}
\end{theorem}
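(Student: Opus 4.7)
The plan is to decompose correctness of Algorithm~\ref{alg.exact} into two events that hold simultaneously with high probability: (A) the Peng--Wei SDP applied to $X'$ recovers the induced partition $\Gamma':=\{S\cap W:S\in\Gamma\}$, and (B) for each $S\in\Gamma$ and every $x_i\in S$, the sketch centroid $c_{S\cap W}$ is strictly closer to $x_i$ than any other sketch centroid $c_{T\cap W}$. Event (A) then ensures that the SDP in line 3 outputs $\Gamma'$, so its centroids really are the $c_{S\cap W}$, and event (B) ensures that the nearest-centroid extrapolation in line 5 reconstructs $\Gamma$. To prove (A) it suffices by Proposition~\ref{prop.prox} to verify $\operatorname{prox}(X',\Gamma')>0$. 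A union bound over the two events, combined with Chernoff tail estimates on the sketch cluster sizes, will give the stated $1-\epsilon$ guarantee.

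Event (B) is the easier margin argument. Conditioned on $\max_{S\in\Gamma}\|c_{S\cap W}-c_S\|\leq\eta$, the triangle inequality yields $\|x_i-c_{S\cap W}\|\leq r+\eta$ and $\|x_i-c_{T\cap W}\|\geq\Delta-r-\eta$ for $x_i\in S$ and $T\neq S$, so nearest-centroid recovery succeeds whenever $\eta<(\Delta-2r)/2$, a positive quantity by the assumption $r\leq\Delta/2$. To obtain the centroid error $\eta$, I would apply a vector Bernstein inequality to the sum $c_{S\cap W}-c_S=\frac{1}{|S\cap W|}\sum_{i\in S\cap W}(x_i-c_S)$, using the bound $\|x_i-c_S\|\leq r$, which yields $\eta=O(r/\sqrt{p|S|})$ with high probability. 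Solving $\eta<(\Delta-2r)/2$ shows that the necessary sketch size scales inversely with $(\Delta/r)^2$ and $\min_S|S|/n$.

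For event (A), I would unpack the proximity condition on the sketch. The sketch analogue of $\beta_{ST}$ requires controlling $\|X_{R\cap W}\|_{2\to 2}$ for each $R\in\Gamma$; matrix Bernstein applied to the random column-sampled matrix gives a bound of the form $\sqrt{p}\,\|X_R\|_{2\to 2}$ plus a lower-order additive term depending on $r$ and $\log d$, which after renormalizing by $1/|S\cap W|+1/|T\cap W|=\Theta(1/(p\min_S|S|))$ bounds $\beta_{S'T'}$ in terms of $\beta_{ST}$. The sketch analogue of $\alpha_{ST}$ is a minimum of inner products against the direction $(c_{S\cap W}-c_{T\cap W})/\|c_{S\cap W}-c_{T\cap W}\|$, which by Cauchy--Schwarz and the centroid-concentration step differs from $\alpha_{ST}$ by at most $O(r\cdot\eta/\Delta)$. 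Combining the two gives $\operatorname{prox}(X',\Gamma')\geq \operatorname{prox}(X,\Gamma)-o(1)$ on the intersection of the concentration events, establishing event (A) once $p\min_S|S|$ exceeds a threshold depending on $d$, $k$, the ratios $\Delta/r$ and $\operatorname{prox}(X,\Gamma)/r$, and the balance of the partition. Solving for $p$, and recalling $\mathbb{E}|W|=pn$, yields the shape parameter $C(X,\Gamma)$.

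The main obstacle is the operator-norm step: $\|X_{R\cap W}\|_{2\to 2}$ is not an average of independent scalar quantities, and a naive union bound over singular directions costs a factor of $d$. Matrix Bernstein circumvents this at the cost of a $\log d$ factor and an additive term proportional to $r$, explaining the direct dependence of $C$ on $d$ in part~(b). The remaining dependencies in part~(b) follow from inspection of the thresholds produced above: larger $k$ requires a union bound over $\binom{k}{2}$ pairs; a larger $\max_S|S|/n$ enters through the renormalization constant in $\beta_{S'T'}$; while larger $\Delta/r$, $\operatorname{prox}(X,\Gamma)/r$, and $\min_S|S|/n$ each loosen one of the concentration thresholds and thus decrease the required sketch size.
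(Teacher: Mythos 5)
Your proposal mirrors the paper's proof: you split success into (i) the sketched SDP recovering $\Gamma'=\{S\cap W\}_{S\in\Gamma}$ by verifying $\operatorname{prox}(X',\Gamma')>0$ via Proposition~\ref{prop.prox}, and (ii) nearest-centroid extrapolation reconstructing $\Gamma$, and you control both through centroid concentration (vector Bernstein), operator-norm concentration for $X_{R\cap W}$ (matrix Bernstein), and Chernoff bounds on sketch cluster sizes, which is exactly the structure of Lemmas~\ref{lem.centroid}--\ref{lem.rounding after sketch and solve}. One small quantitative slip worth noting: the deviation $\alpha_{ST}-\alpha_{S'T'}$ is $O(\eta)$, not $O(r\eta/\Delta)$, since $x_i$ lies at distance $\Theta(\Delta)$ (not $\Theta(r)$) from the bisecting midpoint $m_{ST}$ and the midpoint itself shifts by $O(\eta)$ --- this does not change the argument's structure or the dependencies claimed in part~(b), but it does change the threshold on $p$ by a factor of order $(\Delta/r)^2$.
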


The proof idea for Theorem~\ref{thm.sketch-and-solve-exact} is simple: identify conditions under which the sketched data satisfy the proximity condition with probability $1-\epsilon$.
Notably, the complexity of the SDP step of Algorithm~\ref{alg.exact} depends on $|W|$, which in turn scales according to the shape of the data rather than its size.
Indeed, when the clusters are more separated, the shape parameter is smaller, and so the sketch size can be taken to be smaller by Theorem~\ref{thm.sketch-and-solve-exact}.

For the sake of illustration, we test the performance of Algorithm~\ref{alg.exact} on data drawn according to the \textit{stochastic ball model}.
Fix $\{\mu_a\}_{a\in[k]}$ in $\mathbb{R}^d$, and for each $a$, draw $\{g_{a,i}\}_{i\in[m]}$ independently according to some rotation-invariant probability distribution $\mathcal{D}$ supported on the origin-centered unit ball in $\mathbb{R}^d$.
Then we write $X \sim \mathsf{SBM}(\mathcal{D}, \{\mu_a\}_{a\in[k]}, m)$ to denote the data $X = \{\mu_a + g_{a,i}\}_{a\in[k],i\in[m]}$.
In words, we draw $m$ points from each of $k$ unit balls centered at the $\mu_a$'s, meaning $n=km$.
We will focus on the case where $d=2$, $k=2$, and $\mathcal{D}$ is the uniform distribution on the unit ball.

First, we consider the behavior of our sketch-and-solve approach as $n\to\infty$.
(Indeed, we have the luxury of considering such behavior since the performance of our approach does not depend on $n$, but rather on the shape of the data.)
In the limit, one may show that
\[
\Delta=\|\mu_1-\mu_2\|,
\qquad
r=1,
\qquad
\operatorname{prox}(X,\Gamma)=\frac{\Delta-3}{2},
\qquad
\frac{1}{n}\min_{S\in\Gamma}|S|=\frac{1}{n}\max_{S\in\Gamma}|S|=\frac{1}{2}.
\]
By Theorem~\ref{thm.sketch-and-solve-exact}(b), it follows that the shape parameter $C(X,\Gamma)$ is inversely related to $\Delta$, as one might expect.
Figure~\ref{fig.exact recovery SBM}(left) illustrates that Algorithm~\ref{alg.exact} exactly recovers the planted clustering of the \textit{entire balls} provided they are appropriately separated and the sketch size $|W|$ isn't too small.
Overall, the behavior reported in Figure~\ref{fig.exact recovery SBM}(left) qualitatively matches the prediction in Theorem~\ref{thm.sketch-and-solve-exact}.

\begin{figure}[t]
\begin{center}
\includegraphics[width=0.49\textwidth,trim={50 200 50 200},clip]{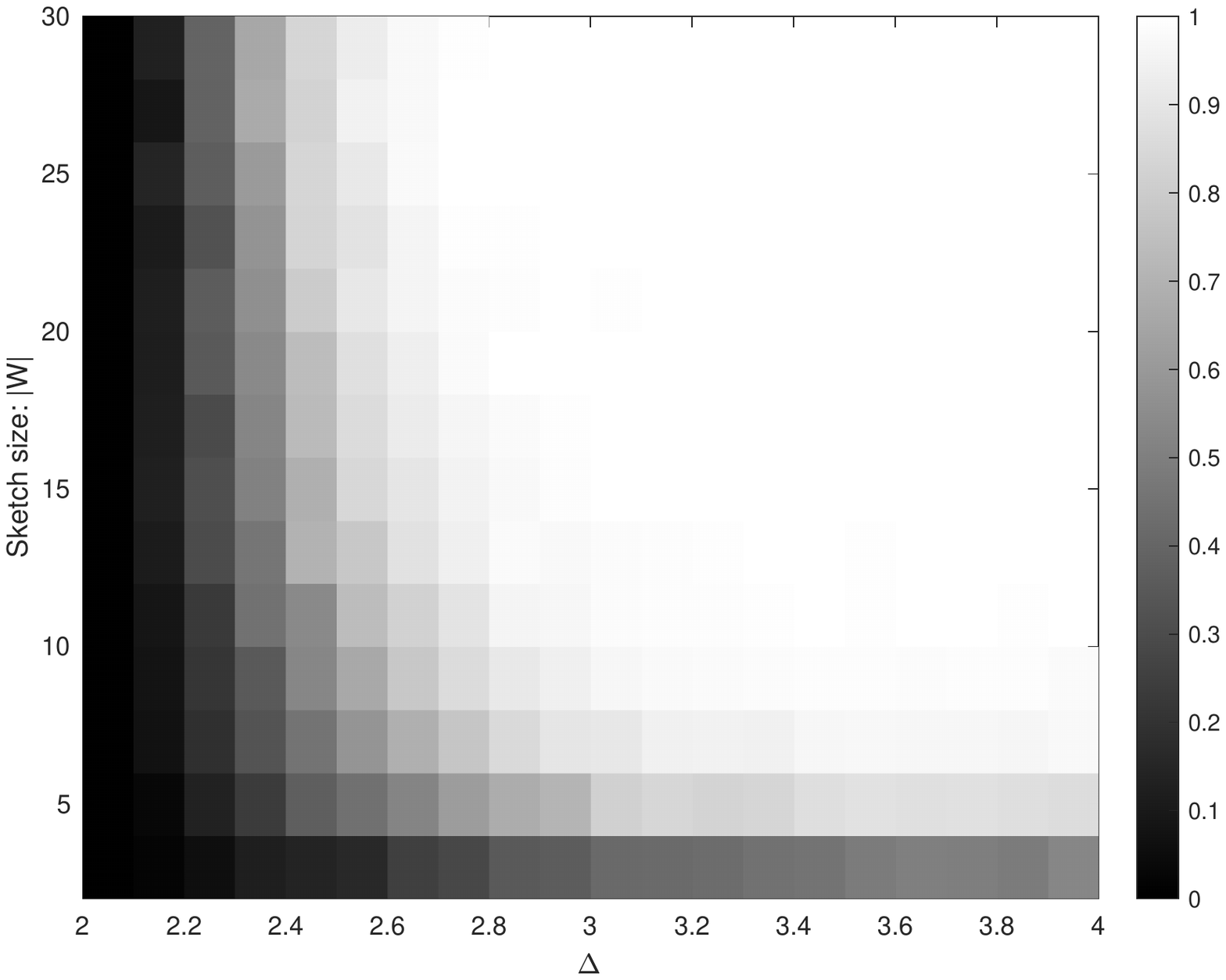}
\includegraphics[width=0.49\textwidth,trim={50 200 50 200},clip]{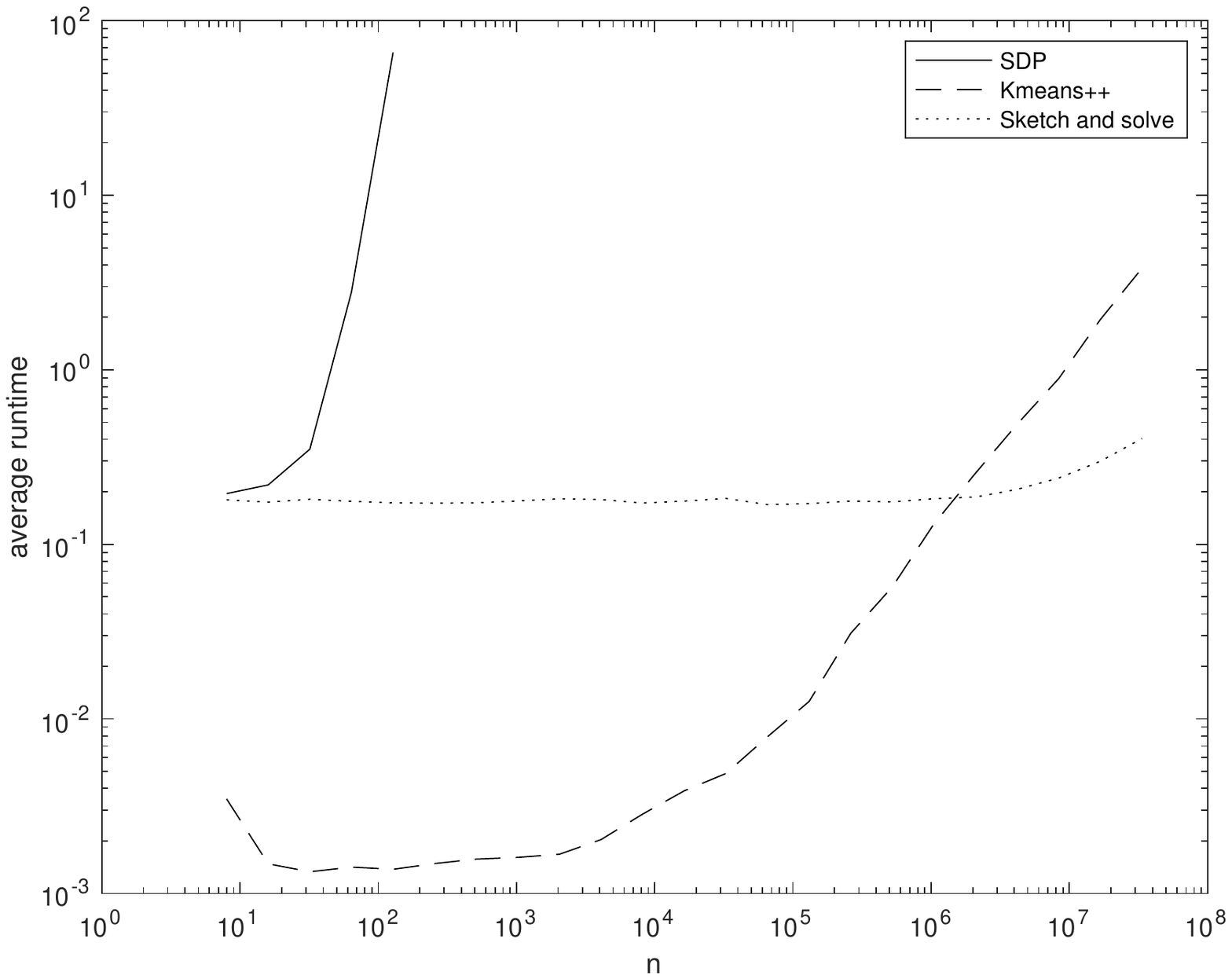}
\end{center}
\caption{\label{fig.exact recovery SBM}
\textbf{(left)}
For each $\Delta \in \{2,2.1,\ldots,4\}$ and $|W| \in \{2,4,\ldots,30\}$, perform the following experiment $500$ times:
Consider two unit balls in $\mathbb{R}^2$ whose centers are separated by $\Delta$.
Draw $|W|$ points uniformly from the union of the balls, solve the Peng--Wei SDP for this sketch in CVX~\cite{GrantB:cvx}, compute the cluster centroids, and partition the union of balls according to the nearest centroid.
Plot the proportion of these $500$ trials for which the planted clustering was exactly recovered.
\textbf{(right)}
For each $n\in\{2^3,2^4,\ldots,2^{25}\}$, perform the following experiment $10$ times:
Draw $n$ points in $\mathbb{R}^2$ according to the uniformly distributed stochastic ball model with separation $\Delta=3$.
Run Algorithm~\ref{alg.exact} with $k=2$ and $p=\min\{\frac{10}{n},1\}$ so that $\mathbb{E}|W| = \min\{10,n\}$.
For comparison, run the Peng--Wei SDP in CVX and MATLAB's built-in $k$-means++ algorithm (both with $k=2$), and plot the average runtimes.
}
\end{figure}

Next, we compare the runtime of our method to traditional (non-sketching) methods in Figure~\ref{fig.exact recovery SBM}(right).
For this, we consider the same stochastic ball model with separation $\Delta=3$.
As mentioned earlier, one cannot run the Peng--Wei SDP directly on a large dataset in a reasonable amount of time.
Also, each iteration of $k$-means++ takes linear runtime.
Meanwhile, the bulk of our sketch-and-solve approach has a runtime that scales with the shape of the data (i.e., it's independent of the size of the data).
Of course, the final clustering step of our approach requires a single pass of the data, which explains the slight increase in runtime for larger datasets.

\section{High-confidence lower bounds}
\label{sec.lower bound}

The previous section focused on a sketch-and-solve approach that recovers the optimal $k$-means clustering whenever the data exhibits a sufficiently nice shape (in some quantifiable sense).
However, the Peng--Wei semidefinite relaxation fails to be tight when the planted clusters are not well separated, so we should not expect exact recovery in general.
Regardless, the SDP always delivers a lower bound on the optimal $k$-means value, which can be useful in practice (e.g., when deciding whether to re-initialize Lloyd's algorithm again to find an even better clustering).

In this section, we introduce sketch-and-solve approaches for computing such bounds in a reasonable amount of time.
Our approaches are inspired by the following simple result:

\begin{lemma}
\label{lem.ip sketch bound}
Consider any sequence $X:=\{x_i\}_{i\in[n]}$ in $\mathbb{R}^d$, draw indices $\{i_j\}_{j\in[s]}$ uniformly from $[n]$ with (or without) replacement, and put $Y:=\{x_{i_j}\}_{j\in[s]}$.
Then
\[
\mathbb{E}\operatorname{SDP}(Y,k)
\leq\mathbb{E}\operatorname{IP}(Y,k)
\leq\operatorname{IP}(X,k).
\]
\end{lemma}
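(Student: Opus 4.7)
The plan is to handle the two inequalities separately. The first, $\mathbb{E}\operatorname{SDP}(Y,k) \leq \mathbb{E}\operatorname{IP}(Y,k)$, is immediate: as reviewed in Section~2, the Peng--Wei SDP is a relaxation of the $k$-means integer program (the feasible set $\mathcal{Z}(s,k)$ contains every $Z_{\Gamma}$ for $\Gamma\in\Pi(s,k)$), so $\operatorname{SDP}(Y,k) \leq \operatorname{IP}(Y,k)$ on every realization of $Y$, and I simply take expectations.

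The content lies in $\mathbb{E}\operatorname{IP}(Y,k) \leq \operatorname{IP}(X,k)$. My plan is to exhibit one specific (generally suboptimal) feasible partition of the sample whose expected cost I can evaluate exactly. Let $\Gamma^\star = \{S_1^\star,\ldots,S_k^\star\}$ attain $\operatorname{IP}(X,k)$, set $c_a^\star := c_{S_a^\star}$, and let $\phi:[n]\to[k]$ encode $i\in S_{\phi(i)}^\star$. Transferring $\phi$ to the sample by assigning label $\phi(i_j)$ to $y_j$ produces a partition $\hat{\Gamma}$ of $[s]$; when $\hat{\Gamma}\in\Pi(s,k)$, the variational characterization of centroids (per-cluster means minimize the sum of squared distances) yields the pointwise upper bound
\[
\operatorname{IP}(Y,k) \leq \frac{1}{s}\sum_{j=1}^s \|y_j - c_{\phi(i_j)}^\star\|^2.
\]
Taking expectation and using the fact that each index $i_j$ has uniform marginal distribution on $[n]$ under either sampling scheme,
\[
\mathbb{E}\bigg[\frac{1}{s}\sum_{j=1}^s \|y_j - c_{\phi(i_j)}^\star\|^2\bigg] = \mathbb{E}\|x_{i_1} - c_{\phi(i_1)}^\star\|^2 = \frac{1}{n}\sum_{i=1}^n \|x_i - c_{\phi(i)}^\star\|^2 = \operatorname{IP}(X,k).
\]

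The one real obstacle is that $\hat{\Gamma}$ might fail to be in $\Pi(s,k)$: one or more of its blocks can be empty or too small, especially when $s$ is modest or the clusters of $\Gamma^\star$ are wildly imbalanced. I would handle this with a splitting refinement. For each deficient block, peel off a point from a sufficiently populated block; the peeled singleton contributes zero cost about its own centroid, while the residual block's sum of squared distances about its new (contracted) centroid is no larger than that about $c_a^\star$. Iterating produces a genuine element of $\Pi(s,k)$ whose cost remains bounded by $\frac{1}{s}\sum_j \|y_j - c_{\phi(i_j)}^\star\|^2$. Beyond this bookkeeping step, the argument is just the marriage of the centroid variational inequality with linearity of expectation.
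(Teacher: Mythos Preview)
Your proof is correct and follows essentially the same route as the paper: fix an optimal $\Gamma^\star$ for $X$, push the labeling through the sampling map to get a candidate partition of $[s]$, bound its cost by $\frac{1}{s}\sum_j\|y_j-c_{\phi(i_j)}^\star\|^2$ via the centroid variational property, and take expectations using the uniform marginal of each $i_j$. The paper writes the middle step as $\operatorname{IP}(Y,k)\le\frac{1}{s}\sum_{S\in\Gamma^\star}\sum_{j\in f^{-1}(S)}\|x_{i_j}-c_{f^{-1}(S)}\|^2$ without commenting on the possibility that some $f^{-1}(S)$ are empty; your splitting refinement makes that edge case explicit, so if anything you are slightly more careful than the paper here.
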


\begin{proof}
The first inequality follows from relaxation.
For the second inequality, select $\Gamma\in\arg\operatorname{IP}(X,k)$, consider the set-valued function $\sigma\colon[n]\to\Gamma$ that satisfies $i\in\sigma(i)$ for every $i\in[n]$, and for each $j\in[s]$, denote the random variable
\[
E_j
:=\bigg\|x_{i_j}-\frac{1}{|\sigma(i_j)|}\sum_{i\in\sigma(i_j)}x_{i}\bigg\|^2.
\]
The random variables $\{E_j\}_{j\in[s]}$ have a common distribution with expectation $\operatorname{IP}(X,k)$, though they are dependent if the indices $\{i_j\}_{j\in[s]}$ are drawn without replacement.
Considering the random function $f\colon[s]\to[n]$ defined by $f(j):=i_j$, we have
\[
\operatorname{IP}(Y,k)
\leq \frac{1}{s}\sum_{S\in\Gamma}\sum_{j\in f^{-1}(S)}\bigg\|x_{i_j}-\frac{1}{|f^{-1}(S)|}\sum_{j'\in f^{-1}(S)}x_{i_{j'}}\bigg\|^2
\leq\frac{1}{s}\sum_{j\in[s]}E_j,
\]
where the second inequality follows from the fact that the centroid of a tuple of points minimizes the sum of squared distances from those points.
The result follows by taking the expectation of both sides.
\end{proof}

By Lemma~\ref{lem.ip sketch bound}, we can lower bound the $k$-means value $\operatorname{IP}(X,k)$ by estimating the expected Peng--Wei value $\mathbb{E}\operatorname{SDP}(Y,k)$ of a random sketch $Y$.
To this end, given an error rate $\epsilon>0$ and $\ell\in\mathbb{N}$ draws of the random sketch, we may leverage concentration inequalities to compute a random variable $B$ that is smaller than $\mathbb{E}\operatorname{SDP}(Y,k)$ with probability $\geq1-\epsilon$.
We provide two such random variables in Algorithms~\ref{alg.hoeffding monte carlo} and~\ref{alg.markov monte carlo}.
Notably, the random variable $B_H$ computed by Algorithm~\ref{alg.hoeffding monte carlo} is consistent in the sense that $B_H$ converges in probability to $\mathbb{E}\operatorname{SDP}(Y,k)$ as $\ell\to\infty$.
Meanwhile, the random variable $B_M$ computed by Algorithm~\ref{alg.markov monte carlo} is \textit{not} consistent, but as we show in Section~\ref{sec:numerics}, it empirically outperforms $B_H$ when $\ell$ is small.
Theorems~\ref{thm.hoefding monte carlo}(a) and~\ref{thm.markov monte carlo}(a) give that these random variables indeed act as lower bounds with probability $\geq1-\epsilon$.

Of course, we would like these lower bounds to be as sharp as possible.
To evaluate how close they are to the desired $k$-means value, we consider the \textit{Gaussian mixture model}.
Given means $\mu_1,\ldots,\mu_k\in\mathbb{R}^d$, covariances $\Sigma_1,\ldots,\Sigma_k\in\mathbb{R}^{d\times d}$, and a probability distribution $p$ over the index set $[k]$, the random vector $x\sim\mathsf{GMM}(\{(\mu_t,\Sigma_t,p(t))\}_{t\in[k]})$ is obtained by first drawing $T$ from $[k]$ with distribution $p$, and then drawing $x$ from the Gaussian $\mathsf{N}(\mu_T,\Sigma_T)$.
The Gaussian mixture model can be thought of as a ``noisy'' version of the stochastic ball model.
By part (b) of the following results, our random lower bounds are nearly sharp provided $d\gg k$, even when there is no separation between the Gaussian means.
See Section~\ref{sec.proof thm 5 and 6} for the proofs of these parts.

\begin{algorithm}[t]
\SetAlgoLined
\KwData{Points $X=\{x_i\}_{i\in[n]}\in(\mathbb{R}^d)^n$, number of clusters $k$}
\KwResult{Indices $\{i_j\}_{j\in[k]}\in[n]^k$ of well-separated points}
Put $i_1:=1$, and iteratively select $i_{t+1}\in\arg\max_{i\in[n]}\min_{j\in[t]}\|x_i-x_{i_j}\|$
\caption{Deterministic $k$-means$++$ initialization
 \label{alg.deterministic k-means++}}
\end{algorithm}

\begin{algorithm}[t]
\SetAlgoLined
\KwData{Points $X=\{x_i\}_{i\in[n]}\in(\mathbb{R}^d)^n$, number of clusters $k$, sketch size $s$, number of trials $\ell$, error rate $\epsilon$}
\KwResult{Random variable $B_H$ such that $\operatorname{IP}(X,k)\geq B_H$ with probability $\geq 1-\epsilon$}
Run deterministic $k$-means$++$ initialization and put $b:=\max_{i\in[n]}\min_{j\in[k]}\|x_i-x_{i_j}\|^2$\\
Draw $\{Y_i\}_{i\in[\ell]}$ independently at random, with each $Y_i$ denoting $s$ points drawn uniformly from $X$ with replacement\\
Output $B_H:=\frac{1}{\ell}\sum_{i\in[\ell]}\operatorname{SDP}(Y_i,k)-(\frac{b^2}{2\ell}\log(\frac{1}{\epsilon}))^{1/2}$
\caption{Hoeffding Monte Carlo $k$-means lower bound
 \label{alg.hoeffding monte carlo}}
\end{algorithm}

\begin{algorithm}[t]
\SetAlgoLined
\KwData{Points $X=\{x_i\}_{i\in[n]}\in(\mathbb{R}^d)^n$, number of clusters $k$, sketch size $s$, number of trials $\ell$, error rate $\epsilon$}
\KwResult{Random variable $B_M$ such that $\operatorname{IP}(X,k)\geq B_M$ with probability $\geq 1-\epsilon$}
Draw $\{Y_i\}_{i\in[\ell]}$ independently at random, with each $Y_i$ denoting $s$ points drawn uniformly from $X$ without replacement\\
Output $B_M:=\epsilon^{1/\ell}\min_{i\in[\ell]}\operatorname{SDP}(Y_i,k)$
\caption{Markov Monte Carlo $k$-means lower bound
 \label{alg.markov monte carlo}}
\end{algorithm}

\begin{theorem}[Performance guarantee for Algorithm~\ref{alg.hoeffding monte carlo}]\
\label{thm.hoefding monte carlo}
\begin{itemize}
\item[(a)]
Consider any $X:=\{x_i\}_{i\in[n]}$ in $\mathbb{R}^d$, any $k,s,\ell\in\mathbb{N}$, and $\epsilon>0$, and compute the random variable $B_H$ in Algorithm~\ref{alg.hoeffding monte carlo}.
Then
\[
\operatorname{IP}(X,k)
\geq B_H
\]
with probability $\geq1-\epsilon$.
(Here, the probability is on Algorithm~\ref{alg.hoeffding monte carlo}.)
\item[(b)]
Consider any $\mu_1,\ldots,\mu_k\in\mathbb{R}^d$, draw the points $X:=\{x_i\}_{i\in[n]}$ independently with distribution $\mathsf{GMM}(\{(\mu_t,I_d,\frac{1}{k})\}_{t\in[k]})$, take any $s,\ell\in\mathbb{N}$, and $\epsilon>0$, and compute the random variable $B_H$ in Algorithm~\ref{alg.hoeffding monte carlo}.
Then
\[
B_H\geq\frac{d-6k-2}{d+1}\cdot \operatorname{IP}(X,k)
\]
with probability $\geq 1-\frac{1}{n}-e^{-\Omega(n/(d+\log n)^2)}-\epsilon$ provided
\[
s\geq15d\log d,
\qquad
\ell\geq128(d+3\log n)^2\log(1/\epsilon).
\]
(Here, the probability is on both $X$ and Algorithm~\ref{alg.hoeffding monte carlo}.)
\end{itemize}
\end{theorem}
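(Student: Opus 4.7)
For part (a), my plan is to establish the deterministic bound $\operatorname{SDP}(Y_i, k) \in [0, b]$ and then apply a one-sided Hoeffding inequality. The upper bound $b$ is immediate: the partition of $Y_i$ that sends each point to the nearest deterministic $k$-means$++$ seed $x_{i_j}$ from Algorithm~\ref{alg.deterministic k-means++} is feasible, and using those seeds (in place of actual centroids) as cluster representatives gives a cost of at most $b$ per point, so $\operatorname{SDP}(Y_i, k) \leq \operatorname{IP}(Y_i, k) \leq b$. The $\operatorname{SDP}$ values are clearly nonnegative, so the iid random variables $\{\operatorname{SDP}(Y_i, k)\}_{i \in [\ell]}$ lie in $[0, b]$, and Hoeffding's inequality gives $\frac{1}{\ell}\sum_i \operatorname{SDP}(Y_i, k) \leq \mathbb{E}\operatorname{SDP}(Y, k) + \sqrt{b^2 \log(1/\epsilon)/(2\ell)}$ with probability $\geq 1 - \epsilon$. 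Combined with Lemma~\ref{lem.ip sketch bound}'s bound $\mathbb{E}\operatorname{SDP}(Y, k) \leq \operatorname{IP}(X, k)$, the definition of $B_H$ yields $B_H \leq \operatorname{IP}(X, k)$ with probability $\geq 1 - \epsilon$.

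For part (b), I plan to combine four ingredients. First, I control $b$ under the GMM: chi-squared tail bounds applied with a union bound give $\|x_i - \mu_{T_i}\|^2 \leq d + O(\log n)$ for every $i$ with probability $\geq 1 - 1/n$, and then a pigeonhole argument applied to any $k+1$ potential farthest-point witnesses (two of which must share a cluster, forcing their distance into the chi-squared envelope) yields $b \leq 4(d + O(\log n))$. Second, I upper bound $\operatorname{IP}(X, k) \leq d + o(1)$ by feasibility of the true-label partition together with chi-squared concentration of the within-cluster costs. Third, I lower bound $\mathbb{E}[\operatorname{SDP}(Y, k)\mid X]$ via the dual spectral inequality $\operatorname{SDP}(Y, k) \geq \frac{1}{s}\|Y_c\|_F^2 - \frac{k-1}{s}\|Y_c\|_{2 \to 2}^2$, where $Y_c := Y - \bar y 1^\top$; this comes from setting the entrywise-positivity dual multiplier to zero and writing $Z = uu^\top + \tilde Z$ with $u = 1/\sqrt{s}$ and $\operatorname{tr}(\tilde Z) = k-1$. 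Computing the conditional expectation over the sketch and controlling the operator norm by matrix concentration on the sample covariance of $X$ (this is where the $e^{-\Omega(n/(d+\log n)^2)}$ failure probability enters), with $s \geq 15 d \log d$ controlling the sketch-induced fluctuation, yields $\mathbb{E}[\operatorname{SDP}(Y, k)\mid X] \geq d - O(k)$. Fourth, a one-sided Hoeffding argument as in part (a) gives $\bar S \geq \mathbb{E}[\operatorname{SDP}(Y, k) \mid X] - \sqrt{b^2 \log(1/\epsilon)/(2\ell)}$ with probability $\geq 1 - \epsilon$, and the hypothesis $\ell \geq 128(d + 3\log n)^2 \log(1/\epsilon)$ combined with $b = O(d + \log n)$ makes this deviation term $O(1)$. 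Rearranging the resulting additive bound into the multiplicative form relative to $\operatorname{IP}(X, k) \approx d$ yields the claimed ratio.

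The main obstacle is extracting a uniform lower bound on $\mathbb{E}[\operatorname{SDP}(Y, k)\mid X]$ that holds for arbitrary mean configurations $\{\mu_t\}$. The naive spectral bound produces approximately $d + \operatorname{tr}(M_\mu) - (k-1)(1 + \lambda_{\max}(M_\mu))$, where $M_\mu := \frac{1}{k}\sum_t (\mu_t - \bar\mu)(\mu_t - \bar\mu)^\top$, and this degrades precisely when $M_\mu$ has a single dominant eigenvalue (e.g., when the means align along one axis). In that separated regime, however, Proposition~\ref{prop.prox} forces tightness of the SDP, so that $\operatorname{SDP}(Y, k) \approx \operatorname{IP}(Y, k) \approx d$, and the desired bound is recovered. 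Reconciling these two regimes, either by a case analysis on the spectrum of $M_\mu$ or by a refined dual certificate that cancels the mean-aligned contribution to $\|Y_c\|_{2 \to 2}^2$ against the corresponding contribution to $\|Y_c\|_F^2$, is the technical heart of the proof. The specific constants in $\frac{d - 6k - 2}{d + 1}$ and the threshold $s \geq 15 d \log d$ suggest that carefully tracked matrix concentration inequalities (rather than a case split) are intended to carry the argument.
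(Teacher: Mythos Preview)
Your plan for part~(a) is correct and matches the paper's proof exactly.

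For part~(b), your four-ingredient outline is right, and you have correctly diagnosed the obstacle: the centered spectral bound $\operatorname{SDP}(Y,k)\geq\frac{1}{s}\|Y_c\|_F^2-\frac{k-1}{s}\|Y_c\|_{2\to2}^2$ with $Y_c=Y-\bar y\,1^\top$ is \emph{not} uniform in the mean configuration, because $\|Y_c\|_{2\to2}^2$ picks up a term of order $\lambda_{\max}(M_\mu)$ that is not offset by a matching gain in $\|Y_c\|_F^2$. Your proposed remedies (case analysis via Proposition~\ref{prop.prox} in the separated regime, or an unspecified refined dual certificate) are not what the paper does, and the case analysis in particular would be delicate: the proximity condition requires strong separation, whereas the spectral bound already fails once a single eigenvalue of $M_\mu$ is moderately large.

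The paper's resolution is a single clean stroke that you are circling but have not landed on: instead of centering by the empirical mean, project by the orthogonal projection $P$ onto $(\operatorname{span}\{\mu_t\}_{t\in[k]})^\perp$. The general inequality (Lemma~\ref{lem.spectral lower bound on sdp}) is
\[
\operatorname{SDP}(Y,k)\;\geq\;\frac{1}{s}\Big(\|PY\|_F^2-k\|PY\|_{2\to2}^2\Big),
\]
valid for \emph{any} orthogonal projection $P$. With this choice of $P$ one has $PY=PH$ where $H$ is the pure-noise matrix with columns $g_{i_j}$, so the means disappear entirely from the lower bound. Then $\|PH\|_F^2$ is chi-squared with $(d-k')s$ degrees of freedom (losing at most $k$ dimensions), and the remaining work is to show $\mathbb{E}_G\|H\|_{2\to2}^2\leq 5s$ when $s\geq 15d\log d$. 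That last step (Lemma~\ref{lem.spectral norm of sample with replacement}) is itself nontrivial because the columns of $H$ are sampled \emph{with replacement} from $G$ and hence are not Gaussian conditionally on $G$; the paper handles it by thresholding column norms, applying matrix Chernoff to the truncated part, and controlling the tail contribution via the Dvoretzky--Kiefer--Wolfowitz inequality---this is the true source of the $e^{-\Omega(n/(d+\log n)^2)}$ failure probability, not matrix concentration on the sample covariance of $X$ as you suggest. Your bound on $b$ via the $2$-approximation property of deterministic $k$-means++ and chi-squared tails is correct and matches the paper.
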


\begin{proof}[Proof of Theorem~\ref{thm.hoefding monte carlo}(a)]
Take $\{x_{i_j}\}_{j\in[k]}$ from Algorithm~\ref{alg.deterministic k-means++}, consider any tuple $\{i'_t\}_{t\in[s]}$ of indices in $[n]$, and put $Y:=\{x_{i'_t}\}_{t\in[s]}$.
Then
\begin{align*}
\operatorname{SDP}(Y,k)
&\leq\operatorname{IP}(Y,k)
=\min_{\mu_1,\ldots,\mu_k\in\mathbb{R}^d}\frac{1}{s}\sum_{t\in[s]}\min_{j\in[k]}\|x_{i'_t}-\mu_j\|^2\\
&\leq\frac{1}{s}\sum_{t\in[s]}\min_{j\in[k]}\|x_{i'_t}-x_{i_j}\|^2
\leq\max_{t\in[s]}\min_{j\in[k]}\|x_{i'_t}-x_{i_j}\|^2
\leq\max_{i\in[n]}\min_{j\in[k]}\|x_i-x_{i_j}\|^2.
\end{align*}
It follows that $\operatorname{SDP}(Y_i,k)\leq b$ almost surely for each $i\in[\ell]$.
The result then follows from Lemma~\ref{lem.ip sketch bound} and Hoeffding's inequality:
\[
\mathbb{P}\{B_H>\operatorname{IP}(X,k)\}
\leq \mathbb{P}\bigg\{\frac{1}{\ell}\sum_{i\in[\ell]}\operatorname{SDP}(Y_i,k)-\Big(\tfrac{b^2}{2\ell}\log(\tfrac{1}{\epsilon})\Big)^{1/2}>\mathbb{E}\operatorname{SDP}(Y_1,k)\bigg\}
\leq\epsilon.
\qedhere
\]
\end{proof}

\begin{theorem}[Performance guarantee for Algorithm~\ref{alg.markov monte carlo}]\
\label{thm.markov monte carlo}
\begin{itemize}
\item[(a)]
Consider any $X:=\{x_i\}_{i\in[n]}$ in $\mathbb{R}^d$, any $k,s,\ell\in\mathbb{N}$, and $\epsilon>0$, and compute the random variable $B_M$ in Algorithm~\ref{alg.markov monte carlo}.
Then
\[
\operatorname{IP}(X,k)
\geq B_M
\]
with probability $\geq1-\epsilon$.
(Here, the probability is on Algorithm~\ref{alg.markov monte carlo}.)
\item[(b)]
Consider any $\mu_1,\ldots,\mu_k\in\mathbb{R}^d$, draw the points $X:=\{x_i\}_{i\in[n]}$ independently with distribution $\mathsf{GMM}(\{(\mu_t,I_d,\frac{1}{k})\}_{t\in[k]})$, take any $s,\ell\in\mathbb{N}$, and $\epsilon>0$, and compute the random variable $B_M$ in Algorithm~\ref{alg.markov monte carlo}.
Then
\[
B_M\geq\frac{d-3k-2}{d+1}\cdot \operatorname{IP}(X,k)
\]
with probability
$\geq 1-e^{-s/(8d)}-2e^{-s/54}-e^{-n/(16d)}$ provided
\[
s\geq 54d\log\ell,
\qquad
\ell\geq d\log(1/\epsilon),
\qquad
\ell\geq 3.
\]
(Here, the probability is on both $X$ and Algorithm~\ref{alg.markov monte carlo}.)
\end{itemize}
\end{theorem}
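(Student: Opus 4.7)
My plan is to apply Markov's inequality and exploit independence of the sketches. Lemma~\ref{lem.ip sketch bound} gives $\mathbb{E}\operatorname{SDP}(Y_i,k)\leq\operatorname{IP}(X,k)$, and since $\operatorname{SDP}(Y_i,k)\geq 0$, Markov yields $\mathbb{P}\{\operatorname{SDP}(Y_i,k)>t\}\leq\operatorname{IP}(X,k)/t$ for every $t>0$. The $\ell$ sketches are drawn independently, so choosing $t=\epsilon^{-1/\ell}\operatorname{IP}(X,k)$ and unfolding the definition of $B_M$ gives
\[
\mathbb{P}\{B_M>\operatorname{IP}(X,k)\}
=\prod_{i\in[\ell]}\mathbb{P}\bigl\{\operatorname{SDP}(Y_i,k)>\epsilon^{-1/\ell}\operatorname{IP}(X,k)\bigr\}
\leq(\epsilon^{1/\ell})^\ell=\epsilon.
\]

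\textbf{Part (b).} The strategy is to show that each individual $\operatorname{SDP}(Y_i,k)$ is already close to $\operatorname{IP}(X,k)$ under the GMM assumption, so that the Markov multiplier $\epsilon^{1/\ell}$---which under the hypothesis $\ell\geq d\log(1/\epsilon)$ satisfies $\epsilon^{1/\ell}\geq e^{-1/d}\geq(d-1)/d$---only degrades the bound by roughly a factor of $(d-1)/d$. I would organize the proof around three ingredients. First, a spectral rewrite of the Peng--Wei value: writing $\tilde G$ for the Gram matrix of $Y$ after centering about its empirical mean, the constraint $Z\mathbf{1}=\mathbf{1}$ together with a short calculation gives
\[
\operatorname{SDP}(Y,k)=\frac{1}{s}\operatorname{tr}(\tilde G)-\frac{1}{s}\max_{Z\in\mathcal{Z}(s,k)}\operatorname{tr}(\tilde G Z),
\]
and decomposing any feasible $Z$ as $\tfrac{1}{s}\mathbf{1}\mathbf{1}^\top+Q$ with $Q\succeq 0$, $Q\mathbf{1}=0$, $\operatorname{tr}(Q)=k-1$, combined with $\tilde G\mathbf{1}=0$, bounds the maximum by $(k-1)\lambda_1(\tilde G)$. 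Second, nonasymptotic concentration for the sketch: since each sketch is unconditionally iid from the GMM (sampling without replacement from an iid sequence is again iid), standard covariance concentration gives $\operatorname{tr}(\tilde G)/s\geq d-O(1)$ with probability $\geq 1-e^{-s/(8d)}$ and $\lambda_1(\tilde G)/s\leq 1+O(\sqrt{d/s})$ with high probability, and the hypothesis $s\geq 54d\log\ell$ lets a union bound over the $\ell$ sketches be absorbed into the exponent, producing the $2e^{-s/54}$ term. Third, an upper bound $\operatorname{IP}(X,k)\leq d(1+o(1))$ obtained by evaluating the $k$-means objective on the planted partition of $X$ and invoking $\chi^2$-concentration for the $n$ Gaussian residuals, with probability $\geq 1-e^{-n/(16d)}$. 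Combining the three ingredients and the lower bound $\epsilon^{1/\ell}\geq(d-1)/d$ then yields the advertised ratio $(d-3k-2)/(d+1)$ after tracking constants.

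\textbf{Main obstacle.} The principal technical hurdle is the spectral concentration in the second step. Because $s=O(d\log\ell)$ is only logarithmically larger than the ambient dimension $d$, Marchenko--Pastur asymptotics do not apply and sharp nonasymptotic bounds on $\lambda_1(\tilde G)/s$ are required to match the coefficient on $k$ in the numerator of the stated ratio. A further subtlety is that for arbitrary $\mu_1,\dots,\mu_k$ the population covariance of a single GMM sample inherits a rank-$k$ contribution from the spread of the cluster means, so the coarse bound $(k-1)\lambda_1(\tilde G)$ may need to be refined (for instance to a sum $\sum_{i=1}^{k-1}\lambda_i(\tilde G)$ of top eigenvalues) to avoid losing too much when the $\mu_t$ are far apart; tracking this refinement is where the precise constants $3k+2$ and $d+1$ arise.
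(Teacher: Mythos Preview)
Your Part~(a) is correct and matches the paper's argument: Markov on each $\operatorname{SDP}(Y_i,k)$ combined with independence over the $\ell$ sketches.

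For Part~(b), the overall architecture (upper bound on $\operatorname{IP}(X,k)$ via the planted partition, lower bound on each $\operatorname{SDP}(Y_i,k)$, and the estimate $\epsilon^{1/\ell}\ge 1-\tfrac{1}{d}$) is the same as the paper's. The gap is exactly where you flag it: the claim $\lambda_1(\tilde G)/s\le 1+O(\sqrt{d/s})$ is false for arbitrary means. For instance, with $k=3$, $\mu_1=Re_1$, $\mu_2=\mu_3=0$, one has $\operatorname{tr}(\tilde G)/s\approx d+\tfrac{2R^2}{9}$ but $(k-1)\lambda_1(\tilde G)/s\approx 2(1+\tfrac{2R^2}{9})$, so your lower bound on $\operatorname{SDP}(Y,k)$ tends to $-\infty$ as $R\to\infty$. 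Your suggested refinement to $\sum_{i\le k-1}\lambda_i(\tilde G)$ does repair this particular example, but you would still have to control the top $k-1$ empirical eigenvalues uniformly in the means, and it is not clear this yields the constant $3k+2$.

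The paper bypasses the obstacle with a single device you are missing: rather than working with the full Gram matrix, it applies Lemma~\ref{lem.spectral lower bound on sdp} with $P$ equal to the orthogonal projection onto $(\operatorname{span}\{\mu_t\}_{t\in[k]})^\perp$. Since $Py_j=Pg_{i_j}$, the means disappear entirely and one is left with
\[
\operatorname{SDP}(Y,k)\;\ge\;\tfrac{1}{s}\bigl(\|PH\|_F^2-k\|H\|_{2\to2}^2\bigr),
\]
where $H$ has i.i.d.\ $\mathsf{N}(0,I_d)$ columns conditional on the sampled indices (your observation that sampling without replacement from an i.i.d.\ sequence is again i.i.d.\ is exactly what is used here). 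Now $\|PH\|_F^2$ is chi-squared with $(d-k')s$ degrees of freedom and $\|H\|_{2\to2}^2\le 2s$ with the advertised probability via standard Gaussian matrix bounds; no dependence on the $\mu_t$ remains, and the constants $d-3k-2$ and $d+1$ fall out directly.
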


\begin{proof}[Proof of Theorem~\ref{thm.markov monte carlo}(a)]
By Lemma~\ref{lem.ip sketch bound}, we have $\operatorname{IP}(X,k)\geq\mathbb{E}\operatorname{SDP}(Y_1,k)$.
It follows that
\[
\mathbb{P}\{B_M>\operatorname{IP}(X,k)\}
\leq\mathbb{P}\{B_M>\mathbb{E}\operatorname{SDP}(Y_1,k)\}
=\mathbb{P}\{\epsilon^{1/\ell}\operatorname{SDP}(Y_1,k)>\mathbb{E}\operatorname{SDP}(Y_1,k)\}^\ell,
\]
where the last step applies the fact that $\{\operatorname{SDP}(Y_i,k)\}_{i\in[k]}$ are independent with identical distribution.
Markov's inequality further bounds this upper bound by $\epsilon$.
\end{proof}

The proof of Theorem~\ref{thm.hoefding monte carlo}(b) makes use of the following approximation ratio afforded by deterministic $k$-means++ initialization (Algorithm~\ref{alg.deterministic k-means++}), which may be of independent interest.

\begin{lemma}
\label{lem.deterministic kmeans++ approximation ratio}
Given points $X:=\{x_i\}_{i\in[n]}$ in $\mathbb{R}^d$, consider the problem of minimizing the function $f\colon(\mathbb{R}^d)^k\to\mathbb{R}$ defined by
\[
f(\{z_j\}_{j\in[k]})
:=\max_{i\in[n]}\min_{j\in[k]}\|x_i-z_j\|.
\]
The output $\{i_j\}_{j\in[k]}$ of deterministic $k$-means$++$ initialization (Algorithm~\ref{alg.deterministic k-means++}) indexes an approximate solution to this optimization problem with approximation ratio $2$:
\[
f(\{x_{i_j}\}_{j\in[k]})\leq 2\inf(f).
\]
\end{lemma}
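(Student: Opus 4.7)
The plan is to recognize the deterministic $k$-means++ initialization as Gonzalez's farthest-first traversal for the $k$-center problem, and then reproduce its classical 2-approximation argument. Since $f$ is continuous and coercive in each coordinate $z_j$ (restricting to the bounded region needed to stay competitive), the infimum is attained: let $\{z_j^*\}_{j\in[k]}$ achieve $\mathrm{OPT}:=\inf(f)$, and partition $[n]$ into Voronoi cells $C_1,\ldots,C_k$ with respect to these optimal centers. By definition of $\mathrm{OPT}$, $\|x_i-z_j^*\|\le \mathrm{OPT}$ whenever $i\in C_j$. (If one prefers not to argue attainment, one can work with centers achieving $\mathrm{OPT}+\delta$ and send $\delta\to0$ at the end.)

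The two key steps are a monotonicity property and a pigeonhole. First, for $t\ge2$, let $d_t:=\min_{j<t}\|x_{i_t}-x_{i_j}\|$. Because adjoining one more center can only decrease the max--min distance, the sequence $D_t:=\max_{i\in[n]}\min_{j\le t}\|x_i-x_{i_j}\|$ is non-increasing in $t$; and since $i_{t+1}$ is chosen as an arg max of this inner expression, $D_t=d_{t+1}$ for $t<k$. Together these give
\[
d_2\ \ge\ d_3\ \ge\ \cdots\ \ge\ d_k\ \ge\ D_k\ =\ f(\{x_{i_j}\}_{j\in[k]}).
\]
Next, pick $i^*\in\arg\max_{i\in[n]}\min_{j\in[k]}\|x_i-x_{i_j}\|$, so that $\|x_{i^*}-x_{i_j}\|\ge D_k$ for every $j\in[k]$. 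Thus among the $k+1$ points $x_{i_1},\ldots,x_{i_k},x_{i^*}$, every pair is at mutual distance at least $D_k$.

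For the final step, suppose towards a contradiction that $D_k>2\,\mathrm{OPT}$. With $k+1$ points and only $k$ Voronoi cells, two of them, say $x_a$ and $x_b$, must lie in a common cell $C_j$; then by the triangle inequality $\|x_a-x_b\|\le\|x_a-z_j^*\|+\|z_j^*-x_b\|\le 2\,\mathrm{OPT}<D_k$, contradicting mutual separation. Hence $D_k\le 2\,\mathrm{OPT}$, which is the desired bound $f(\{x_{i_j}\}_{j\in[k]})\le 2\inf(f)$. The only conceptual obstacle is ensuring the pigeonhole applies to $k+1$ points rather than $k$; this is exactly why the argument extends the selected set by the worst-covered point $x_{i^*}$ before invoking the bound, and it is also why the analysis is tight and cannot be improved beyond the factor $2$.
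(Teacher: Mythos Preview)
Your proof is correct and is essentially the same classical Gonzalez 2-approximation argument that the paper uses. The only cosmetic difference is that the paper case-splits on whether $r_{k-1}\le 2r$ and, in the nontrivial case, uses pigeonhole on the $k$ selected points to obtain a bijection with the $k$ optimal balls (then bounds $r_k$ directly), whereas you append the worst-covered point $x_{i^*}$ to obtain $k+1$ mutually $D_k$-separated points and invoke pigeonhole on $k$ cells to reach a contradiction; both routes hinge on the same triangle-inequality step inside a single optimal cluster.
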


\begin{proof}
First, we establish that $f$ has a global minimizer.
Taking $z_j=0$ for every $j$ gives $f(\{z_j\}_{j\in[k]})=\max_{i\in[n]}\|x_i\|=:R$.
Let $w\in\mathbb{R}^d$ denote a point for which $\min_{i\in[n]}\|x_i-w\|=R$.
For any input $\{z_j\}_{j\in[k]}$ such that $f(\{z_j\}_{j\in[k]})\leq R$, then for each $j$, either $z_j$ is within $R$ of the nearest $x_i$, or the function value is not changed by sending $z_j$ to $w$.
Thus, we may restrict the input space to the set of $\{z_j\}_{j\in[k]}$ for which every $z_j$ is within $R$ of the nearest $x_i$.
Since this set is compact and $f$ is continuous, the extreme value theorem guarantees a global minimizer.

Let $\{z_j^\star\}_{j\in[k]}$ denote a global minimizer of $f$, and put $r:=\min(f)$.
Then every $x_i$ is within $r$ of some $z_j^\star$.
Run deterministic $k$-means$++$ initialization, and consider the quantities
\[
r_t:=\max_{i\in[n]}\min_{j\in[t]}\|x_i-x_{i_j}\|
\]
for $t\in[k]$.
Observe that $r_1\geq\cdots\geq r_{k}=f(\{x_{i_j}\}_{j\in[k]})$.
We wish to demonstrate $r_k\leq 2r$.
If $r_{k-1}\leq 2r$, then we are done.
Otherwise, we have $r_t>2r$ for every $t<k$, and so the pairwise distances between $\{x_{i_j}\}_{j\in[k]}$ are all strictly greater than $2r$.
Since each $x_{i_j}$ is within $r$ of a point in $\{z_j^\star\}_{j\in[k]}$, the triangle inequality gives that no two $x_{i_j}$'s are within $r$ of the same point in $\{z_j^\star\}_{j\in[k]}$.
Since every $x_i$ is within $r$ of some $z_j^\star$, which in turn is within $r$ of some $x_{i_j}$, the triangle inequality further gives that every $x_i$ is within $2r$ of some $x_{i_j}$.
This establishes $r_k\leq 2r$, as desired.
\end{proof}

\section{Numerical experiments}
\label{sec:numerics}

In this section, we use both synthetic and real-world data to test the performance of our high-confidence lower bounds $B_H$ (Algorithm~\ref{alg.hoeffding monte carlo}) and $B_M$ (Algorithm~\ref{alg.markov monte carlo}).

\subsection{Implementation details}

When implementing our algorithms, we discovered a few modifications that delivered substantial improvements in practice.

\textbf{Correcting the numerical dual certificate.}
SDPNAL+~\cite{SunYT:15} is a fast SDP solver that was specifically designed for SDPs with entrywise nonnegativity constraints, such as the Peng--Wei SDP.
For this reason, we used SDPNAL+ to solve our sketched SDP, but then it delivered a dual certificate that was not exactly dual feasible.
To be explicit, recall the normalized Peng--Wei SDP for a sketch $Y$ of size $s$:
\[
\text{minimize}
\qquad
\frac{1}{2s}\operatorname{tr}(D_{Y}Z)
\qquad
\text{subject to}
\qquad
Z\in\mathbb{R}^{s\times s},~\mathcal{A}(Z) = b,~Z\geq0,~Z\succeq0,
\]
where $\mathcal{A}\colon\mathbb{R}^{s\times s}\to\mathbb{R}^{s+1}$ and $b \in \mathbb{R}^{s+1}$ are defined by
\begin{align*}
\mathcal{A}(Z) := \left[
\begin{matrix}
\operatorname{tr}(Z)\\
Z1
\end{matrix}
\right],
\qquad
b := \left[
\begin{matrix}
k\\
1
\end{matrix}
\right].
\end{align*}
The corresponding dual program is
\[
\text{minimize}
\quad
\frac{1}{2s} b^\top y
\quad
\text{subject to}
\quad
y\in\mathbb{R}^{s+1},~P,S\in\mathbb{R}^{s\times s},~\mathcal{A}^*(y) + P + S = D_Y,~P\geq0,~S\succeq0.
\]
In practice, SDPNAL+ returns $(y_0,P_0,S_0)$ such that $\mathcal{A}^*(y) + P + S \neq D_Y$.
To correct this, we fix $S=S_0$ in the dual program, thereby restricting to an easy-to-solve linear program, which we then solve using CVX.
The result is a point $(y_1,P_1,S_0)$ that is feasible in the original dual program, and so weak duality implies that $\frac{1}{2s}b^\top y_1$ is a lower bound, as desired.

\textbf{Truncating the distance matrix.}
For some datasets (e.g., INTRUSION), the entries of $D_X$ might vary widely, and we observe that SDP solvers fail to converge in such cases.
This can be resolved by truncating.
Specifically, we apply the function $t\mapsto\min\{t,10^8\}$ to each entry of $D_X$ before solving the SDP.
Not only does this make the problem solvable in practice, the optimal value of the truncated problem is a lower bound on the original optimal value, and so we can use the result.

\textbf{Truncating the sketched SDP value.}
When we run Algorithm~\ref{alg.hoeffding monte carlo} in practice, we find that $b$ is often so large that $B_H$ is negative (and therefore useless).
Recall that $b$ plays the role of an almost sure upper bound on $\operatorname{SDP}(Y,k)$.
We can replicate this behavior by selecting a threshold $u>0$ and replacing $\operatorname{SDP}(Y,k)$ with the truncated version $\min\{\operatorname{SDP}(Y,k),u\}$.
This leads to the following modification:
\[
B_H=\frac{1}{\ell}\sum_{i\in[\ell]} \min \Big\{\operatorname{SDP}(Y_i,k),u\Big\}-\Big(\tfrac{u^2}{2\ell}\log(\tfrac{1}{\epsilon})\Big)^{1/2}.
\]
Following the proof of Theorem~\ref{thm.hoefding monte carlo}(a), we have that
\begin{align*}
&\mathbb{P}\{B_H>\operatorname{IP}(X,k)\} \\
&\quad\leq \mathbb{P}\bigg\{\frac{1}{\ell}\sum_{i\in[\ell]}\min\Big\{\operatorname{SDP}(Y_i,k),u\Big\}-\Big(\tfrac{u^2}{2\ell}\log(\tfrac{1}{\epsilon})\Big)^{1/2}>\mathbb{E}\min\Big\{\operatorname{SDP}(Y_1,k),u\Big\}\bigg\} 
\leq\epsilon,
\end{align*}
and so it still delivers a high-confidence lower bound.
In practice, we take $u$ to be the $k$-means value of the clustering given by $k$-means++, and we observe that the truncation usually occurs for only a few of the $\ell$ sketches.

\textbf{Other implementation details.}
When solving our SDPs with SDPNAL+, we warm start with a block diagonal primal matrix $Z$ that we obtain by running $k$-means++ over the sketched data.
Also, we solve SDPs to low precision by default, but in cases where this only takes a few iterations, we solve the SDP again to a higher precision.

\subsection{Datasets and parameters}
\label{sec.datasets and parameters}
We test our algorithms on five datasets.
\textbf{MNIST}, consisting of $60000$ points in $784$ dimensions, is the training set images of the MNIST database of handwritten digits~\cite{LeCunC:10}.
\textbf{NORM-10}, consisting of $10000$ points in $5$ dimensions, is a synthetic dataset drawn according to a mixture of $10$ Gaussians $\mathsf{GMM}(\{(\mu_t,I_5,\frac{1}{10})\}_{t\in[10]})$ with the centers $\{\mu_t\}_{t\in[10]}$ drawn uniformly in a $5$-dimensional hypercube of side length $500$.
Similarly \textbf{NORM-25} consisting of $10000$ points in $15$ dimensions, is a synthetic dataset drawn according to a mixture of $25$ Gaussians $\mathsf{GMM}(\{(\mu_t,I_{15},\frac{1}{25})\}_{t\in[25]})$ with the centers $\{\mu_t\}_{t\in[25]}$ drawn uniformly in a $15$-dimensional hypercube of side length $500$.
\textbf{CLOUD}, consisting of $1024$ points in $10$ dimensions, is Philippe Collard's cloud cover data~\cite{Collard:CLOUD} in the UC Irvine Machine Learning Repository.
\textbf{INTRUSION}, consisting of $494021$ points in $34$ dimensions, is a $10\%$ subset data of continuous features (with symbolic features removed) for network intrusion detection used in KDD Cup 1999 competition~\cite{KDD:INTRUSION}, also from the UC Irvine Machine Learning Repository.
Notice that NORM-25, CLOUD, and INTRUSION are the same datasets used in~\cite{VassilvitskiiA:06} to show the performance of $k$-means++ algorithm.

For each dataset, we run our algorithms in various settings.
First, we run our algorithms with a small number of trials, namely, $\ell = 30$.
We test $k=10$ for MNIST and $k\in\{10,25,50\}$ for each of the other four datasets.
We expect the Hoeffding lower bound $B_H$ to perform better than the Markov lower bound $B_M$ when $\ell$ is larger, so we also consider the setting where $\ell=1000$.
In this setting, we took $k=25$ for NORM-25 and $k=10$ for the other four datasets.
For all of these experiments, we use sketch size $s=300$ and error rate $\epsilon = 0.01$.

\subsection{Results}
\label{sec.experiment results}
Before running our algorithms, we run the $k$-means++ algorithm $\ell$ times on the full dataset and record the smallest $k$-means value, denoted by $\min v_i$.
(This number of trials equals the number of random sketch trials for simplicity.)
We take this $k$-means value to be truncation level $u$ in our SDP truncation technique for $B_H$.
Note that $\min v_i$ is an upper bound on the optimal $k$-means objective.

For comparison, we use \eqref{eq.kmean++ lower bound} to get high-confidence lower bounds similar to $B_H$ (with value truncation) and $B_M$: 
\[
L_H:=\frac{1}{\ell}\sum_{i\in[\ell]} \min \{L_i,u\}-\Big(\tfrac{u^2}{2\ell}\log(\tfrac{1}{\epsilon})\Big)^{1/2},
\qquad
L_M:=\epsilon^{1/\ell}\min_{i\in[\ell]}L_i,
\]
where $\ell$, $u$, and $\epsilon$ are the same as for $B_H$ and $B_M$, and each $L_i$ is an independent draw of $L$.
By the proofs of Theorems~\ref{thm.hoefding monte carlo}(a) and~\ref{thm.markov monte carlo}(a), the random variables $L_H$ and $L_M$ are indeed lower bounds on $\operatorname{IP}(X,k)$ with probability $1-\epsilon$.
In Tables~\ref{tab:l30} and~\ref{tab:1000}, we include the average of $\{L_i\}_{i\in[\ell]}$ for comparison (denoted by $\operatorname{avg}L_i$), which can be seen as a lower bound without a probability guarantee.

Our results with $\ell=30$ are in Table~\ref{tab:l30}, while our results with $\ell=1000$ are in Table~\ref{tab:1000}.
The bold numbers give the best lower bound among $L_H$, $L_M$, $B_H$, and $B_M$.
In all datasets except INTRUSION, our lower bound (the best between $B_H$ and $B_M$) is at least $10$ times better than $L_H$ and $L_M$ (and is still a significant improvement over $\operatorname{avg}L_i$).
However, our lower bound is worse than the $k$-means++ lower bound for the INTRUSION dataset.
This is because INTRUSION is quite unbalanced: most of the data is concentrated at the same point, and there are a few very distant outliers.
Indeed, when we uniformly sketch down to a small subset, most of the outliers may not be selected, and so even the optimal $k$-means value of the subset $\operatorname{IP}(Y,k)$ is much smaller than $\operatorname{IP}(X,k)$.
(This can be demonstrated by running $k$-means++ on $Y$ for INTRUSION.)
As such, the bad performance on INTRUSION data is due to sketching rather than relaxing.
As expected, $B_M$ is slightly better when $\ell$ is small, while $B_H$ slightly better when $\ell$ is large.
In addition, we observe that $L_H$ is consistently negative (as is $B_H$ for INTRUSION).

Tables~\ref{tab:l30} and~\ref{tab:1000} also display three types of runtime.
$T_{\mathrm{init}}$ is the time for $\ell$ repeated initializations of $k$-means++, which is used to get $\{L_i\}_{i\in[\ell]}$.
$T_{\mathrm{k++}}$ is the time to compute $\min v_i$ (i.e., the threshold $u$).
$T_{\mathrm{SDP}}$ is the time to compute $\ell$ randomly sketched SDPs.
Based on the definitions of these lower bounds, the runtime for $L_H$ is $T_{\mathrm{init}} + T_{\mathrm{k++}}$, the runtime for $L_M$ is  $T_{\mathrm{init}}$, the runtime for $B_H$ is $T_{\mathrm{SDP}} + T_{\mathrm{k++}}$, and the runtime for $B_M$ is $T_{\mathrm{SDP}}$.
While the SDP approach takes longer than $k$-means++ in these examples, the SDP approach usually delivers a much better lower bound.


\begin{table}[h]
\centering
\caption{High-confidence $k$-means lower bounds with $\ell=30$.\label{tab:l30}}
\medskip
\footnotesize
\begin{tabular}{|l|lll|llll|lll|}\hline
Dataset & $k$ & $\min v_i$ & $\operatorname{avg}L_i$ & \phantom{-}$L_H$ & $L_M$ & \phantom{-}$B_H$ & $B_M$ & $T_{\mathrm{init}}$ & $T_{\mathrm{k++}}$ & $T_{\mathrm{SDP}}$ \\ \hline\hline
MNIST & 10 & 3.92e1 & 1.26e0 & -9.59e0 & 1.06e0 & \phantom{-}2.56e1 & \textbf{2.96e1} & 1.71e1 & 4.18e2 & 2.47e2\\\hline
NORM-10 & 10 & 4.97e0 & 1.10e0 & -1.07e0 & 1.24e-1 & \phantom{-}3.42e0 & \textbf{3.72e0} & 2.24e-1 & 1.48e-1 & 2.92e1\\
NORM-10 & 25 & 4.05e0 & 1.02e-1 & -1.01e0 & 8.63e-2 & \phantom{-}1.43e0 & \textbf{2.11e0} & 4.00e-1 & 1.97e0 & 3.98e2\\
NORM-10 & 50 & 3.18e0 & 7.58e-2 & -8.05e-1 & 6.33e-2 & \phantom{-}5.24e-1 & \textbf{1.12e0} & 6.30e-1 & 3.71e0 & 1.93e2\\\hline
NORM-25 & 10 & 1.18e5 & 3.94e3 & -2.90e4 & 3.03e3 & \phantom{-}6.99e4 & \textbf{8.08e4} & 2.58e-1 & 1.95e-1 & 1.67e2\\
NORM-25 & 25 & 1.50e1 & 6.37e0 & -3.31e0 & 3.08e-1 & \phantom{-}9.61e0 & \textbf{1.15e1} & 4.42e-1 & 3.63e-1 & 3.44e1\\
NORM-25 & 50 & 1.41e1 & 3.04e-1 & -3.61e0 & 2.60e-1 & \phantom{-}5.23e0 & \textbf{7.48e0} & 6.88e-1 & 2.50e0 & 4.64e1\\\hline
CLOUD & 10 & 5.62e3 & 2.41e2 & -1.31e3 & 1.62e2 & \phantom{-}2.70e3 & \textbf{3.06e3} & 1.01e-1 & 1.75e-1 & 1.46e2\\
CLOUD & 25 & 1.94e3 & 6.31e1 & -4.75e2 & 4.91e1 & \phantom{-}8.24e2 & \textbf{9.43e2} & 1.54e-1 & 1.90e-1 & 3.48e2\\
CLOUD & 50 & 1.09e3 & 2.99e1 & -2.72e2 & 2.33e1 & \phantom{-}2.57e2 & \textbf{4.54e2} & 2.19e-1 & 3.67e-1 & 4.99e2\\\hline
INTRUSION & 10 & 2.36e7 & 1.00e6 & -5.55e6 & \textbf{6.03e5} & -6.43e6 & 2.93e4 & 9.46e0 & 3.76e1 & 3.43e2\\
INTRUSION & 25 & 2.19e6 & 7.64e4 & -5.31e5 & \textbf{5.31e4} & -6.03e5 & 1.67e3 & 1.86e1& 1.04e2 & 4.58e2\\
INTRUSION & 50 & 4.52e5 & 1.36e4 & -1.11e5 & \textbf{9.27e3} & -1.25e5 & 4.56e1 & 3.43e1 & 1.51e2 & 2.16e3 \\\hline
\end{tabular}
\normalsize
\end{table}

\begin{table}[h]
\centering
\caption{High-confidence $k$-means lower bounds with $\ell=1000$.\label{tab:1000}}
\medskip
\footnotesize
\begin{tabular}{|l|lll|llll|lll|}\hline
Dataset & $k$ & $\min v_i$ & $\operatorname{avg}L_i$ & \phantom{-}$L_H$ & $L_M$ & \phantom{-}$B_H$ & $B_M$ & $T_{\mathrm{init}}$ & $T_{\mathrm{k++}}$ & $T_{\mathrm{SDP}}$ \\ \hline\hline
MNIST & 10 & 3.92e1 & 1.26e0 & -6.11e-1 & 1.21e0 & \textbf{\phantom{-}3.44e1} & 3.39e1 & 5.79e2 & 1.45e4 & 8.08e3\\
NORM-10 & 10 & 5.02e0 & 4.05e-1 & -8.05e-2 & 1.45e-1 & \textbf{\phantom{-}4.59e0} & 4.18e0 & 8.81e0 & 5.02e0 & 1.14e3\\
NORM-25 & 25 & 1.49e1 & 1.73e0 & -1.84e-1 & 3.56e-1 & \textbf{\phantom{-}1.29e1} & 1.27e1 & 1.56e1 & 1.14e1 & 1.26e3\\
CLOUD & 10 & 5.62e3 & 2.31e2 & -3.89e1 & 1.72e2 & \textbf{\phantom{-}4.06e3} & 3.14e3 & 4.05e0 & 6.10e0 & 6.27e3\\
INTRUSION & 10 & 2.34e7 & 9.58e5 & -1.66e5 & \textbf{6.80e5} & -1.00e6 & 1.81e4 & 3.15e2 & 2.44e3 & 2.65e4\\\hline
\end{tabular}
\normalsize
\end{table}

\section{Discussion}
\label{sec.discussion}
In this paper, we introduced sketch-and-solve approaches to $k$-means clustering with semidefinite programming.
In particular, we exactly recover the optimal clustering with a \textit{single sketch} provided the data is well separated, and we compute a high-confidence lower bound on the $k$-means value from \textit{multiple sketches} when the data is not well separated.
For future work, one might attempt to use multiple sketches to find the optimal clustering when the data only exhibits \textit{some} separation.
Next, our high-confidence lower bounds perform poorly for unbalanced data like INTRUSION, and we suspect this stems from our uniform sampling approach.
Presumably, an \textit{importance sampling}--based alternative would perform better in such settings.
Finally, the main idea of our high-confidence lower bounds is that $\mathbb{E}\operatorname{IP}(Y,k)
\leq\operatorname{IP}(X,k)$ (see Lemma~\ref{lem.ip sketch bound}).
It would be interesting to show a similar relationship for SDP, namely, $\mathbb{E}\operatorname{SDP}(Y,k)
\leq\operatorname{SDP}(X,k)$.
While this bound holds empirically, we do not have a proof.
Such a bound might allow one to extend our sketch-and-solve approach to more general SDPs.

\section{Proof of Theorem~\ref{thm.sketch-and-solve-exact}}
\label{sec.proof thm2}

For each $S\in\Gamma$, denote $S':=S\cap W$.
To prove Theorem~\ref{thm.sketch-and-solve-exact}, we will first find sufficient conditions for the following approximations to hold:
\[
\alpha_{S'T'}\gtrapprox\alpha_{ST},
\qquad
\beta_{S'T'}\approx\beta_{ST},
\qquad
c_{S'}\approx c_S.
\]
The first two approximations combined with Proposition~\ref{prop.prox} ensure that the SDP step of Algorithm~\ref{alg.exact} produces the clustering $\Gamma'=\{S':S\in\Gamma\}$.
Meanwhile, the approximation $c_{S'}\approx c_S$ ensures that the final step of Algorithm~\ref{alg.exact} produces the desired clustering $\Gamma$. 

\begin{lemma}
\label{lem.scalar bernstein}
For $X\sim\mathsf{Binomial}(n,p)$, it holds that
\[
\mathbb{P}\{X\leq\tfrac{1}{2}pn\}\leq\operatorname{exp}(-\tfrac{3}{28}pn),
\qquad
\mathbb{P}\{X\geq\tfrac{3}{2}pn\}\leq\operatorname{exp}(-\tfrac{3}{28}pn).
\]
\end{lemma}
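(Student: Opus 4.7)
The plan is to write $X = \sum_{i=1}^n X_i$ as a sum of i.i.d.\ Bernoulli$(p)$ random variables and apply Bernstein's inequality to the centered sum $Y := X - np$. Each summand satisfies $|X_i - p| \leq 1$ almost surely and has variance $p(1-p) \leq p$, so the classical Bernstein inequality yields, for every $t \geq 0$,
\[
\mathbb{P}\{Y \geq t\} \leq \exp\left(-\frac{t^2/2}{np(1-p) + t/3}\right),
\]
and, by applying the same bound to $-Y$, the identical estimate for $\mathbb{P}\{Y \leq -t\}$.

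Setting $t = np/2$ and using $1-p \leq 1$ in the denominator, the exponent simplifies as
\[
-\frac{(np/2)^2/2}{np(1-p) + np/6}
= -\frac{np/8}{(1-p) + 1/6}
\leq -\frac{np/8}{7/6}
= -\frac{3np}{28},
\]
which matches the constant in the statement. Substituting $t = np/2$ into each of the two one-sided Bernstein bounds then delivers exactly the two claimed inequalities.

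There is no genuine obstacle here: the lemma is a one-line consequence of a textbook concentration bound, and the only thing to verify is the arithmetic $(1/8)/(7/6) = 3/28$, which is precisely what pins down the specific constant appearing in the statement. If one prefers to avoid invoking Bernstein directly, both bounds can alternatively be derived from the tight multiplicative Chernoff inequality for Binomials with $\delta = 1/2$, namely $\bigl(e^\delta/(1+\delta)^{1+\delta}\bigr)^{np}$, at the cost of slightly messier arithmetic and possibly different constants in the two tails; the Bernstein route has the advantage of producing the same constant $3/28$ on both sides in a single step.
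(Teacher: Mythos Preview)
Your proposal is correct and matches the paper's own proof essentially line for line: the paper also applies Bernstein's inequality to the centered binomial with $t=\tfrac{1}{2}pn$, bounds $np(1-p)+\tfrac{1}{3}\cdot\tfrac{1}{2}pn\leq \tfrac{7}{6}np$, and obtains the constant $\tfrac{3}{28}$ in exactly the same way. There is nothing to add.
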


\begin{proof}
This is an immediate consequence of Bernstein's inequality:
\[
\mathbb{P}\{X\leq\tfrac{1}{2}pn\}
=\mathbb{P}\{pn-X\geq\tfrac{1}{2}pn\}
\leq\operatorname{exp}\bigg(-\frac{\frac{1}{2}(\frac{1}{2}pn)^2}{np(1-p)+\frac{1}{3}(\frac{1}{2}pn)}\bigg)
\leq\operatorname{exp}(-\tfrac{3}{28}pn).
\]
The other bound follows from a similar proof.
\end{proof}

\begin{proposition}[Matrix Bernstein, see Theorem~6.6.1 in~\cite{Tropp:15}]
\label{prop.matrix bernstein}
Consider independent, random, mean-zero, real symmetric $d\times d$ matrices $\{X_i\}_{i\in[n]}$ such that
\[
\lambda_{\mathrm{max}}(X_i)\leq L ~~~\forall i\in[n],
\qquad
\qquad
\bigg\|\sum_{i\in[n]}\mathbb{E}X_i^2\bigg\|_{2\to2}\leq v.
\]
Then for every $t\geq0$, it holds that
\[
\mathbb{P}\bigg\{\lambda_{\mathrm{max}}\bigg(\sum_{i\in[n]}X_i\bigg)\geq t\bigg\}
\leq d\cdot\operatorname{exp}\Big(-\tfrac{\frac{1}{2}t^2}{v+\frac{1}{3}Lt}\Big)
\leq d\cdot\operatorname{exp}(-\tfrac{1}{4}\min\{\tfrac{t^2}{v},\tfrac{3t}{L}\}).
\]
\end{proposition}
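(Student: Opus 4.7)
The plan is to prove this matrix concentration bound by the \emph{matrix Laplace transform method} of Ahlswede--Winter, refined by Tropp. Set $S := \sum_{i\in[n]} X_i$. The starting point is an operator-Markov step: for any $\theta > 0$,
\[
\mathbb{P}\{\lambda_{\max}(S) \geq t\}
= \mathbb{P}\{e^{\theta \lambda_{\max}(S)} \geq e^{\theta t}\}
\leq e^{-\theta t}\, \mathbb{E}\operatorname{tr}\exp(\theta S),
\]
since $e^{\theta \lambda_{\max}(S)} = \lambda_{\max}(e^{\theta S}) \leq \operatorname{tr}(e^{\theta S})$ and the scalar Markov inequality applies.

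The second ingredient reduces the joint matrix moment generating function to a sum. Invoking a corollary of Lieb's concavity theorem (the functional $A \mapsto \operatorname{tr}\exp(H + \log A)$ is concave on the positive cone), one obtains, by conditioning term by term and Jensen's inequality,
\[
\mathbb{E}\operatorname{tr}\exp(\theta S) \leq \operatorname{tr}\exp\Big(\sum_{i\in[n]} \log \mathbb{E}\exp(\theta X_i)\Big).
\]
For each summand I would use $\mathbb{E}X_i = 0$ and $\lambda_{\max}(X_i) \leq L$ to derive the scalar-reduction bound
\[
\log \mathbb{E}\exp(\theta X_i) \preceq \frac{\theta^2/2}{1 - \theta L/3}\, \mathbb{E}X_i^2
\qquad (0 < \theta < 3/L),
\]
which follows by applying the scalar estimate $e^x - 1 - x \leq \tfrac{x^2/2}{1 - x/3}$ eigenvalue-wise to $\theta X_i$ and then using $\log(I + Y) \preceq Y$. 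By the operator monotonicity of $A \mapsto \operatorname{tr} \exp(A)$ applied to this sum, and the hypothesis $\|\sum_i \mathbb{E}X_i^2\|_{2 \to 2} \leq v$, together with $\operatorname{tr}\exp(A) \leq d \cdot e^{\lambda_{\max}(A)}$, one arrives at
\[
\mathbb{P}\{\lambda_{\max}(S) \geq t\}
\leq d \cdot \exp\Big(-\theta t + \frac{\theta^2 v / 2}{1 - \theta L/3}\Big).
\]

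The first (sharper) stated inequality drops out by optimizing at $\theta^\star = t/(v + Lt/3) \in (0, 3/L)$, which yields an exponent of $-\frac{t^2/2}{v + Lt/3}$. The coarser form follows from the elementary observation that $v + Lt/3 \leq 2\max\{v, Lt/3\}$, so
\[
\frac{t^2/2}{v + Lt/3} \geq \frac{1}{4}\min\Big\{\frac{t^2}{v},\, \frac{3t}{L}\Big\}.
\]
The main obstacle is the subadditivity step for the matrix cumulant generating function: unlike the scalar case, independence does not give a free multiplicative split of matrix exponentials, and one genuinely needs Lieb's concavity theorem (itself a deep result with no elementary proof) as a substitute. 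Once this replacement is in place, the remainder of the argument proceeds in close analogy with the scalar Bernstein derivation.
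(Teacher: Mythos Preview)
Your outline is correct and follows the standard matrix Laplace transform argument from Tropp's monograph (the reference cited in the proposition). Note, however, that the paper itself does not prove this statement at all: it is quoted verbatim as Theorem~6.6.1 of~\cite{Tropp:15} and used as a black box, so there is no ``paper's own proof'' to compare against. Your derivation is precisely the one given in the cited source, including the Lieb-based subadditivity step, the scalar Bernstein moment bound applied spectrally, and the choice $\theta^\star = t/(v+Lt/3)$; the final simplification via $v+Lt/3\le 2\max\{v,Lt/3\}$ is also standard.
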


\begin{lemma}
\label{lem.vector bernstein}
Given independent random vectors $\{X_i\}_{i\in[n]}$ satisfying $\mathbb{E}X_i=0$ and $\|X_i\|\leq r$ almost surely for each $i\in[n]$, put $v:=\max_{i\in[n]}\mathbb{E}\|X_i\|^2$.
Then for $t\geq0$, it holds that
\[
\mathbb{P}\bigg\{\bigg\|\sum_{i\in[n]}X_i\bigg\|\geq t\bigg\}
\leq(d+1)\operatorname{exp}\Big(-\tfrac{\frac{1}{2}t^2}{nv+\frac{1}{3}rt}\Big)
\leq (d+1)\operatorname{exp}(-\tfrac{1}{4}\min\{\tfrac{t^2}{nv},\tfrac{3t}{r}\}).
\]
\end{lemma}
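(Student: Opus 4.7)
The plan is to reduce the vector inequality to the matrix Bernstein inequality (Proposition~\ref{prop.matrix bernstein}) via the Hermitian dilation trick, which promotes each $X_i\in\mathbb{R}^d$ to a symmetric $(d+1)\times(d+1)$ matrix whose operator norm captures $\|X_i\|$.

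Concretely, I would define
\[
\tilde X_i := \begin{pmatrix} 0 & X_i \\ X_i^\top & 0 \end{pmatrix}\in\mathbb{R}^{(d+1)\times(d+1)}.
\]
These matrices are real symmetric and mean-zero. A direct computation shows that their nonzero eigenvalues are $\pm\|X_i\|$, so $\lambda_{\max}(\tilde X_i)=\|X_i\|\leq r$ almost surely, giving the Bernstein parameter $L=r$. The same dilation identity applied to the sum yields $\lambda_{\max}\bigl(\sum_i \tilde X_i\bigr)=\bigl\|\sum_i X_i\bigr\|$, which is exactly the quantity we want to control.

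Next I would bound the matrix variance. A block computation gives
\[
\tilde X_i^{\,2}=\begin{pmatrix} X_iX_i^\top & 0 \\ 0 & \|X_i\|^2\end{pmatrix},
\]
so $\bigl\|\mathbb{E}\tilde X_i^{\,2}\bigr\|_{2\to 2}=\max\bigl(\|\mathbb{E}[X_iX_i^\top]\|_{2\to 2},\,\mathbb{E}\|X_i\|^2\bigr)=\mathbb{E}\|X_i\|^2\leq v$, using that $\|\mathbb{E}[X_iX_i^\top]\|_{2\to 2}\leq\operatorname{tr}\mathbb{E}[X_iX_i^\top]=\mathbb{E}\|X_i\|^2$. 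Summing over $i$ yields the variance proxy $\bigl\|\sum_i\mathbb{E}\tilde X_i^{\,2}\bigr\|_{2\to 2}\leq nv$.

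Finally, I would plug $L=r$, variance $nv$, and ambient dimension $d+1$ into Proposition~\ref{prop.matrix bernstein}. This immediately produces the stated first bound, and the second bound follows from the elementary inequality $\tfrac{1}{2}t^2/(a+b)\geq \tfrac{1}{4}\min(t^2/a,\,3t/L)$ when $a=nv$ and $b=\tfrac{1}{3}Lt$. There is no real obstacle here; the only minor care needed is in verifying the variance identity and noting that the dilation does not inflate the ambient dimension beyond $d+1$, which is what produces the prefactor in the final bound.
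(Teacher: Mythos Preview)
Your proposal is correct and is exactly the approach the paper takes: its entire proof reads ``Apply matrix Bernstein to the random matrices $\left[\begin{smallmatrix}0&X_i^\top\\X_i&0\end{smallmatrix}\right]$,'' and you have simply written out the details of that dilation argument. The second displayed inequality is already recorded as part of Proposition~\ref{prop.matrix bernstein}, so you do not even need to justify it separately.
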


\begin{proof}
Apply matrix Bernstein to the random matrices $\left[\begin{smallmatrix}0&X_i^\top\\X_i&0\end{smallmatrix}\right]$.
\end{proof}

\begin{lemma}
\label{lem.centroid}
Given a tuple $\{x_i\}_{i\in[n]}$ of points in $\mathbb{R}^d$, consider their centroid and radius:
\[
c:=\frac{1}{n}\sum_{i\in[n]}x_i,
\qquad
r:=\max_{i\in[n]}\|x_i-c\|.
\]
Let $\{b_i\}_{i\in[n]}$ denote independent Bernoulli random variables with success probability $p$, and consider the random variable $n':=\sum_{i\in[n]}b_i$ and the random vector
\[
c':=\left\{\begin{array}{cl}
\frac{1}{n'}\sum_{i\in[n]}b_ix_i&\text{if }n'>0\\
\text{undefined}&\text{if }n'=0.
\end{array}\right.
\]
Provided $pn\geq 16\log(\frac{d+2}{\epsilon_{\ref{lem.centroid}}})$, it holds that $\|c'-c\|^2<\frac{16r^2}{pn}\log(\frac{d+2}{\epsilon_{\ref{lem.centroid}}})$ with probability $\geq1-\epsilon_{\ref{lem.centroid}}$.
\end{lemma}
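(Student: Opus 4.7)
The plan is to bound $\|c'-c\|$ by controlling the numerator and denominator of $c'$ separately, each via a Bernstein-type concentration inequality drawn from earlier in the paper.

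First, write $y_i := x_i - c$, so that $\|y_i\| \leq r$ and $\sum_{i\in[n]} y_i = 0$. On the event $\{n' > 0\}$, one has the identity
\[
c' - c \;=\; \frac{1}{n'}\sum_{i\in[n]} b_i\, y_i,
\]
since subtracting $c$ from every $x_i$ shifts the empirical mean by $c$. The centering identity $\sum_{i\in[n]} y_i = 0$ lets us further rewrite $\sum_i b_i y_i = \sum_i (b_i - p)\, y_i$, turning the numerator into a sum of independent, mean-zero random vectors tailor-made for Lemma~\ref{lem.vector bernstein}.

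Next, I would control the two random quantities on the good event. Put $Z_i := (b_i - p)y_i$; then $\|Z_i\| \leq r$ almost surely and $\mathbb{E}\|Z_i\|^2 = p(1-p)\|y_i\|^2 \leq p r^2$. Setting $t := 2r\sqrt{pn \log((d+2)/\epsilon_{\ref{lem.centroid}})}$ in Lemma~\ref{lem.vector bernstein}, the hypothesis $pn \geq 16\log((d+2)/\epsilon_{\ref{lem.centroid}})$ ensures $t \leq 3 n v / r$, so the $t^2/(nv)$ term wins in the Bernstein $\min$, giving
\[
\mathbb{P}\Bigl\{\Bigl\|\sum_{i\in[n]} b_i y_i\Bigr\| \geq t\Bigr\} \leq (d+1)\exp\!\bigl(-\log((d+2)/\epsilon_{\ref{lem.centroid}})\bigr) \leq \tfrac{d+1}{d+2}\,\epsilon_{\ref{lem.centroid}}.
\]
In parallel, Lemma~\ref{lem.scalar bernstein} gives $n' \geq \tfrac{1}{2}pn$ except with probability $\exp(-3pn/28)$, and the same hypothesis $pn \geq 16 \log((d+2)/\epsilon_{\ref{lem.centroid}})$ (since $3 \cdot 16/28 > 1$) forces this to be at most $\epsilon_{\ref{lem.centroid}}/(d+2)$.

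Finally, combine by a union bound: off a failure set of probability at most $\epsilon_{\ref{lem.centroid}}$, we have $n' \geq \tfrac{1}{2}pn$ and $\|\sum_i b_i y_i\| < t$ simultaneously, whence
\[
\|c' - c\| \;\leq\; \frac{2}{pn}\,\Bigl\|\sum_{i\in[n]} b_i y_i\Bigr\| \;<\; \frac{2t}{pn} \;=\; \frac{4r}{\sqrt{pn}}\sqrt{\log((d+2)/\epsilon_{\ref{lem.centroid}})},
\]
and squaring gives the claimed bound. The only mildly delicate step is verifying that the Bernstein hypothesis $pn \geq 16\log((d+2)/\epsilon_{\ref{lem.centroid}})$ is strong enough both to place $t$ in the sub-Gaussian regime of Lemma~\ref{lem.vector bernstein} and to make the $n'$-control event contribute only a $1/(d+2)$-fraction of the budget; both numerology checks are straightforward given the constant $16$.
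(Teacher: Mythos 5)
Your proposal is correct and follows essentially the same route as the paper: rewrite $c'-c$ using $\sum_i y_i = 0$ to create a mean-zero sum, control the numerator via Lemma~\ref{lem.vector bernstein} and the denominator via Lemma~\ref{lem.scalar bernstein} on the event $\{n' > \tfrac{1}{2}pn\}$, and union-bound. The only cosmetic difference is that you allocate the failure probability explicitly as $\tfrac{d+1}{d+2}\epsilon_{\ref{lem.centroid}} + \tfrac{1}{d+2}\epsilon_{\ref{lem.centroid}}$, while the paper folds both tails into a single $(d+2)\exp(-pnt^2/(16r^2))$ bound under the side condition $t \leq r$; the numerics come out identically.
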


\begin{proof}
In the event $\{n'>0\}$, it holds that
\[
\|c'-c\|
=\bigg\|\frac{1}{n'}\sum_{i\in[n]}b_ix_i-c\bigg\|
=\frac{1}{n'}\bigg\|\sum_{i\in[n]}b_i(x_i-c)\bigg\|
=\frac{1}{n'}\bigg\|\sum_{i\in[n]}(b_i-p)(x_i-c)\bigg\|,
\]
where the last step applies the fact that $\sum_{i\in[n]}(x_i-c)=0$.
Then
\begin{align}
\mathbb{P}\{\|c'-c\|\geq t\}
\nonumber
&\leq \mathbb{P}(\{\|c'-c\|\geq t\}\cap\{n'>\tfrac{1}{2}pn\})+\mathbb{P}\{n'\leq\tfrac{1}{2}pn\}\\
\label{eq.centroid1}
&\leq \mathbb{P}\bigg\{\frac{2}{pn}\bigg\|\sum_{i\in[n]}(b_i-p)(x_i-c)\bigg\|\geq t\bigg\}+\mathbb{P}\{n'\leq\tfrac{1}{2}pn\}.
\end{align}
We apply Lemma~\ref{lem.vector bernstein} to the first term above and Lemma~\ref{lem.scalar bernstein} to the second term.
Specifically, put $X_i:=(b_i-p)(x_i-c)$.
Then $\mathbb{E}X_i=0$ and $\|X_i\|\leq r$ almost surely.
Furthermore,
\[
\mathbb{E}\|X_i\|^2
=\|x_i-c\|^2\cdot\mathbb{E}(b_i-p)^2
\leq pr^2,
\]
and so $v\leq pr^2$.
With this, we continue to bound \eqref{eq.centroid1}:
\begin{align}
\mathbb{P}\{\|c'-c\|\geq t\}
\nonumber
&\leq (d+1)\operatorname{exp}\Big(-\tfrac{1}{4}\min\Big\{\tfrac{(\frac{pnt}{2})^2}{npr^2},\tfrac{3(\frac{pnt}{2})}{r}\Big\}\Big)+\operatorname{exp}(-\tfrac{3}{28}pn)\\
\label{eq.important bound for later}
&\leq (d+2)\operatorname{exp}(-\tfrac{pn}{4}\min\{\tfrac{t^2}{4r^2},\tfrac{3t}{2r},\tfrac{3}{7}\})
= (d+2)\operatorname{exp}(-\tfrac{pnt^2}{16r^2}),
\end{align}
where the last step holds provided $t\leq r$.
The result follows by taking $t^2:=\frac{16r^2}{pn}\log(\frac{d+2}{\epsilon_{\ref{lem.centroid}}})$, since then our assumption $pn\geq 16\log(\frac{d+2}{\epsilon_{\ref{lem.centroid}}})$ implies $t\leq r$.
\end{proof}

Here and throughout, we denote
\[
n_{\mathrm{min}}:=\min_{S\in\Gamma}|S|,
\qquad
n_{\mathrm{max}}:=\max_{S\in\Gamma}|S|.
\]

\begin{lemma}
\label{lem.lower bound alpha}
Fix $S,T\in\Gamma$ with $S\neq T$ and suppose $pn_{\mathrm{min}}\geq16\log(\frac{2(d+2)}{\epsilon_{\ref{lem.lower bound alpha}}})$.
Then
\[
\alpha_{S'T'}>
\alpha_{ST}-(\tfrac{2r}{\Delta}+\tfrac{3}{2})\cdot 8r \cdot\sqrt{\tfrac{\log(2(d+2)/\epsilon_{\ref{lem.lower bound alpha}})}{pn_{\mathrm{min}}}}
\]
with probability $\geq1-\epsilon_{\ref{lem.lower bound alpha}}$.
\end{lemma}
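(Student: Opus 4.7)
The plan is to show two things: (i) the sampled centroids $c_{S'},c_{T'}$ are close to the population centroids $c_S,c_T$ with high probability, and (ii) the inner product defining $\alpha_{\cdot\cdot}$ is suitably Lipschitz in these centroids uniformly over the relevant indices. Combining these yields the desired perturbation inequality for $\alpha_{S'T'}$ in terms of $\alpha_{ST}$.

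For (i), I would invoke Lemma~\ref{lem.centroid} separately on the tuples $\{x_i\}_{i\in S}$ and $\{x_i\}_{i\in T}$, each with error rate $\epsilon_{\ref{lem.centroid}}:=\epsilon_{\ref{lem.lower bound alpha}}/2$. The hypothesis $pn_{\mathrm{min}}\geq 16\log(2(d+2)/\epsilon_{\ref{lem.lower bound alpha}})$, together with $|S|,|T|\geq n_{\mathrm{min}}$, validates both applications. A union bound then gives $\|c_{S'}-c_S\|\leq\delta$ and $\|c_{T'}-c_T\|\leq\delta$ simultaneously with probability $\geq 1-\epsilon_{\ref{lem.lower bound alpha}}$, where $\delta:=4r\sqrt{\log(2(d+2)/\epsilon_{\ref{lem.lower bound alpha}})/(pn_{\mathrm{min}})}$. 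Note in particular that $\delta\leq r$ under the stated hypothesis, a fact which will be used below to absorb lower-order terms.

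For (ii), abbreviate the midpoints $m:=(c_S+c_T)/2$ and $m':=(c_{S'}+c_{T'})/2$, the unit directions $u:=(c_S-c_T)/\|c_S-c_T\|$ and $u':=(c_{S'}-c_{T'})/\|c_{S'}-c_{T'}\|$, and set $f_{AB}(i):=\langle x_i-(c_A+c_B)/2,(c_A-c_B)/\|c_A-c_B\|\rangle$. Since $S'\subseteq S$,
\[
\alpha_{S'T'}=\min_{i\in S'}f_{S'T'}(i)\geq\alpha_{ST}-\sup_{i\in S}|f_{S'T'}(i)-f_{ST}(i)|.
\]
I would decompose $f_{S'T'}(i)-f_{ST}(i)=\langle x_i-m,u'-u\rangle+\langle m-m',u'\rangle$ and bound the second summand by $\|m-m'\|\leq\delta$. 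Splitting $x_i-m=(x_i-c_S)+\tfrac{1}{2}(c_S-c_T)$, the piece $\langle x_i-c_S,u'-u\rangle$ is bounded by $r\|u-u'\|$, while $\langle\tfrac{1}{2}(c_S-c_T),u'-u\rangle=-\tfrac{\|c_S-c_T\|}{4}\|u-u'\|^2$ is purely second-order because $c_S-c_T$ is parallel to $u$. The standard unit-vector perturbation bound $\|u-u'\|\leq 2\|v-v'\|/\|v\|$ with $v:=c_S-c_T$, together with $\|v\|\geq\Delta$, yields $\|u-u'\|\leq 4\delta/\Delta$.

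The main obstacle is handling the quadratic term $\tfrac{\|c_S-c_T\|}{4}\|u-u'\|^2$ independently of $\|c_S-c_T\|$, which might otherwise be large. The key observation is the sharper identity $\|u-u'\|^2\leq\|v-v'\|^2/(\|v\|\|v'\|)$, which bounds this term by $\delta^2/\|v'\|$ with no $\|v\|$ in sight. Using $\|v'\|\geq\Delta-2\delta$ and $\delta\leq r$, this second-order piece absorbs cleanly into the linear-in-$\delta$ contributions, producing $|f_{S'T'}(i)-f_{ST}(i)|\leq(2r/\Delta+3/2)\cdot 2\delta$. Substituting $2\delta=8r\sqrt{\log(2(d+2)/\epsilon_{\ref{lem.lower bound alpha}})/(pn_{\mathrm{min}})}$ and combining with the high-probability event in step (i) gives the stated bound.
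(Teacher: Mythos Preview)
Your overall plan coincides with the paper's: concentrate the sampled centroids via Lemma~\ref{lem.centroid} with $\epsilon_{\ref{lem.centroid}}=\epsilon_{\ref{lem.lower bound alpha}}/2$, and then control $\alpha_{ST}-\alpha_{S'T'}$ through the decomposition $\langle x_i-m,u'-u\rangle+\langle m-m',u'\rangle$. The gap is in how you bound $\langle x_i-m,u'-u\rangle$. By writing $x_i-m=(x_i-c_S)+\tfrac12(c_S-c_T)$ you create the quadratic piece $\tfrac{\|v\|}{4}\|u-u'\|^2$, which you then estimate by $\delta^2/\|v'\|$. But the lemma carries \emph{no} hypothesis relating $r$ and $\Delta$, so your lower bound $\|v'\|\geq\Delta-2\delta$ can be arbitrarily close to zero (or nonpositive). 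For instance, with $\Delta=2.1\delta$ and $r=\delta$ your quadratic contribution is already $10\delta$, yet the entire target $(\tfrac{2r}{\Delta}+\tfrac32)\cdot2\delta$ is only about $4.9\delta$; the claimed ``clean absorption'' therefore fails.

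The paper sidesteps this by applying Cauchy--Schwarz to $\langle x_j-m,u'-u\rangle$ \emph{without} the split, obtaining $\|x_j-m\|\,\|u-u'\|\le\bigl(r+\tfrac12\|v\|\bigr)\cdot 2(\|c_{S'}-c_S\|+\|c_{T'}-c_T\|)/\|v\|$. The point is that the factor $\|v\|$ coming from $\|x_j-m\|$ cancels the $\|v\|$ in the denominator of the unit-vector perturbation bound, so only $\|v\|=\|c_S-c_T\|\geq\Delta$ (never $\|v'\|$) appears downstairs. Adding the midpoint contribution $\tfrac12(\|c_{S'}-c_S\|+\|c_{T'}-c_T\|)$ then yields the coefficient $\tfrac{2r}{\Delta}+\tfrac32$ directly. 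Your argument is repaired simply by skipping the split.
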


\begin{proof}
Denote $m_{ST}:=\tfrac{c_S+c_T}{2}$ and $w_{ST}:=\tfrac{c_S-c_T}{\|c_S-c_T\|}$ so that $\alpha_{ST}=\min_{i\in S}\langle x_i-m_{ST},w_{ST}\rangle$, and let $j$ denote any minimizer of $\langle x_i-m_{S'T'},w_{S'T'}\rangle$ over $i\in S'$.
Then
\begin{align}
\alpha_{ST}-\alpha_{S'T'}
\nonumber
&\leq\langle x_j-m_{ST},w_{ST}\rangle-\langle x_j-m_{S'T'},w_{S'T'}\rangle\\
\nonumber
&=\langle x_j-m_{ST},w_{ST}-w_{S'T'}\rangle+\langle m_{S'T'}-m_{ST},w_{S'T'}\rangle\\
\label{eq.lhs1}
&\leq\|x_j-m_{ST}\|\|w_{S'T'}-w_{ST}\|+\|m_{S'T'}-m_{ST}\|.
\end{align}
To continue, we bound each of the terms above.
First, the triangle inequality gives
\[
\|x_j-m_{ST}\|
=\|x_j-c_S+c_S-m_{ST}\|
\leq \|x_j-c_S\|+\|c_S-m_{ST}\|
\leq r+\tfrac{1}{2}\|c_S-c_T\|,
\]
\[
\|m_{S'T'}-m_{ST}\|
=\tfrac{1}{2}\|(c_{S'}+c_{T'})-(c_{S}+c_{T})\|
\leq\tfrac{1}{2}\big(\|c_{S'}-c_S\|+\|c_{T'}-c_T\|\big).
\]
Next, we apply the triangle inequality multiple times to get
\begin{align*}
\|w_{S'T'}-w_{ST}\|
&=\|\tfrac{c_{S'}-c_{T'}}{\|c_{S'}-c_{T'}\|}-\tfrac{c_{S'}-c_{T'}}{\|c_{S}-c_{T}\|}+\tfrac{c_{S'}-c_{T'}}{\|c_{S}-c_{T}\|}-\tfrac{c_{S}-c_{T}}{\|c_{S}-c_{T}\|}\|\\
&\leq\|\tfrac{c_{S'}-c_{T'}}{\|c_{S'}-c_{T'}\|}-\tfrac{c_{S'}-c_{T'}}{\|c_{S}-c_{T}\|}\|+\|\tfrac{c_{S'}-c_{T'}}{\|c_{S}-c_{T}\|}-\tfrac{c_{S}-c_{T}}{\|c_{S}-c_{T}\|}\|\\
&=\tfrac{|\|c_{S}-c_{T}\|-\|c_{S'}-c_{T'}\||}{\|c_{S}-c_{T}\|}+\tfrac{\|c_{S'}-c_{T'}-c_{S}+c_{T}\|}{\|c_{S}-c_{T}\|}
\leq2\cdot\tfrac{\|c_{S'}-c_S\|+\|c_{T'}-c_T\|}{\|c_{S}-c_{T}\|}.
\end{align*}
With this, we continue \eqref{eq.lhs1}:
\begin{align*}
\alpha_{ST}-\alpha_{S'T'}
&\leq\big(r+\tfrac{1}{2}\|c_S-c_T\|\big)\cdot2\cdot\tfrac{\|c_{S'}-c_S\|+\|c_{T'}-c_T\|}{\|c_{S}-c_{T}\|}+\tfrac{1}{2}\big(\|c_{S'}-c_S\|+\|c_{T'}-c_T\|\big)\\
&=\big(\tfrac{2r}{\|c_S-c_T\|}+\tfrac{3}{2}\big)\big(\|c_{S'}-c_S\|+\|c_{T'}-c_T\|\big)\\
&\leq\big(\tfrac{2r}{\Delta}+\tfrac{3}{2}\big)\big(\|c_{S'}-c_S\|+\|c_{T'}-c_T\|\big).
\end{align*}
It follows that
\begin{align*}
\mathbb{P}\{\alpha_{ST}-\alpha_{S'T'}\geq t\}
&\leq\mathbb{P}\{\big(\tfrac{2r}{\Delta}+\tfrac{3}{2}\big)\big(\|c_{S'}-c_S\|+\|c_{T'}-c_T\|\big)\geq t\}\\
&\leq\mathbb{P}\{\big(\tfrac{2r}{\Delta}+\tfrac{3}{2}\big)\|c_{S'}-c_S\|\geq \tfrac{t}{2}\}+\mathbb{P}\{\big(\tfrac{2r}{\Delta}+\tfrac{3}{2}\big)\|c_{T'}-c_T\|\geq \tfrac{t}{2}\}\\
&\leq2\max_{\substack{R\in\Gamma}}\mathbb{P}\{\big(\tfrac{2r}{\Delta}+\tfrac{3}{2}\big)\|c_{R'}-c_R\|\geq \tfrac{t}{2}\}.
\end{align*}
The result then follows from Lemma~\ref{lem.centroid} by taking $\epsilon_{\ref{lem.centroid}}:=\epsilon_{\ref{lem.lower bound alpha}}/2$.
\end{proof}

\begin{lemma}
\label{lem.deviation in sum of reciprocals}
Fix $S,T\in\Gamma$ with $S\neq T$ and suppose $pn_{\mathrm{min}}\geq\frac{104}{3}(\frac{n_{\mathrm{max}}}{n_{\mathrm{min}}})^2\log(\frac{6}{\epsilon_{\ref{lem.deviation in sum of reciprocals}}})$.
Then
\[
|p(\tfrac{1}{|S'|}+\tfrac{1}{|T'|})-(\tfrac{1}{|S|}+\tfrac{1}{|T|})|
<\sqrt{\tfrac{104}{3}\tfrac{\log(6/\epsilon_{\ref{lem.deviation in sum of reciprocals}})}{pn_{\mathrm{min}}^3}}
\]
with probability $\geq1-\epsilon_{\ref{lem.deviation in sum of reciprocals}}$.
\end{lemma}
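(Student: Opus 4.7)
The plan is to mimic the proof of Lemma~\ref{lem.centroid}: apply Bernstein's inequality to the binomial random variables $|S'|$ and $|T'|$, and use Lemma~\ref{lem.scalar bernstein} to guarantee that neither sketched cluster shrinks below half its expected size. The key identity is
\[
\frac{p}{|S'|}-\frac{1}{|S|}
=\frac{p|S|-|S'|}{|S|\cdot|S'|},
\]
which reduces the problem to upper-bounding the numerator via Bernstein and lower-bounding the denominator via Lemma~\ref{lem.scalar bernstein}.

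First I would observe that $|S'|\sim\mathsf{Binomial}(|S|,p)$ and $|T'|\sim\mathsf{Binomial}(|T|,p)$ are independent, and that the hypothesis implies (with substantial slack) $p|S|\geq p n_{\mathrm{min}}\geq\tfrac{28}{3}\log(6/\epsilon_{\ref{lem.deviation in sum of reciprocals}})$; Lemma~\ref{lem.scalar bernstein} then yields $|S'|>\tfrac12 p|S|$ with probability at least $1-\epsilon_{\ref{lem.deviation in sum of reciprocals}}/6$, and analogously for $|T'|$. Next, I would apply Bernstein's inequality to the centered binomial sums $|S'|-p|S|=\sum_{i\in S}(b_i-p)$ and $|T'|-p|T|$, choosing a deviation parameter $\tau$ so that $\mathbb{P}(\bigl||S'|-p|S|\bigr|\geq\tau)\leq\epsilon_{\ref{lem.deviation in sum of reciprocals}}/3$; solving the resulting quadratic yields $\tau$ of order $\sqrt{n_{\mathrm{max}}\,p\log(6/\epsilon_{\ref{lem.deviation in sum of reciprocals}})}+\log(6/\epsilon_{\ref{lem.deviation in sum of reciprocals}})$. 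A union bound over the four bad events (two cluster-size events with failure probability $\epsilon_{\ref{lem.deviation in sum of reciprocals}}/6$ each, two Bernstein events with failure probability $\epsilon_{\ref{lem.deviation in sum of reciprocals}}/3$ each) establishes the desired $1-\epsilon_{\ref{lem.deviation in sum of reciprocals}}$ probability guarantee. On this good event, the identity above combined with $|S'|\geq\tfrac12 p|S|$ gives
\[
\Big|\frac{p}{|S'|}-\frac{1}{|S|}\Big|
=\frac{\bigl||S'|-p|S|\bigr|}{|S|\cdot|S'|}
\leq\frac{2\tau}{p|S|^2}
\leq\frac{2\tau}{p n_{\mathrm{min}}^2},
\]
and likewise for $T$; summing over $S,T$ and substituting the explicit upper bound on $\tau$ produces a dominant term together with a lower-order correction from the additive $\log(6/\epsilon_{\ref{lem.deviation in sum of reciprocals}})$ piece of $\tau$.

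The hard part will be the constant bookkeeping: the hypothesis $pn_{\mathrm{min}}\geq\tfrac{104}{3}(n_{\mathrm{max}}/n_{\mathrm{min}})^2\log(6/\epsilon_{\ref{lem.deviation in sum of reciprocals}})$ must be used to recast the bound in the clean target form $\sqrt{\tfrac{104}{3}\log(6/\epsilon_{\ref{lem.deviation in sum of reciprocals}})/(p n_{\mathrm{min}}^3)}$. The $(n_{\mathrm{max}}/n_{\mathrm{min}})^2$ factor in the hypothesis is exactly what is needed to absorb the Bernstein subgaussian term (which naturally carries an extra factor of $n_{\mathrm{max}}$ from the variance of a Binomial sum over a cluster of size up to $n_{\mathrm{max}}$) together with the lower-order additive correction into the single clean $\sqrt{\,\cdot\,}$ bound with the precise constant $\tfrac{104}{3}$. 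This final matching is routine algebra, but it is the only nontrivial step; everything else is a direct application of Bernstein together with Lemma~\ref{lem.scalar bernstein}.
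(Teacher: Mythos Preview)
Your overall strategy matches the paper's: the same identity $\tfrac{p}{|S'|}-\tfrac{1}{|S|}=\tfrac{p|S|-|S'|}{|S|\,|S'|}$, Bernstein on the binomial fluctuation, and Lemma~\ref{lem.scalar bernstein} to keep $|S'|\ge\tfrac12p|S|$. But there is a real gap in the bookkeeping, and it is not the ``routine algebra'' you describe.

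By choosing a \emph{single} deviation level $\tau$ that works for both $S$ and $T$, you are forced to take the Bernstein variance to be $n_{\mathrm{max}}\,p$, so the dominant term of your final bound is
\[
\frac{4\tau}{p\,n_{\mathrm{min}}^2}
\;\asymp\;
4\sqrt{2}\,\sqrt{\frac{n_{\mathrm{max}}\log(6/\epsilon_{\ref{lem.deviation in sum of reciprocals}})}{p\,n_{\mathrm{min}}^4}}.
\]
The ratio of this to the target $\sqrt{\tfrac{104}{3}\log(6/\epsilon_{\ref{lem.deviation in sum of reciprocals}})/(p\,n_{\mathrm{min}}^3)}$ is $\sqrt{96\,n_{\mathrm{max}}/(104\,n_{\mathrm{min}})}$, which exceeds $1$ as soon as $n_{\mathrm{max}}/n_{\mathrm{min}}>13/12$. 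This ratio is independent of $p$ and $\epsilon_{\ref{lem.deviation in sum of reciprocals}}$, so the hypothesis $p\,n_{\mathrm{min}}\ge\tfrac{104}{3}(n_{\mathrm{max}}/n_{\mathrm{min}})^2\log(6/\epsilon_{\ref{lem.deviation in sum of reciprocals}})$ cannot absorb it; your claim that the $(n_{\mathrm{max}}/n_{\mathrm{min}})^2$ factor in the hypothesis is ``exactly what is needed'' here is incorrect.

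The fix is simple but essential: do not pass to $n_{\mathrm{max}}$ in the variance. Keep the cluster-specific $|S|$ in both the Bernstein variance and the denominator $|S|\,|S'|\ge\tfrac12p|S|^2$, so the exponent in the tail bound scales like $p|S|^3t^2$; only then replace $|S|^3$ by $n_{\mathrm{min}}^3$. This is what the paper does, obtaining $\mathbb{P}\{|\tfrac{p}{|S'|}-\tfrac{1}{|S|}|\ge\tfrac{t}{2}\}\le 3\exp(-\tfrac{3}{104}p|S|^3t^2)$ directly. The hypothesis is then used for a different purpose than you suggest: it guarantees $t\le 1/n_{\mathrm{max}}\le 1/|S|$, which is exactly the condition needed to subsume both the subexponential piece of Bernstein and the Lemma~\ref{lem.scalar bernstein} failure event into the single subgaussian tail.
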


\begin{proof}
First, the triangle inequality gives
\[
|p(\tfrac{1}{|S'|}+\tfrac{1}{|T'|})-(\tfrac{1}{|S|}+\tfrac{1}{|T|})|
\leq|\tfrac{p}{|S'|}-\tfrac{1}{|S|}|+|\tfrac{p}{|T'|}-\tfrac{1}{|T|}|.
\]
As such, it suffices to bound terms of the form
\[
|\tfrac{p}{|S'|}-\tfrac{1}{|S|}|
=\tfrac{|p|S|-|S'||}{|S'||S|}
\leq\tfrac{|p|S|-|S'||}{\frac{p}{2}|S|^2},
\]
where the last step holds in the event $\{|S'|\geq\frac{p}{2}|S|\}$.
For every $t\in[0,\frac{1}{|S|}]$, Bernstein's inequality and Lemma~\ref{lem.scalar bernstein} together give
\begin{align*}
\mathbb{P}\big\{|\tfrac{p}{|S'|}-\tfrac{1}{|S|}|\geq\tfrac{t}{2}\big\}
&\leq\mathbb{P}\big\{\big||S'|-p|S|\big|\geq\tfrac{p}{4}|S|^2t\big\}+\mathbb{P}\big\{|S'|\leq\tfrac{p}{2}|S|\big\}\\
&\leq 2\operatorname{exp}\Big(-\tfrac{\frac{1}{2}(\frac{p}{4}|S|^2t)^2}{|S|p(1-p)+\frac{1}{3}(\frac{p}{4}|S|^2t)}\Big)+\operatorname{exp}(-\tfrac{3}{28}p|S|)\\
&\leq 2\operatorname{exp}\Big(-\tfrac{\frac{1}{2}(\frac{p}{4}|S|^2t)^2}{\frac{13}{12}p|S|}\Big)+\operatorname{exp}(-\tfrac{3}{28}p|S|)\\
&= 2\operatorname{exp}(-\tfrac{3}{104}p|S|^3t^2)+\operatorname{exp}(-\tfrac{3}{28}p|S|)
\leq 3\operatorname{exp}(-\tfrac{3}{104}p|S|^3t^2).
\end{align*}
Finally, we combine our estimates to obtain
\begin{align*}
\mathbb{P}\big\{|p(\tfrac{1}{|S'|}+\tfrac{1}{|T'|})-(\tfrac{1}{|S|}+\tfrac{1}{|T|})|\geq t\big\}
&\leq\mathbb{P}\big\{|\tfrac{p}{|S'|}-\tfrac{1}{|S|}|\geq\tfrac{t}{2}\big\}+\mathbb{P}\big\{|\tfrac{p}{|T'|}-\tfrac{1}{|T|}|\geq\tfrac{t}{2}\big\}\\
&\leq 3\operatorname{exp}(-\tfrac{3}{104}p|S|^3t^2)+3\operatorname{exp}(-\tfrac{3}{104}p|T|^3t^2)\\
&\leq 6\operatorname{exp}(-\tfrac{3}{104}pn_\mathrm{min}^3t^2).
\end{align*}
The result follows by taking $t:=\sqrt{\tfrac{104}{3}\tfrac{\log(6/\epsilon_{\ref{lem.deviation in sum of reciprocals}})}{pn_{\mathrm{min}}^3}}$, which is at most $\frac{1}{n_{\mathrm{max}}}$ by assumption.
\end{proof}

\begin{lemma}
\label{lem.deviation in spectral norm}
Fix $R\in\Gamma$.
Then for every $t\in[0,p|R|r^2]$, it holds that
\[
\mathbb{P}\Big\{\big|\|X_{R'}\|_{2\to2}^2-p\|X_R\|_{2\to2}^2\big|\geq t\Big\}
\leq(3d+3)\cdot\operatorname{exp}(-\tfrac{t^2}{48p|R|r^4}).
\]
\end{lemma}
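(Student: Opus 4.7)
The plan is to reduce the claim to a matrix Bernstein bound plus a small centroid correction. Write $M_R := X_R X_R^\top = \sum_{i \in R}(x_i - c_R)(x_i - c_R)^\top$, so that $\|X_R\|_{2\to 2}^2 = \|M_R\|_{2\to 2}$, and similarly $\|X_{R'}\|_{2\to 2}^2 = \|M_{R'}\|_{2\to 2}$ for $M_{R'} := \sum_{i \in R'}(x_i - c_{R'})(x_i - c_{R'})^\top$. The obstacle is that $M_{R'}$ is built around the random centroid $c_{R'}$, which spoils independence. To sidestep this, I would introduce the ``oracle'' matrix
\[
\tilde M_{R'} \;:=\; \sum_{i \in R} b_i (x_i - c_R)(x_i - c_R)^\top,
\]
where $b_i$ is the Bernoulli indicator of $\{i\in W\}$. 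A direct expansion around $c_R$ and the identity $\sum_{i\in R'}(x_i-c_R) = |R'|(c_{R'}-c_R)$ give the rank-one correction $M_{R'} = \tilde M_{R'} - |R'|(c_{R'}-c_R)(c_{R'}-c_R)^\top$, hence
\[
\bigl|\|M_{R'}\|_{2\to 2} - p\|M_R\|_{2\to 2}\bigr| \;\leq\; \|\tilde M_{R'} - pM_R\|_{2\to 2} \;+\; |R'|\,\|c_{R'}-c_R\|^2.
\]

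For the first term I would apply matrix Bernstein (Proposition~\ref{prop.matrix bernstein}) to the independent mean-zero symmetric summands $Y_i := (b_i-p)(x_i-c_R)(x_i-c_R)^\top$ for $i\in R$. These satisfy $\|Y_i\|_{2\to 2} \leq r^2$ almost surely, and since $\mathbb{E}(b_i-p)^2 = p(1-p) \leq p$, the variance proxy obeys
\[
\Bigl\|\sum_{i\in R}\mathbb{E}Y_i^2\Bigr\|_{2\to 2} \;\leq\; p\,r^2\,\|M_R\|_{2\to 2} \;\leq\; p\,|R|\,r^4.
\]
Matrix Bernstein then yields, for $t \leq p|R|r^2$, a tail of the form $2d\exp(-t^2/(c_1 p|R|r^4))$ with $c_1$ an absolute constant compatible with the $48$ in the target. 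The second term is handled by Lemma~\ref{lem.centroid} combined with the scalar Bernstein bound of Lemma~\ref{lem.scalar bernstein} to control $|R'| \leq \tfrac{3}{2}p|R|$; together these show $|R'|\|c_{R'}-c_R\|^2 \lesssim r^2\log(d/\delta)$ with high probability, which is strictly lower order than the matrix Bernstein deviation across the stated range of $t$.

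A union bound over the three events (matrix Bernstein deviation, $|R'| \leq \tfrac{3}{2}p|R|$, and $\|c_{R'}-c_R\|$ small) yields a tail of the form $(2d + (d+2) + 1)\exp(-t^2/(c\, p|R|r^4)) = (3d+3)\exp(-t^2/(48 p|R|r^4))$ after tuning constants and using the restriction $t\leq p|R|r^2$. I expect the main obstacle to be bookkeeping: verifying that the centroid-correction contribution is genuinely subdominant uniformly in $t\in[0, p|R|r^2]$ so that the clean matrix Bernstein rate survives with the stated constants, and in particular that the various $\exp(-\Omega(p|R|))$ tails from Lemmas~\ref{lem.scalar bernstein} and~\ref{lem.centroid} can be absorbed into the target $\exp(-t^2/(48p|R|r^4))$ over the admissible range of $t$.
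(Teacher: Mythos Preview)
Your proposal is correct and essentially identical to the paper's proof: the paper introduces the same oracle matrix $W_R$ (your $\tilde M_{R'}=W_RW_R^\top$), derives the same rank-one correction $X_{R'}X_{R'}^\top-W_RW_R^\top=-|R'|e_Re_R^\top$, applies matrix Bernstein to the summands $(b_i-p)(x_i-c_R)(x_i-c_R)^\top$ with the same parameters $L=r^2$ and $v\leq p|R|r^4$, and handles the centroid correction via Lemma~\ref{lem.scalar bernstein} and the bound~\eqref{eq.important bound for later} from Lemma~\ref{lem.centroid}, with the final union bound giving exactly $1+(d+2)+2d=3d+3$. Your ``bookkeeping'' concern is resolved in the paper by observing that all three exponents dominate $t^2/(48p|R|r^4)$ on the range $t\in[0,p|R|r^2]$.
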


\begin{proof}
Fix $R\in\Gamma$.
For each $i\in R$, let $b_i\sim\mathsf{Bernoulli}(p)$ indicate whether $i\in R'$, and consider the matrices
\[
X_R:=\sum_{i\in R}(x_i-c_R)e_i^\top,
\qquad
X_{R'}:=\sum_{i\in R}b_i(x_i-c_{R'})e_i^\top,
\qquad
W_R:=\sum_{i\in R}b_i(x_i-c_{R})e_i^\top.
\]
(Here and throughout, we note that $c_{R'}$ and any quantity defined in terms of $c_{R'}$ is undefined in the event that $R'$ is empty.)
We will bound the deviation of $\|X_{R'}\|_{2\to2}^2$ from $p\|X_R\|_{2\to2}^2$ by applying the triangle inequality through $\|W_R\|_{2\to2}^2$.
To facilitate this analysis, put $e_R:=c_{R'}-c_R$.
Then $X_{R'}=W_R-e_R1_{R'}^\top$ and $W_R1_{R'}=|R'|e_R$, from which it follows that
\[
X_{R'}X_{R'}^\top-W_RW_R^\top
=-|R'|e_Re_R^\top.
\]
As such, the triangle and reverse triangle inequalities together give
\begin{align*}
\big|\|X_{R'}\|_{2\to2}^2-p\|X_R\|_{2\to2}^2\big|
&\leq\big|\|X_{R'}\|_{2\to2}^2-\|W_R\|_{2\to2}^2\big|+\big|\|W_R\|_{2\to2}^2-p\|X_R\|_{2\to2}^2\big|\\
&=\big|\|X_{R'}X_{R'}^\top\|_{2\to2}-\|W_RW_R^\top\|_{2\to2}\big|+\big|\|W_RW_R^\top\|_{2\to2}-\|pX_RX_R^\top\|_{2\to2}\big|\\
&\leq\|X_{R'}X_{R'}^\top-W_RW_R^\top\|_{2\to2}+\|W_RW_R^\top-pX_RX_R^\top\|_{2\to2}\\
&=|R'|\|e_R\|^2+\|W_RW_R^\top-pX_RX_R^\top\|_{2\to2}.
\end{align*}
We will use Lemma~\ref{lem.scalar bernstein} to bound $|R'|$, Lemma~\ref{lem.centroid} to bound $\|e_R\|^2$, and Proposition~\ref{prop.matrix bernstein} to bound $\|W_RW_R^\top-pX_RX_R^\top\|_{2\to2}$.
For this third bound, we put $X_i:=(b_i-p)(x_i-c_R)(x_i-c_R)^\top$ and observe that
\[
\|X_i\|_{2\to2}
\leq\|x_i-c_R\|^2
\leq r^2
=:L,
\]
\[
\bigg\|\sum_{i\in R}\mathbb{E}X_i^2\bigg\|_{2\to2}
\leq\operatorname{tr}\bigg(\sum_{i\in R}\mathbb{E}X_i^2\bigg)
=p(1-p)\sum_{i\in R}\|x_i-c_R\|^4
\leq p|R|r^4
=:v.
\]
As such, Lemma~\ref{lem.scalar bernstein}, the bound~\eqref{eq.important bound for later} (which similarly holds with our current choice of $r$), and Proposition~\ref{prop.matrix bernstein} together give
\begin{align*}
&\mathbb{P}\Big\{\big|\|X_{R'}\|_{2\to2}^2-p\|X_R\|_{2\to2}^2\big|\geq t\Big\}\\
&\qquad\leq\mathbb{P}\Big\{|R'|\geq\tfrac{3}{2}p|R|\Big\}+\mathbb{P}\Big\{\tfrac{3}{2}p|R|\cdot\|e_R\|^2\geq\tfrac{t}{2}\Big\}+\mathbb{P}\bigg\{\Big\|\sum_{i\in R}X_i\Big\|_{2\to2}
\geq\tfrac{t}{2}\bigg\}\\
&\qquad\leq\operatorname{exp}(-\tfrac{3}{28}p|R|)+(d+2)\operatorname{exp}\Big(-\tfrac{p|R|(\frac{t}{3p|R|})}{16r^2}\Big)+2d\cdot\operatorname{exp}\Big(-\tfrac{1}{4}\min\Big\{\tfrac{(\frac{t}{2})^2}{|R|pr^4},\tfrac{3(\frac{t}{2})}{r^2}\Big\}\Big)\\
&\qquad\leq(3d+3)\cdot\operatorname{exp}\Big(-\min\Big\{\tfrac{3p|R|}{28},\tfrac{t}{48r^2},\tfrac{t^2}{16p|R|r^4},\tfrac{3t}{8r^2}\Big\}\Big)
\leq(3d+3)\cdot\operatorname{exp}(-\tfrac{t^2}{48p|R|r^4}),
\end{align*}
where the last step holds provided $t\in[0,p|R|r^2]$.
\end{proof}

\begin{lemma}
\label{lem.deviation in beta}
Fix $S,T\in\Gamma$ with $S\neq T$ and suppose $pn_{\mathrm{min}}\geq 48(\frac{n_{\mathrm{max}}}{n_{\mathrm{min}}})^2\log(\frac{9k(d+1)}{\epsilon_{\ref{lem.deviation in beta}}})$.
Then
\[
|\beta_{S'T'}-\beta_{ST}|
\leq \tfrac{1}{2}\bigg(\Big(16\sqrt{3}+\sqrt{\tfrac{104}{3}}\Big)\cdot\tfrac{k}{\sqrt{pn_{\mathrm{min}}}}\cdot\tfrac{n_{\mathrm{max}}}{n_{\mathrm{min}}}\cdot r^2\cdot\log^{1/2}(\tfrac{9k(d+1)}{\epsilon_{\ref{lem.deviation in beta}}})\bigg)^{1/2}
\]
with probability $\geq1-\epsilon_{\ref{lem.deviation in beta}}$.
\end{lemma}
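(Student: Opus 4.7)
The plan is to reduce to a squared-difference estimate via the elementary inequality $|\sqrt{x}-\sqrt{y}| \leq \sqrt{|x-y|}$, and then decompose the resulting product to isolate pieces handled by Lemmas~\ref{lem.scalar bernstein}, \ref{lem.deviation in sum of reciprocals}, and \ref{lem.deviation in spectral norm}. Write $A := \tfrac{1}{|S|}+\tfrac{1}{|T|}$, $A' := \tfrac{1}{|S'|}+\tfrac{1}{|T'|}$, $B := \sum_{R \in \Gamma} \|X_R\|_{2\to 2}^2$, and $B' := \sum_{R \in \Gamma} \|X_{R'}\|_{2\to 2}^2$, so that $\beta_{ST}^2 = \tfrac{1}{4}AB$ and $\beta_{S'T'}^2 = \tfrac{1}{4}A'B'$. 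The reverse-triangle step then gives $|\beta_{S'T'}-\beta_{ST}| \leq \tfrac{1}{2}\sqrt{|A'B'-AB|}$, which already accounts for the outer square root and the $\tfrac{1}{2}$ prefactor in the target. The heuristic justifying the whole scheme is $pA' \approx A$ and $B' \approx pB$, whose product exactly cancels the factors of $p$.

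Next, I would bound the cross-difference via
\[
|A'B'-AB| \leq A' \cdot |B' - pB| + |pA'-A| \cdot B
\]
and handle the four factors separately. For $B$, the elementary inequality $\|X_R\|_{2\to 2}^2 \leq \|X_R\|_F^2 \leq |R|r^2$ gives the deterministic bound $B \leq k n_{\max} r^2$. For $A'$, on the event $\{|S'| \geq p|S|/2\} \cap \{|T'| \geq p|T|/2\}$, whose complement is controlled by Lemma~\ref{lem.scalar bernstein}, we have $A' \leq 2A/p \leq 4/(pn_{\min})$. The deviation $|pA'-A|$ is exactly the object of Lemma~\ref{lem.deviation in sum of reciprocals}. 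Finally, for $|B'-pB| \leq \sum_R |\|X_{R'}\|_{2\to 2}^2 - p\|X_R\|_{2\to 2}^2|$, I would apply Lemma~\ref{lem.deviation in spectral norm} to each $R \in \Gamma$ with threshold $t_R^2 := 48\, p|R|r^4 \log(9k(d+1)/\epsilon_{\ref{lem.deviation in beta}})$; this choice is calibrated so that the logarithm inside matches the target, the per-cluster failure probability is $(3d+3)e^{-\log(\cdot)} = \epsilon_{\ref{lem.deviation in beta}}/(3k)$, and a union bound over the $k$ clusters yields $|B'-pB| \leq kr^2\sqrt{48\, pn_{\max}\log(\cdot)}$ with aggregate failure probability $\epsilon_{\ref{lem.deviation in beta}}/3$.

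Assembling the pieces, the first summand contributes $\tfrac{4}{pn_{\min}} \cdot kr^2\sqrt{48\, pn_{\max}\log(\cdot)} = 16\sqrt{3}\cdot \tfrac{kr^2}{\sqrt{pn_{\min}}}\cdot\sqrt{n_{\max}/n_{\min}}\cdot\log^{1/2}(\cdot)$, while the second contributes $\sqrt{104/3}\cdot\tfrac{kr^2}{\sqrt{pn_{\min}}}\cdot\tfrac{n_{\max}}{n_{\min}}\cdot\log^{1/2}(\cdot)$. Since $\sqrt{n_{\max}/n_{\min}} \leq n_{\max}/n_{\min}$, I can factor out the common quantity $\tfrac{kr^2}{\sqrt{pn_{\min}}}\cdot\tfrac{n_{\max}}{n_{\min}}\cdot\log^{1/2}(\cdot)$ and collect constants to $16\sqrt{3}+\sqrt{104/3}$, matching the bracketed expression under the outer square root. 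The hypothesis $pn_{\min} \geq 48(n_{\max}/n_{\min})^2 \log(9k(d+1)/\epsilon_{\ref{lem.deviation in beta}})$ is tuned so that, simultaneously, each $t_R \leq p|R|r^2$ (the regime where Lemma~\ref{lem.deviation in spectral norm} applies), the hypothesis of Lemma~\ref{lem.deviation in sum of reciprocals} is in force (allowing that lemma to be invoked at level $\epsilon_{\ref{lem.deviation in beta}}/3$, using $9k(d+1) \geq 18$ to absorb the logarithm), and the Bernstein event $A' \leq 4/(pn_{\min})$ holds with probability $\geq 1-\epsilon_{\ref{lem.deviation in beta}}/3$.

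The main obstacle is bookkeeping rather than mathematics: no new probabilistic estimate is needed, but the constants $16\sqrt{3}$ and $\sqrt{104/3}$ and the precise logarithm $\log(9k(d+1)/\epsilon_{\ref{lem.deviation in beta}})$ pin down the exact decomposition, the exact choice of $t_R$, and the three-way split of the failure budget $\epsilon_{\ref{lem.deviation in beta}}$. The only real judgement call is using $\sqrt{n_{\max}/n_{\min}} \leq n_{\max}/n_{\min}$ to unify the two summands at the cost of some slack in the first term, which is what allows the stated form to be so clean.
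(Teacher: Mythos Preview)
Your proposal is correct and follows essentially the same argument as the paper. The paper introduces $U:=pA'$, $V:=B'/p$, $u:=A$, $v:=B$ so that the $p$ factors are pre-absorbed, and uses a single threshold $t:=\sqrt{48pn_{\mathrm{max}}r^4\log(\cdot)}$ across all clusters rather than your per-cluster $t_R$, but the decomposition, the four supporting bounds, the three-way split of the failure budget, and the final use of $\sqrt{n_{\mathrm{max}}/n_{\mathrm{min}}}\le n_{\mathrm{max}}/n_{\mathrm{min}}$ are identical.
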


\begin{proof}
Put $U:=p(\tfrac{1}{|S'|}+\tfrac{1}{|T'|})$, $V:=\frac{1}{p}\sum_{R\in\Gamma}\|X_{R'}\|_{2\to2}^2$, $u:=\frac{1}{|S|}+\frac{1}{|T|}$, and $v:=\sum_{R\in\Gamma}\|X_{R}\|_{2\to2}^2$.
Then
\begin{align*}
|\beta_{S'T'}-\beta_{ST}|
&=|\tfrac{1}{2}(UV)^{1/2}-\tfrac{1}{2}(uv)^{1/2}|\\
&\leq\tfrac{1}{2}|UV-uv|^{1/2}
=\tfrac{1}{2}|U(V-v)+v(U-u)|^{1/2}
\leq\tfrac{1}{2}(U|V-v|+v|U-u|)^{1/2},
\end{align*}
where the first inequality follows from the fact that $x,y\geq0$ implies
\[
|x-y|^2
=x^2-2xy+y^2
\leq x^2-2\min\{x^2,y^2\}+y^2
=|x^2-y^2|,
\]
while the second inequality follows from the triangle inequality.
Thus, it suffices to bound $U$, $|V-v|$, $v$, and $|U-u|$ in a high-probability event.
First, Lemma~\ref{lem.scalar bernstein} gives that $|S'|>\frac{1}{2}p|S|$ with probability $\geq1-\exp(-\tfrac{3}{28}p|S|)$, and similarly, $|T'|>\frac{1}{2}p|T|$ with probability $\geq1-\exp(-\tfrac{3}{28}p|T|)$.
A union bound therefore gives
\[
U
=p(\tfrac{1}{|S'|}+\tfrac{1}{|T'|})
<p(\tfrac{2}{p|S|}+\tfrac{2}{p|T|})
\leq\tfrac{4}{n_{\mathrm{min}}}
\]
with probability $\geq1-2\exp(-\tfrac{3}{28}pn_{\mathrm{min}})$.
Next, we apply the triangle inequality, union bound, and Lemma~\ref{lem.deviation in spectral norm} to get
\[
|V-v|
=\bigg|\frac{1}{p}\sum_{R\in\Gamma}\|X_{R'}\|_{2\to2}^2-\sum_{R\in\Gamma}\|X_R\|_{2\to2}^2\bigg|
\leq \frac{1}{p}\sum_{R\in\Gamma}\Big|\|X_{R'}\|_{2\to2}^2-p\|X_R\|_{2\to2}^2\Big|
<\frac{kt}{p}
\]
with probability $\geq1-3k(d+1)\exp(-\frac{t^2}{48pn_\mathrm{max}r^4})$, provided $t\in[0,pn_{\mathrm{min}}r^2]$.
For $v$, we pass to the Frobenius norm:
\[
v
=\sum_{R\in\Gamma}\|X_R\|_{2\to2}^2
\leq\sum_{R\in\Gamma}\|X_R\|_F^2
\leq nr^2
\leq kn_{\mathrm{max}}r^2.
\]
Finally, we apply Lemma~\ref{lem.deviation in sum of reciprocals} to obtain
\[
|U-u|
=|p(\tfrac{1}{|S'|}+\tfrac{1}{|T'|})-(\tfrac{1}{|S|}+\tfrac{1}{|T|})|
<\sqrt{\tfrac{104}{3}\tfrac{\log(6/\epsilon_{\ref{lem.deviation in sum of reciprocals}})}{pn_{\mathrm{min}}^3}}
\]
with probability $\geq1-\epsilon_{\ref{lem.deviation in sum of reciprocals}}$, provided $pn_{\mathrm{min}}\geq\frac{104}{3}(\frac{n_{\mathrm{max}}}{n_{\mathrm{min}}})^2\log(\frac{6}{\epsilon_{\ref{lem.deviation in sum of reciprocals}}})$.
We will combine these bounds using a union bound.
To do so, we will bound the failure probabilities corresponding to $U$, $|V-v|$, and $|U-u|$ by $\tfrac{\epsilon_{\ref{lem.deviation in beta}}}{3}$:
\begin{equation}
\label{eq.bounds on eps of lemma}
\tfrac{\epsilon_{\ref{lem.deviation in beta}}}{3}
\geq2\exp(-\tfrac{3}{28}pn_{\mathrm{min}}),
\qquad
\tfrac{\epsilon_{\ref{lem.deviation in beta}}}{3}
\geq3k(d+1)\exp(-\tfrac{t^2}{48pn_\mathrm{max}r^4}),
\qquad
\tfrac{\epsilon_{\ref{lem.deviation in beta}}}{3}
\geq\epsilon_{\ref{lem.deviation in sum of reciprocals}}.
\end{equation}
Rearranging the first bound in \eqref{eq.bounds on eps of lemma} gives $pn_{\mathrm{min}}\geq\tfrac{28}{3}\log(\frac{6}{\epsilon_{\ref{lem.deviation in beta}}})$, which is implied by our hypothesis.
We change the second bound in \eqref{eq.bounds on eps of lemma} to an equality that defines $t$ as
\[
t:=\sqrt{48pn_{\mathrm{max}}r^4\log(\tfrac{9k(d+1)}{\epsilon_{\ref{lem.deviation in beta}}})}.
\]
This choice satisfies the requirement that $t\in[0,pn_{\mathrm{min}}r^2]$ precisely when
\[
pn_{\mathrm{min}}\geq48\cdot\tfrac{n_{\mathrm{max}}}{n_{\mathrm{min}}}\cdot\log(\tfrac{9k(d+1)}{\epsilon_{\ref{lem.deviation in beta}}}),
\]
which is implied by our hypothesis.
Finally, we change the third bound in \eqref{eq.bounds on eps of lemma} to an equality that defines $\epsilon_{\ref{lem.deviation in sum of reciprocals}}:=\tfrac{\epsilon_{\ref{lem.deviation in beta}}}{3}$, which satisfies the requirement $pn_{\mathrm{min}}\geq\frac{104}{3}(\frac{n_{\mathrm{max}}}{n_{\mathrm{min}}})^2\log(\frac{6}{\epsilon_{\ref{lem.deviation in sum of reciprocals}}})$ by our hypothesis.
Putting everything together, we have
\begin{align*}
|\beta_{S'T'}-\beta_{ST}|
&\leq\tfrac{1}{2}(U|V-v|+v|U-u|)^{1/2}\\
&<\tfrac{1}{2}\bigg(\tfrac{4}{n_{\mathrm{min}}}\cdot\tfrac{k}{p}\sqrt{48pn_{\mathrm{max}}r^4\log(\tfrac{9k(d+1)}{\epsilon_{\ref{lem.deviation in beta}}})}+kn_{\mathrm{max}}r^2\cdot\sqrt{\tfrac{104}{3}\tfrac{\log(18/\epsilon_{\ref{lem.deviation in beta}})}{pn_{\mathrm{min}}^3}}\bigg)^{1/2}\\
&\leq\tfrac{1}{2}\bigg(\Big(16\sqrt{3}+\sqrt{\tfrac{104}{3}}\Big)\cdot\tfrac{k}{\sqrt{pn_{\mathrm{min}}}}\cdot\tfrac{n_{\mathrm{max}}}{n_{\mathrm{min}}}\cdot r^2\cdot\log^{1/2}(\tfrac{9k(d+1)}{\epsilon_{\ref{lem.deviation in beta}}})\bigg)^{1/2},
\end{align*}
where $16\sqrt{3}$ comes from the first term and $\sqrt{\tfrac{104}{3}}$ comes from the second term.
(To be clear, we used the fact that $n_{\mathrm{max}}\geq n_{\mathrm{min}}$ to bound the first term, and the fact that $9k(d+1)\geq18$ to bound the second term.)
\end{proof}

\begin{lemma}
\label{lem.approx prox condition}
Suppose $pn_{\mathrm{min}}\geq 48(\frac{n_{\mathrm{max}}}{n_{\mathrm{min}}})^2\log(\frac{18k^3(d+1)}{\epsilon_{\ref{lem.approx prox condition}}})$ and
\begin{align*}
\operatorname{prox}(X,\Gamma)
&>(\tfrac{2r}{\Delta}+\tfrac{3}{2})\cdot 8r \cdot\sqrt{\tfrac{\log(4k^2(d+2)/\epsilon_{\ref{lem.approx prox condition}})}{pn_{\mathrm{min}}}}\\
&\qquad+\tfrac{1}{2}\bigg(\Big(16\sqrt{3}+\sqrt{\tfrac{104}{3}}\Big)\cdot\tfrac{k}{\sqrt{pn_{\mathrm{min}}}}\cdot\tfrac{n_{\mathrm{max}}}{n_{\mathrm{min}}}\cdot r^2\cdot\log^{1/2}(\tfrac{18k^3(d+1)}{\epsilon_{\ref{lem.approx prox condition}}})\bigg)^{1/2}.
\end{align*}
Then $\operatorname{prox}(X',\Gamma')>0$ with probability $\geq1-\epsilon_{\ref{lem.approx prox condition}}$.
\end{lemma}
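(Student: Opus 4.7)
The plan is to combine Lemma~\ref{lem.lower bound alpha} and Lemma~\ref{lem.deviation in beta} via a union bound over all pairs $S,T\in\Gamma$ with $S\neq T$. Since
\[
\operatorname{prox}(X',\Gamma')
=\min_{\substack{S,T\in\Gamma\\S\neq T}}\Big(\alpha_{S'T'}-\beta_{S'T'}\Big),
\]
it suffices to show that with probability at least $1-\epsilon_{\ref{lem.approx prox condition}}$, for every ordered pair $(S,T)$ with $S\neq T$, we have $\alpha_{S'T'}-\beta_{S'T'}>0$. For each fixed pair, the chain of inequalities will be
\[
\alpha_{S'T'}-\beta_{S'T'}
\geq (\alpha_{ST}-\beta_{ST})-(\alpha_{ST}-\alpha_{S'T'})-|\beta_{S'T'}-\beta_{ST}|
\geq \operatorname{prox}(X,\Gamma)-A-B,
\]
where $A$ is the upper bound on $\alpha_{ST}-\alpha_{S'T'}$ from Lemma~\ref{lem.lower bound alpha} and $B$ is the upper bound on $|\beta_{S'T'}-\beta_{ST}|$ from Lemma~\ref{lem.deviation in beta}. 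The hypothesis is exactly $\operatorname{prox}(X,\Gamma)>A+B$ once we plug in the right failure probabilities, so positivity follows.

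The bookkeeping step is to choose the intermediate failure parameters so that the stated logarithmic arguments match. Since there are at most $k(k-1)\leq k^2$ ordered pairs to consider, I will apply Lemma~\ref{lem.lower bound alpha} with $\epsilon_{\ref{lem.lower bound alpha}}:=\epsilon_{\ref{lem.approx prox condition}}/(2k^2)$ so that the logarithm reads $\log(2(d+2)/\epsilon_{\ref{lem.lower bound alpha}})=\log(4k^2(d+2)/\epsilon_{\ref{lem.approx prox condition}})$, matching the first term in the hypothesis. Similarly I will apply Lemma~\ref{lem.deviation in beta} with $\epsilon_{\ref{lem.deviation in beta}}:=\epsilon_{\ref{lem.approx prox condition}}/(2k^2)$, producing $\log(9k(d+1)/\epsilon_{\ref{lem.deviation in beta}})=\log(18k^3(d+1)/\epsilon_{\ref{lem.approx prox condition}})$ inside the second term. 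The total failure probability across all pairs is at most $k^2\cdot 2 \cdot \epsilon_{\ref{lem.approx prox condition}}/(2k^2)=\epsilon_{\ref{lem.approx prox condition}}$.

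Finally, I need to verify that the stated $pn_{\min}$ hypothesis implies the $pn_{\min}$ hypotheses of both invoked lemmas under these parameter choices. For Lemma~\ref{lem.lower bound alpha}, we need $pn_{\min}\geq 16\log(4k^2(d+2)/\epsilon_{\ref{lem.approx prox condition}})$, and for Lemma~\ref{lem.deviation in beta} we need $pn_{\min}\geq 48(n_{\max}/n_{\min})^2\log(18k^3(d+1)/\epsilon_{\ref{lem.approx prox condition}})$. Since $n_{\max}/n_{\min}\geq 1$ and $18k^3(d+1)\geq 4k^2(d+2)$ for $k\geq 1$, the second bound dominates, and this is precisely what is assumed.

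There is no major obstacle here; the work was done in Lemmas~\ref{lem.lower bound alpha} and~\ref{lem.deviation in beta}. The only mildly delicate point is the bookkeeping above to align the logarithmic arguments with the form stated in the lemma and to confirm that both prerequisite sample-size conditions are subsumed by the single hypothesis $pn_{\min}\geq 48(n_{\max}/n_{\min})^2\log(18k^3(d+1)/\epsilon_{\ref{lem.approx prox condition}})$.
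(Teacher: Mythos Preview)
Your proposal is correct and follows exactly the paper's approach: apply Lemmas~\ref{lem.lower bound alpha} and~\ref{lem.deviation in beta} with $\epsilon_{\ref{lem.lower bound alpha}}=\epsilon_{\ref{lem.deviation in beta}}:=\epsilon_{\ref{lem.approx prox condition}}/(2k^2)$ and union bound over the $k(k-1)\leq k^2$ ordered pairs. Your write-up is in fact more explicit than the paper's, as you spell out the bookkeeping that matches the logarithmic arguments and verify that the single $pn_{\min}$ hypothesis subsumes the prerequisites of both invoked lemmas.
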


\begin{proof}
We apply Lemmas~\ref{lem.lower bound alpha} and~\ref{lem.deviation in beta} with $\epsilon_{\ref{lem.lower bound alpha}}=\epsilon_{\ref{lem.deviation in beta}}:=\frac{\epsilon_{\ref{lem.approx prox condition}}}{2k^2}$.
By taking union bound over $S,T\in\Gamma$ with $S\neq T$, the random variable $\operatorname{prox}(X,\Gamma)-\operatorname{prox}(X',\Gamma')$ is at most the right-hand side of the displayed inequality with probability $\geq1-\epsilon_{\ref{lem.approx prox condition}}$, and the result follows.
\end{proof}

\begin{lemma}
\label{lem.rounding after sketch and solve}
Suppose $r\leq\frac{\Delta}{2}$ and $pn_{\mathrm{min}}\geq16\max\{1,(\frac{r}{\Delta/2-r})^2\}\log(\tfrac{k(d+2)}{\epsilon_{\ref{lem.rounding after sketch and solve}}})$.
Then with probability $\geq1-\epsilon_{\ref{lem.rounding after sketch and solve}}$, it simultaneously holds that
\[
\|x_i-c_{S'}\|<\|x_i-c_{T'}\|
\]
for every $S,T\in\Gamma$ with $S\neq T$ and every $i\in S$.
\end{lemma}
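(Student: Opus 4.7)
The plan is to reduce the statement to a centroid-perturbation bound via Lemma~\ref{lem.centroid} together with a single triangle-inequality argument. Fix $i \in S$ and $T \in \Gamma \setminus \{S\}$. Since $\|x_i - c_S\| \leq r$ and $\|c_S - c_T\| \geq \Delta$, the reverse triangle inequality gives $\|x_i - c_T\| \geq \Delta - r$. So along the true centroids we have the margin
\[
\|x_i - c_T\| - \|x_i - c_S\| \geq \Delta - 2r,
\]
which is strictly positive by the hypothesis $r \leq \Delta/2$. The goal is to show this margin survives the replacement of $c_S, c_T$ by $c_{S'}, c_{T'}$.

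Applying the triangle inequality in both directions yields
\[
\|x_i - c_{S'}\| \leq \|x_i - c_S\| + \|c_S - c_{S'}\|, \qquad \|x_i - c_{T'}\| \geq \|x_i - c_T\| - \|c_T - c_{T'}\|,
\]
so it suffices to verify $\|c_S - c_{S'}\| + \|c_T - c_{T'}\| < \Delta - 2r$. I would apply Lemma~\ref{lem.centroid} separately to each cluster $R \in \Gamma$ with error parameter $\epsilon_{\ref{lem.centroid}} := \epsilon_{\ref{lem.rounding after sketch and solve}}/k$. A union bound over the $k$ clusters yields, with probability at least $1-\epsilon_{\ref{lem.rounding after sketch and solve}}$,
\[
\|c_{R'}-c_R\| < \frac{4r}{\sqrt{p n_{\min}}}\sqrt{\log\!\left(\tfrac{k(d+2)}{\epsilon_{\ref{lem.rounding after sketch and solve}}}\right)}
\qquad \text{for every } R \in \Gamma,
\]
provided $p n_{\min} \geq 16 \log(k(d+2)/\epsilon_{\ref{lem.rounding after sketch and solve}})$, which is the first term in the max in the hypothesis.

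On this event, $\|c_S - c_{S'}\| + \|c_T - c_{T'}\|$ is bounded by $\tfrac{8r}{\sqrt{pn_{\min}}} \sqrt{\log(k(d+2)/\epsilon_{\ref{lem.rounding after sketch and solve}})}$. Requiring this to be less than $\Delta - 2r$ rearranges to
\[
p n_{\min} > 16 \cdot \Big(\tfrac{r}{\Delta/2 - r}\Big)^2 \log\!\Big(\tfrac{k(d+2)}{\epsilon_{\ref{lem.rounding after sketch and solve}}}\Big),
\]
which is the second term in the hypothesis's max, so both terms of the $\max$ are accounted for. The bound then holds uniformly over the single event, so after a further (already absorbed) union bound over the pairs $S \neq T$ — actually the event is indexed only by clusters, not pairs — the conclusion holds simultaneously for all $S,T,i$ as required.

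The only real step with any subtlety is calibrating the constants: one must apply Lemma~\ref{lem.centroid} once per cluster (not once per pair) to keep the logarithm inside $k$ rather than $k^2$, and one must check that the hypothesis $pn_{\min}\geq 16\log((d+2)/\epsilon_{\ref{lem.centroid}})$ of Lemma~\ref{lem.centroid} is indeed implied by the first term in the max in the current hypothesis. Neither obstacle is serious — the work is essentially arithmetic around the two regimes captured by the $\max\{1, (r/(\Delta/2 - r))^2\}$ prefactor.
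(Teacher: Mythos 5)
Your proposal is correct and follows essentially the same approach as the paper's proof: a triangle-inequality reduction to the centroid-perturbation bound of Lemma~\ref{lem.centroid}, applied with $\epsilon_{\ref{lem.centroid}}=\epsilon_{\ref{lem.rounding after sketch and solve}}/k$ and a union bound over the $k$ clusters (not pairs), with the two regimes of the hypothesis corresponding exactly to (i) the precondition of Lemma~\ref{lem.centroid} and (ii) the requirement that the resulting centroid error be smaller than $\Delta/2-r$. The only cosmetic difference is that the paper works with a single random variable $E:=\max_{S}\|c_{S'}-c_S\|$ and shows $E<\Delta/2-r$ suffices, whereas you bound the pairwise sum $\|c_S-c_{S'}\|+\|c_T-c_{T'}\|$ by $2E$ directly; the constants come out identical.
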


\begin{proof}
We will show that a stronger condition holds, namely that
\begin{equation}
\label{eq.stronger condition}
\max_{\substack{S,T\in\Gamma\\S\neq T}}\max_{i\in S}\|x_i-c_{S'}\|
<\min_{\substack{S,T\in\Gamma\\S\neq T}}\min_{i\in S}\|x_i-c_{T'}\|
\end{equation}
with probability $\geq1-\epsilon_{\ref{lem.rounding after sketch and solve}}$.
Denote the random variable $E:=\max_{S\in\Gamma}\|c_{S'}-c_S\|$.
Then
\begin{align*}
\|x_i-c_{S'}\|
&\leq \|x_i-c_S\|+\|c_{S'}-c_S\|
\leq r+E,\\
\|x_i-c_{T'}\|
&\geq\|c_S-c_T\|-\|c_{T'}-c_T\|-\|x_i-c_S\|
\geq\Delta-E-r.
\end{align*}
Thus, the desired inequality \eqref{eq.stronger condition} holds whenever $E<\frac{\Delta}{2}-r$.
The result then follows from Lemma~\ref{lem.centroid} by taking $\epsilon_{\ref{lem.centroid}}:=\frac{\epsilon_{\ref{lem.rounding after sketch and solve}}}{k}$ and applying a union bound over $S\in\Gamma$.
\end{proof}

\begin{proof}[Proof of Theorem~\ref{thm.sketch-and-solve-exact}] Lemmas~\ref{lem.approx prox condition} and~\ref{lem.rounding after sketch and solve} with $\epsilon_{\ref{lem.approx prox condition}}=\epsilon_{\ref{lem.rounding after sketch and solve}}:=\frac{\epsilon}{2}$  together imply that Algorithm~\ref{alg.exact} exactly recovers $\Gamma$ from $X$ with probability $1-\epsilon$ provided both of the following hold:
\begin{align*}
pn_{\mathrm{min}}
&\geq\max\Big\{48(\tfrac{n_{\mathrm{max}}}{n_{\mathrm{min}}})^2\log(\tfrac{36k^3(d+1)}{\epsilon}),16\max\{1,(\tfrac{r}{\Delta/2-r})^2\}\log(\tfrac{2k(d+2)}{\epsilon})\Big\},\\
\operatorname{prox}(X,\Gamma)
&>(\tfrac{2r}{\Delta}+\tfrac{3}{2})\cdot 8r \cdot\sqrt{\tfrac{\log(8k^2(d+2)/\epsilon)}{pn_{\mathrm{min}}}}\\
&\qquad+\tfrac{1}{2}\bigg(\Big(16\sqrt{3}+\sqrt{\tfrac{104}{3}}\Big)\cdot\tfrac{k}{\sqrt{pn_{\mathrm{min}}}}\cdot\tfrac{n_{\mathrm{max}}}{n_{\mathrm{min}}}\cdot r^2\cdot\log^{1/2}(\tfrac{36k^3(d+1)}{\epsilon})\bigg)^{1/2}.
\end{align*}
As we now discuss, there exists an explicit shape parameter $C(X,\Gamma)>0$ such that the inequality $\mathbb{E}|W|\geq C(X,\Gamma)\cdot \log(1/\epsilon)$ implies the above conditions.
Denote 
\[
\pi_{\mathrm{min}}:=\frac{1}{n}\min_{S\in\Gamma}|S|,
\qquad
\pi_{\mathrm{max}}:=\frac{1}{n}\max_{S\in\Gamma}|S|,
\]
and observe that $pn_{\mathrm{min}}=\mathbb{E}|W|\cdot\pi_{\mathrm{min}}$.
This explains the appearance of $\mathbb{E}|W|$ in our desired inequality.
To isolate $\log(1/\epsilon)$, we will use the general observation that $\alpha,\beta\geq\gamma>1$ implies
\[
\log(\alpha\beta)
\leq\frac{2\log(\alpha)\log(\beta)}{\log(\gamma)}.
\]
Indeed, $\frac{\log(\alpha\beta)}{\log(\alpha)\log(\beta)}=\frac{1}{\log(\alpha)}+\frac{1}{\log(\beta)}\leq\frac{2}{\log(\gamma)}$.
We apply this bound several times with $\beta=1/\epsilon$ and $\gamma=2$ so that the following conditions imply the above conditions:
\begin{align*}
\mathbb{E}|W|\cdot\pi_{\mathrm{min}}
&\geq\tfrac{2}{\log(2)}\cdot\log(\tfrac{1}{\epsilon})\cdot\max\Big\{48(\tfrac{\pi_{\mathrm{max}}}{\pi_{\mathrm{min}}})^2\log(36k^3(d+1)),\\
&\hspace{2.7in}16\max\{1,(\tfrac{\Delta}{2r}-1)^{-2}\}\log(2k(d+2))\Big\},
\end{align*}
\begin{align*}
&\operatorname{prox}(X,\Gamma)
>(\tfrac{2r}{\Delta}+\tfrac{3}{2})\cdot 8r \cdot\sqrt{\tfrac{\log(8k^2(d+2))}{\mathbb{E}|W|\cdot\pi_{\mathrm{min}}}\cdot\tfrac{2}{\log(2)}\cdot\log(\tfrac{1}{\epsilon})}\\
&\qquad+\tfrac{1}{2}\bigg(\Big(16\sqrt{3}+\sqrt{\tfrac{104}{3}}\Big)\cdot\tfrac{k}{\sqrt{\mathbb{E}|W|\cdot\pi_{\mathrm{min}}}}\cdot\tfrac{\pi_{\mathrm{max}}}{\pi_{\mathrm{min}}}\cdot r^2\cdot\log^{1/2}(36k^3(d+1))\cdot(\tfrac{2}{\log(2)}\cdot\log(\tfrac{1}{\epsilon}))^{1/2}\bigg)^{1/2}.
\end{align*}
Notice that we may express these conditions in terms of $c:=\mathbb{E}|W|/\log(\tfrac{1}{\epsilon})$:
\begin{align*}
c\cdot\pi_{\mathrm{min}}
&\geq\tfrac{2}{\log(2)}\cdot\max\Big\{48(\tfrac{\pi_{\mathrm{max}}}{\pi_{\mathrm{min}}})^2\log(36k^3(d+1)),16\max\{1,(\tfrac{\Delta}{2r}-1)^{-2}\}\log(2k(d+2))\Big\},\\
\tfrac{\operatorname{prox}(X,\Gamma)}{r}
&>(\tfrac{2r}{\Delta}+\tfrac{3}{2})\cdot 8 \cdot\sqrt{\tfrac{\log(8k^2(d+2))}{c\cdot\pi_{\mathrm{min}}}\cdot\tfrac{2}{\log(2)}}\\
&\quad+\tfrac{1}{2}\bigg(\Big(16\sqrt{3}+\sqrt{\tfrac{104}{3}}\Big)\cdot\tfrac{k}{\sqrt{c\cdot\pi_{\mathrm{min}}}}\cdot\tfrac{\pi_{\mathrm{max}}}{\pi_{\mathrm{min}}}\cdot\log^{1/2}(36k^3(d+1))\cdot(\tfrac{2}{\log(2)})^{1/2}\bigg)^{1/2}.
\end{align*}
The set of $c$ for which the first inequality holds is an interval the form $[c_1,\infty)$, while the set of $c$ for which the second inequality holds takes the form $(c_2,\infty)$.
Then $C(X,\Gamma):=\max\{c_1,c_2\}$ is an explicit function of $d$, $\frac{\Delta}{r}$, $\tfrac{\operatorname{prox}(X,\Gamma)}{r}$, $k$, $\pi_{\mathrm{min}}$, and $\pi_{\mathrm{max}}$, as desired.
\end{proof}

\section{Proof of Theorems~\ref{thm.hoefding monte carlo}(b) and~\ref{thm.markov monte carlo}(b)}
\label{sec.proof thm 5 and 6}

\begin{lemma}[cf.\ Lemma~10 in~\cite{MixonVW:17}]
\label{lem.seminorm bound}
Given a symmetric matrix $M\in\mathbb{R}^{n\times n}$, it holds that
\[
\max_{Z\in\mathcal{Z}(n,k)}|\operatorname{tr}(MZ)|
\leq\min\Big\{\|M\|_*,k\|M\|_{2\to2}\Big\}.
\]
\end{lemma}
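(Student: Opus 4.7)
The plan is to reduce both bounds to trace-duality between the operator norm and the nuclear (Schatten-$1$) norm, after establishing two elementary facts about any $Z\in\mathcal{Z}(n,k)$: namely that $\|Z\|_*=k$ and $\|Z\|_{2\to2}\leq 1$.

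First I would note that, since the definition requires $Z\succeq 0$, $Z$ is symmetric and all its singular values coincide with its (nonnegative) eigenvalues. Thus
\[
\|Z\|_*=\operatorname{tr}(Z)=k.
\]
For the spectral bound, I would combine the constraints $Z\geq 0$, $Z\succeq 0$, and $Z\mathbf{1}=\mathbf{1}$. Symmetry of $Z$ and the row-sum condition imply that $Z$ is doubly stochastic, and since $Z$ has nonnegative entries its spectral radius is bounded by the maximum row sum, which is $1$. Being symmetric, the spectral radius equals $\|Z\|_{2\to2}$, so $\|Z\|_{2\to2}\leq 1$.

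With these two facts in hand, the lemma follows from the matrix Hölder inequality $|\operatorname{tr}(AB)|\leq\|A\|_{2\to2}\|B\|_*$, applied in two ways. On one hand,
\[
|\operatorname{tr}(MZ)|\leq\|M\|_{2\to2}\cdot\|Z\|_*=k\|M\|_{2\to2},
\]
and on the other hand,
\[
|\operatorname{tr}(MZ)|\leq\|M\|_*\cdot\|Z\|_{2\to2}\leq\|M\|_*.
\]
Taking the minimum over the two estimates and then the maximum over $Z\in\mathcal{Z}(n,k)$ yields the claim.

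There is no serious obstacle here; this is essentially a bookkeeping lemma. The only point requiring any care is the verification that $\|Z\|_{2\to 2}\leq 1$, which hinges on simultaneously using the PSD, entrywise nonnegativity, and stochasticity constraints in $\mathcal{Z}(n,k)$—dropping any one of these would break the bound. Everything else is an appeal to the standard Schatten-norm duality.
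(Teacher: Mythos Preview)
Your proof is correct and essentially identical to the paper's. The paper phrases the trace bound as Von Neumann's trace inequality rather than matrix H\"older, and invokes the Gershgorin circle theorem rather than the row-sum bound on the spectral radius, but these are the same arguments in different clothing: establish $\|Z\|_{2\to2}\le 1$ from stochasticity and symmetry, establish $\|Z\|_*=\operatorname{tr}Z=k$ from positive semidefiniteness, and then apply Schatten duality in both directions.
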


\begin{proof}
Select any $Z\in\mathcal{Z}(n,k)$, and let $\alpha_1\geq\cdots\geq\alpha_n$ and $\beta_1\geq\cdots\geq\beta_n$ denote the singular values of $M$ and $Z$, respectively.
Then Von Neumann's trace inequality gives that
\[
|\operatorname{tr}(MZ)|
\leq\sum_{i\in[n]}\alpha_i\beta_i.
\]
Since $Z$ is stochastic, then by the Gershgorin circle theorem, every eigenvalue of $Z$ has modulus at most $1$.
Since $Z$ is symmetric, it follows that $\beta_1\leq1$, and so
\[
|\operatorname{tr}(MZ)|
\leq\sum_{i\in[n]}\alpha_i\beta_i
\leq \beta_1\sum_{i\in[n]}\alpha_i
\leq\|M\|_*.
\]
We also have $\sum_{i\in[n]}\beta_i=\operatorname{tr}Z=k$, and so
\[
|\operatorname{tr}(MZ)|
\leq\sum_{i\in[n]}\alpha_i\beta_i
\leq \alpha_1\sum_{i\in[n]}\beta_i
= k\|M\|_{2\to2}.
\qedhere
\]
\end{proof}

\begin{lemma}
\label{lem.spectral lower bound on sdp}
Given any tuple $X:=\{x_i\}_{i\in[n]}$ of points in $\mathbb{R}^d$ and any orthogonal projection matrix $P\in\mathbb{R}^{d\times d}$, it holds that
\[
\min_{Z\in\mathcal{Z}(n,k)}\frac{1}{2}\operatorname{tr}(D_XZ)
\geq\|PX\|_F^2-k\|PX\|_{2\to2}^2.
\]
(Here, we abuse notation by identifying $X$ with a member of $\mathbb{R}^{d\times n}$.)
\end{lemma}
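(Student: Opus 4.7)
The plan is to combine the classical identity that rewrites $D_X$ in terms of Gram data with the previous lemma's spectral bound, after first shrinking $D_X$ entrywise using the fact that $P$ is a projection.

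First, I would expand using $\|x_i-x_j\|^2=\|x_i\|^2+\|x_j\|^2-2\langle x_i,x_j\rangle$ to write $D_X=a\mathbf{1}^\top+\mathbf{1}a^\top-2X^\top X$, where $a_i:=\|x_i\|^2$ and $X$ is identified with its $d\times n$ matrix. For any $Z\in\mathcal{Z}(n,k)$, the constraints $Z\mathbf{1}=\mathbf{1}$ and $Z=Z^\top$ make both $\operatorname{tr}(a\mathbf{1}^\top Z)$ and $\operatorname{tr}(\mathbf{1}a^\top Z)$ equal to $\sum_i\|x_i\|^2=\|X\|_F^2$. Hence
\[
\tfrac{1}{2}\operatorname{tr}(D_XZ)=\|X\|_F^2-\operatorname{tr}(X^\top X Z),
\]
and the same identity, applied with $PX$ in place of $X$, yields $\tfrac{1}{2}\operatorname{tr}(D_{PX}Z)=\|PX\|_F^2-\operatorname{tr}((PX)^\top(PX)Z)$.

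Next, I would use the orthogonal decomposition $\|x_i-x_j\|^2=\|P(x_i-x_j)\|^2+\|(I-P)(x_i-x_j)\|^2\geq\|Px_i-Px_j\|^2$ to conclude $D_X\geq D_{PX}$ entrywise. Because $Z\geq0$ entrywise for all $Z\in\mathcal{Z}(n,k)$, this gives $\operatorname{tr}(D_XZ)\geq\operatorname{tr}(D_{PX}Z)$, which is the key step that lets the projection $P$ enter the bound.

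It then remains to upper bound $\operatorname{tr}((PX)^\top(PX)Z)$. The matrix $M:=(PX)^\top(PX)$ is symmetric and positive semidefinite with $\|M\|_{2\to2}=\|PX\|_{2\to2}^2$, so Lemma~\ref{lem.seminorm bound} gives $\operatorname{tr}(MZ)\leq k\|PX\|_{2\to2}^2$. Chaining the three inequalities produces
\[
\tfrac{1}{2}\operatorname{tr}(D_XZ)\geq\tfrac{1}{2}\operatorname{tr}(D_{PX}Z)=\|PX\|_F^2-\operatorname{tr}(MZ)\geq\|PX\|_F^2-k\|PX\|_{2\to2}^2,
\]
uniformly over $Z\in\mathcal{Z}(n,k)$, which is the stated bound. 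There is no real obstacle here; the only thing worth flagging is the crucial role played by entrywise nonnegativity of $Z$ in allowing the replacement $D_X\mapsto D_{PX}$, since without this constraint one could not drop the $(I-P)$ component freely.
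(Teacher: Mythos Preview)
Your proof is correct, and the overall structure---rewriting $\tfrac{1}{2}\operatorname{tr}(D_XZ)$ via the Gram identity and then invoking Lemma~\ref{lem.seminorm bound}---matches the paper. The one genuine difference is in how the $(I-P)$ component is disposed of. You use the entrywise comparison $D_X\geq D_{PX}$ together with $Z\geq 0$ to pass directly to the projected data, and then need only the spectral-norm half of Lemma~\ref{lem.seminorm bound}. The paper instead keeps $D_X$ and splits $\operatorname{tr}(X^\top XZ)=\operatorname{tr}(X^\top PXZ)+\operatorname{tr}(X^\top(I-P)XZ)$, bounding the first term by $k\|PX\|_{2\to2}^2$ and the second by the nuclear-norm half of Lemma~\ref{lem.seminorm bound}, namely $\|X^\top(I-P)X\|_*=\|(I-P)X\|_F^2$; the identity $\|X\|_F^2-\|(I-P)X\|_F^2=\|PX\|_F^2$ then finishes. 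Your route is a touch more elementary, since it avoids the nuclear-norm estimate altogether; the paper's route is a touch more uniform, since it treats both pieces through the same lemma without needing the entrywise monotonicity step.
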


\begin{proof}
Define $\nu\in\mathbb{R}^n$ to have $i$th coordinate $\|x_i\|^2$.
Then we have
\[
D_X
=\nu 1^\top-2X^\top X+1\nu^\top,
\qquad
\|X\|_F^2=1^\top \nu.
\]
Fix $Z\in\mathcal{Z}(n,k)$.
Then $Z^\top=Z$ and $Z1=1$, and so
\begin{equation}
\label{eq.objective in terms of gram matrix}
\frac{1}{2}\operatorname{tr}(D_XZ)
=\frac{1}{2}\operatorname{tr}((\nu 1^\top-2X^\top X+1\nu^\top)Z)
=\|X\|_F^2-\operatorname{tr}(X^\top XZ).
\end{equation}
We apply Lemma~\ref{lem.seminorm bound} to get
\begin{align*}
\operatorname{tr}(X^\top XZ)
&=\operatorname{tr}(X^\top P XZ)+\operatorname{tr}(X^\top (I-P)XZ)\\
&\leq k\|X^\top PX\|_{2\to2}+\|X^\top (I-P) X\|_*
=k\|PX\|_{2\to2}^2+\|(I-P) X\|_F^2,
\end{align*}
and combining with \eqref{eq.objective in terms of gram matrix} gives
\[
\frac{1}{2}\operatorname{tr}(D_XZ)
\geq \|X\|_F^2-\Big(k\|PX\|_{2\to2}^2+\|(I-P) X\|_F^2\Big)
=\|PX\|_F^2-k\|PX\|_{2\to2}^2.
\qedhere
\]
\end{proof}

\begin{proposition}[Matrix Chernoff inequality, see Theorem~5.1.1 and (5.1.8) in~\cite{Tropp:15}]
\label{prop.matrix chernoff}
Consider a finite sequence $\{X_k\}$ of independent, random, Hermitian matrices with common dimension d, and assume that $0\leq\lambda_{\mathrm{min}}(X_k)\leq\lambda_{\mathrm{max}}(X_k)\leq L$ almost surely for each $k$.
Then
\[
\mathbb{E}\lambda_{\mathrm{max}}\bigg(\sum_kX_k\bigg)
\leq1.72\cdot\lambda_{\mathrm{max}}\bigg(\sum_k\mathbb{E}X_k\bigg)+L\log d.
\]
\end{proposition}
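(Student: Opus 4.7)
The plan is to prove the Matrix Chernoff inequality via the matrix Laplace transform method (Ahlswede--Winter / Tropp) combined with Lieb's concavity theorem. I would first reduce the problem to estimating a matrix moment generating function, then decouple the sum, then bound each factor using the PSD-operator version of a standard scalar convexity estimate, and finally optimize the parameter.

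The first step is the passage from the maximum eigenvalue to a trace exponential: for any random Hermitian $d\times d$ matrix $Y$, Jensen's inequality together with the elementary bound $\lambda_{\mathrm{max}}(A)\leq\log\operatorname{tr} e^A$ yields
\[
\mathbb{E}\lambda_{\mathrm{max}}(Y)
\leq\inf_{\theta>0}\frac{1}{\theta}\log\mathbb{E}\operatorname{tr}\exp(\theta Y).
\]
Applying this to $Y=\sum_k X_k$ reduces the problem to controlling $\mathbb{E}\operatorname{tr}\exp(\theta\sum_k X_k)$.

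The second step is Lieb's concavity theorem, which gives the subadditivity of the matrix cumulant generating function:
\[
\mathbb{E}\operatorname{tr}\exp\Big(\theta\sum_k X_k\Big)
\leq\operatorname{tr}\exp\Big(\sum_k\log\mathbb{E}e^{\theta X_k}\Big).
\]
The third step bounds each factor. Because $0\preceq X_k\preceq L\cdot I$, the scalar inequality $e^{\theta x}\leq 1+\frac{e^{\theta L}-1}{L}x$ on $[0,L]$ transfers to the PSD order, so with $g(\theta):=(e^{\theta L}-1)/L$,
\[
\mathbb{E}e^{\theta X_k}\preceq I+g(\theta)\mathbb{E}X_k,
\qquad
\log\mathbb{E}e^{\theta X_k}\preceq g(\theta)\mathbb{E}X_k,
\]
using $\log(I+A)\preceq A$ for $A\succeq 0$. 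Combining with monotonicity of $A\mapsto\operatorname{tr} e^A$ under the PSD order, together with $\operatorname{tr} e^B\leq d\cdot e^{\lambda_{\mathrm{max}}(B)}$, gives
\[
\mathbb{E}\lambda_{\mathrm{max}}\Big(\sum_k X_k\Big)
\leq\inf_{\theta>0}\frac{1}{\theta}\Big(\log d+g(\theta)\cdot\mu\Big),
\qquad
\mu:=\lambda_{\mathrm{max}}\Big(\sum_k\mathbb{E}X_k\Big).
\]

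The final step is to optimize $\theta$; this is the only subtle point, since we must extract the numerical constant $1.72$. The natural choice $\theta=1/L$ makes $g(\theta)=(e-1)/L$ and $1/\theta=L$, so
\[
\mathbb{E}\lambda_{\mathrm{max}}\Big(\sum_k X_k\Big)
\leq L\log d+(e-1)\mu
\leq 1.72\cdot\mu+L\log d,
\]
since $e-1<1.72$. The main obstacle is, as usual in these arguments, not any single inequality but the careful bookkeeping required to apply Lieb's theorem to a non-commuting sum while retaining operator monotonicity; once the chain of matrix monotone steps is set up correctly, the constant emerges cleanly from the single choice $\theta=1/L$.
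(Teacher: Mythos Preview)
The paper does not prove this proposition; it is stated as a quotation from Tropp's monograph (Theorem~5.1.1 and equation~(5.1.8) in~\cite{Tropp:15}) and used as a black box. Your sketch is correct and in fact reproduces the standard argument that appears in that reference: the matrix Laplace transform bound, Lieb's theorem to decouple the sum, the chord bound $e^{\theta x}\leq 1+\tfrac{e^{\theta L}-1}{L}x$ on $[0,L]$ transferred to the PSD order, and the choice $\theta=1/L$ giving the constant $e-1<1.72$. So there is nothing to compare against here---you have supplied a proof where the paper simply cites one.
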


\begin{proposition}[Dvoretzky--Kiefer--Wolfowitz inequality, Theorem~11.5 in~\cite{Kosorok:08}]
\label{prop.dkw inequality}
Consider a sequence $\{X_k\}_{k\in[n]}$ of real-valued independent random variables with common cumulative distribution function $F\colon\mathbb{R}\to[0,1]$, and let $F_n\colon\mathbb{R}\to[0,1]$ denote the random empirical distribution function defined by $F_n(x):=\frac{1}{n}\sum_{k\in[n]}1_{\{X_k\leq x\}}$.
Then for every $\epsilon>0$, it holds that
\[
\mathbb{P}\Big\{\sup_{x\in\mathbb{R}}|F_n(x)-F(x)|>\epsilon\Big\}\leq 2e^{-2n\epsilon^2}.
\]
\end{proposition}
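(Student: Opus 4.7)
The plan is to reduce to the case of uniform samples, establish a sharp pointwise bound via Hoeffding's inequality, and then promote it to a uniform bound while carefully preserving both the exponent constant $2$ and the prefactor $2$. First, by the probability integral transform, if $F$ is continuous then $F(X_k)$ is uniform on $[0,1]$; a monotone approximation argument reduces the general case to the continuous one, since jumps of $F$ only make $\sup_x|F_n(x)-F(x)|$ smaller (in distribution) than in the continuous interpolation. Thus it suffices to control $\sup_{t\in[0,1]}|G_n(t)-t|$, where $G_n(t):=\tfrac{1}{n}\sum_{k\in[n]}1_{\{F(X_k)\leq t\}}$ is the empirical distribution function of uniforms.

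For each fixed $t$, the deviation $G_n(t)-t$ is a centered mean of $n$ i.i.d.\ Bernoulli summands taking values in $\{-t,1-t\}$, so Hoeffding's inequality already delivers $\mathbb{P}\{|G_n(t)-t|>\epsilon\}\leq 2e^{-2n\epsilon^2}$ with the sharp constant $2$. Since $G_n$ is a right-continuous step function and $t\mapsto t$ is linear, the supremum of $G_n(t)-t$ is attained at a jump (an order statistic of the sample) while the supremum of $t-G_n(t)$ is attained just before a jump, so one is really bounding deviations at $n$ (random) locations. The trouble is that a direct union bound over these points multiplies the failure probability by a factor of $n$ and destroys the sharp constant.

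To preserve the optimal constant I would split into the two one-sided events and apply Massart's argument to each, proving the sharp one-sided estimate $\mathbb{P}\{\sup_t(G_n(t)-t)>\epsilon\}\leq e^{-2n\epsilon^2}$ with no prefactor. One route is combinatorial: use the representation of the uniform order statistics as $U_{(k)}=S_k/S_{n+1}$ where $S_k$ is a partial sum of i.i.d.\ standard exponentials, which turns the supremum event into a first-passage event for a centered random walk with light tails, amenable to an exponential supermartingale / optional stopping argument. An alternative is Massart's original symmetrization-based combinatorial approach, which proves the one-sided bound directly without any continuous-time machinery. Applying the same argument to $t-G_n(t)$ and taking a union bound over the two sides yields $\mathbb{P}\{\sup_t|G_n(t)-t|>\epsilon\}\leq 2e^{-2n\epsilon^2}$, which is the claim.

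The main obstacle is precisely the preservation of the optimal constant $2$ in the exponent together with the factor $2$ out front: any naive approach based on chaining, discretization, or a crude union bound leaves either a polynomial-in-$n$ prefactor or a sub-optimal exponent, and it is Massart's argument (or an equivalent martingale input) that is needed to achieve both $c=2$ and $C=2$ simultaneously. For the present paper this result is invoked as a black box from \cite{Kosorok:08}, and the remaining work is to plug the DKW bound into the proof of Theorems~\ref{thm.hoefding monte carlo}(b) and~\ref{thm.markov monte carlo}(b) together with the spectral lower bound in Lemma~\ref{lem.spectral lower bound on sdp} and the matrix Chernoff inequality.
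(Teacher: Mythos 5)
The paper does not prove this proposition: it is imported verbatim as a black box from Kosorok (Theorem~11.5), whose proof in turn rests on Massart's 1990 sharpening of the original Dvoretzky--Kiefer--Wolfowitz bound. So there is no in-paper argument to compare against, and your closing observation that the result is ``invoked as a black box'' is exactly how the paper treats it. Your reductions are also correct as far as they go: the quantile-coupling step showing $\sup_x|F_n(x)-F(x)|\leq\sup_{t\in[0,1]}|G_n(t)-t|$ for general (possibly discontinuous) $F$ is standard, the pointwise Hoeffding bound $2e^{-2n\epsilon^2}$ at fixed $t$ is right, and the strategy of proving the one-sided bound $\mathbb{P}\{\sup_t(G_n(t)-t)>\epsilon\}\leq e^{-2n\epsilon^2}$ and then union-bounding over the two sides is precisely how the sharp two-sided constant is obtained (with the minor caveat that the prefactor-free one-sided bound only holds when $n\epsilon^2\geq\tfrac{1}{2}\log 2$; for smaller $\epsilon$ the two-sided claim is vacuous since $2e^{-2n\epsilon^2}\geq1$).

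Read as a proof rather than as a justification for citing, however, the proposal has a gap exactly where you flag the difficulty: the sharp one-sided estimate is asserted, not established, and the two mechanisms you gesture at would not deliver it. An exponential supermartingale / optional-stopping argument applied to the first-passage formulation, and likewise symmetrization, each lose either the constant $2$ in the exponent or pick up a polynomial prefactor --- this is the whole reason the optimal constant was open between 1956 and 1990. Massart's actual proof proceeds through the exact Birnbaum--Tingey (Sm\-ir\-nov) formula
\[
\mathbb{P}\Big\{\sup_t\big(t-G_n(t)\big)\geq\epsilon\Big\}
=(1-\epsilon)^n+\epsilon\sum_{j}\binom{n}{j}\Big(1-\epsilon-\tfrac{j}{n}\Big)^{n-j}\Big(\epsilon+\tfrac{j}{n}\Big)^{j-1}
\]
for the one-sided exceedance probability, followed by a delicate term-by-term analytic estimate of this sum; no soft concentration argument is known to replace it. Since the paper itself delegates all of this to the literature, your treatment is consistent with the paper's, but you should present the proposition as a citation rather than suggest that the sharp constant falls out of a routine martingale argument.
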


\begin{proposition}[Lemma~1 in~\cite{LaurentM:00}]
\label{prop.lemma1}
Suppose $X$ has chi-squared distribution with $d$ degrees of freedom.
Then for each $t>0$, it holds that
\[
\mathbb{P}\{X\geq d+2\sqrt{dt}+2t\}\leq e^{-t},
\qquad
\mathbb{P}\{X\leq d-2\sqrt{dt}\}\leq e^{-t}.
\]
\end{proposition}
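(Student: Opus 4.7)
The plan is to prove Proposition~\ref{prop.lemma1} via the standard Chernoff/MGF technique, which is the approach of Laurent--Massart. First I would realize $X$ as $\sum_{i\in[d]} Z_i^2$ where $Z_1,\dots,Z_d$ are i.i.d.\ standard normal, so that
\[
\mathbb{E}[e^{\lambda X}] = \prod_{i\in[d]} \mathbb{E}[e^{\lambda Z_i^2}] = (1-2\lambda)^{-d/2}, \qquad \lambda < \tfrac{1}{2},
\]
using the explicit Gaussian moment generating function. Centering at $d$, I would work with $Y := X - d$, whose cumulant generating function is $d\cdot\psi(\lambda)$ with $\psi(\lambda) := -\lambda - \tfrac{1}{2}\log(1-2\lambda)$, and analogously in the lower tail with $\lambda < 0$.

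For the upper tail, Markov's inequality applied to $e^{\lambda Y}$ gives, for any $\lambda\in(0,\tfrac{1}{2})$,
\[
\mathbb{P}\{Y \geq u\} \leq \exp\bigl(-\lambda u + d\,\psi(\lambda)\bigr).
\]
The key analytic step is the inequality
\[
\psi(\lambda) \leq \frac{\lambda^2}{1-2\lambda}, \qquad \lambda \in [0,\tfrac{1}{2}),
\]
which follows by comparing series expansions or by verifying that both sides agree at $0$ and have an elementary derivative comparison. Plugging in $u = 2\sqrt{dt} + 2t$ and choosing $\lambda$ to balance the two terms (specifically $\lambda = \sqrt{t}/(\sqrt{d}+\sqrt{t})\cdot\tfrac{1}{???}$, optimized so that $\lambda u - d\lambda^2/(1-2\lambda) = t$) yields the claimed $e^{-t}$ bound. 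Concretely, the choice $\lambda := \tfrac{1}{2}\bigl(1 - \sqrt{d/(d+2\sqrt{dt}+2t)}\bigr)$ will make the Chernoff exponent exactly $-t$ after simplification.

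For the lower tail, I would repeat the same argument with $-\lambda > 0$ replaced by $\lambda \in (-\infty,0)$, using the companion inequality $\psi(-\lambda) \leq \lambda^2$ for $\lambda > 0$ (which is easier to verify since there is no singularity). Then Chernoff's bound with $u = 2\sqrt{dt}$ and the optimal choice $\lambda = \sqrt{t/d}$ yields exponent $-t$, giving $\mathbb{P}\{X \leq d - 2\sqrt{dt}\} \leq e^{-t}$.

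The main obstacle I would expect is the careful bookkeeping required to verify the pointwise bound on $\psi(\lambda)$, and to verify that the particular closed-form choice of $\lambda$ yields exactly the stated thresholds $d+2\sqrt{dt}+2t$ and $d-2\sqrt{dt}$. These are essentially algebraic manipulations, but they are engineered precisely so that the bound $e^{-t}$ comes out clean; a naive Bernstein-style choice of $\lambda$ would produce a weaker, not explicit, constant. Apart from this, the proof is a textbook application of the exponential moment method to chi-squared random variables.
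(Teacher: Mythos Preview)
The paper does not prove this proposition at all; it is quoted verbatim from Laurent--Massart and used as a black box. Your outline is exactly the Laurent--Massart argument and is correct in structure: Chernoff via the chi-squared MGF, the bound $\psi(\lambda)\le \lambda^2/(1-2\lambda)$ for the upper tail, and $\psi(-\lambda)\le\lambda^2$ for the lower tail. The lower-tail choice $\lambda=\sqrt{t/d}$ is right and gives exponent exactly $-t$.

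One concrete slip to fix in the upper tail: the two candidate $\lambda$'s you mention are for different bounds, and the one you finally write down does not do what you claim. The value $\lambda=\tfrac12\bigl(1-\sqrt{d/(d+2\sqrt{dt}+2t)}\bigr)$ is the optimizer of the \emph{exact} Chernoff exponent $-\lambda u-\tfrac d2\log(1-2\lambda)$, and there it yields exponent $-\tfrac12\bigl(a-d-d\log(a/d)\bigr)$ with $a=d+2\sqrt{dt}+2t$, which is strictly below $-t$ (so the inequality holds, but not ``exactly''). If instead you first pass to $\psi(\lambda)\le\lambda^2/(1-2\lambda)$ as you propose, that same $\lambda$ gives an exponent strictly \emph{above} $-t$ (e.g.\ about $-0.94$ when $d=t=1$), so it fails. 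The choice that makes the weakened exponent $-\lambda u+\tfrac{d\lambda^2}{1-2\lambda}$ equal to exactly $-t$ when $u=2\sqrt{dt}+2t$ is
\[
\lambda=\frac{\sqrt{t}}{\sqrt{d}+2\sqrt{t}},
\]
for which $1-2\lambda=\sqrt{d}/(\sqrt{d}+2\sqrt{t})$ and the algebra collapses cleanly. With that correction your sketch goes through.
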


It is sometimes easier to interact with a simpler (though weaker) version of the upper chi-squared tail estimate:

\begin{corollary}
\label{cor.chi squared tail}
Suppose $X$ has chi-squared distribution with $d$ degrees of freedom.
Then $\mathbb{P}\{X\geq x\}\leq e^{-x/5}$ for every $x\geq 5d$.
\end{corollary}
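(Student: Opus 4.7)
The plan is to derive the corollary directly from the first bound in Proposition~\ref{prop.lemma1}, namely $\mathbb{P}\{X \geq d + 2\sqrt{dt} + 2t\} \leq e^{-t}$, by choosing the parameter $t$ so that the threshold $d + 2\sqrt{dt} + 2t$ is at most $x$ and the exponent $-t$ equals $-x/5$.

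Concretely, I would set $t := x/5$, so the exponential bound becomes $e^{-x/5}$ as desired. It then suffices to verify that $d + 2\sqrt{dt} + 2t \leq x$. Using the hypothesis $x \geq 5d$, we have $d \leq x/5$, and therefore $\sqrt{dt} = \sqrt{d \cdot x/5} \leq \sqrt{(x/5)(x/5)} = x/5$. Adding up gives
\[
d + 2\sqrt{dt} + 2t \leq \tfrac{x}{5} + 2\cdot\tfrac{x}{5} + 2\cdot\tfrac{x}{5} = x.
\]

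Finally, I would conclude by monotonicity of the tail: since $\{X \geq x\} \subseteq \{X \geq d + 2\sqrt{dt} + 2t\}$,
\[
\mathbb{P}\{X \geq x\} \leq \mathbb{P}\{X \geq d + 2\sqrt{dt} + 2t\} \leq e^{-t} = e^{-x/5}.
\]
There is no real obstacle here; this is a routine calibration of the Laurent--Massart bound. The only thing to be careful about is ensuring the chosen $t$ simultaneously dominates the threshold \emph{and} yields the advertised exponent, which the condition $x \geq 5d$ arranges exactly.
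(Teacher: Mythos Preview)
Your proof is correct and essentially identical to the paper's: both set $t := x/5$, use $x \geq 5d$ (equivalently $d \leq t$) to bound $d + 2\sqrt{dt} + 2t \leq 5t = x$, and then apply Proposition~\ref{prop.lemma1} together with monotonicity of the tail.
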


\begin{proof}
Put $t:=x/5\geq d$, and so $d+2\sqrt{dt}+2t\leq t+2t+2t=5t$, and Proposition~\ref{prop.lemma1} gives
\[
\mathbb{P}\{X\geq x\}
=\mathbb{P}\{X\geq 5t\}
\leq\mathbb{P}\{X\geq d+2\sqrt{dt}+2t\}
\leq e^{-t}
=e^{-x/5}.
\qedhere
\]
\end{proof}

\begin{corollary}
\label{cor.bound of max of chi squared}
Suppose $\{X_k\}_{k\in[n]}$ are independent random variables with chi-squared distribution with $d$ degrees of freedom.
Then $\max_{k\in[n]}X_k\leq 2d+6\log n$ with probability $\geq1-\frac{1}{n}$.
\end{corollary}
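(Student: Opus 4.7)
The plan is to combine the upper chi-squared tail bound in Proposition~\ref{prop.lemma1} with a union bound over $k\in[n]$. Specifically, I would apply Proposition~\ref{prop.lemma1} with the choice $t:=2\log n$, which gives
\[
\mathbb{P}\{X_k \geq d+2\sqrt{2d\log n}+4\log n\}\leq e^{-2\log n} = \tfrac{1}{n^2}
\]
for each fixed $k\in[n]$.

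Next, I would simplify the deviation $d+2\sqrt{2d\log n}+4\log n$ to the cleaner expression $2d+6\log n$ using AM-GM: since $2\sqrt{2d\log n}\leq d+2\log n$ (which is equivalent to $(\sqrt{d}-\sqrt{2\log n})^2\geq0$), we get
\[
d+2\sqrt{2d\log n}+4\log n \leq 2d+6\log n.
\]
Consequently $\mathbb{P}\{X_k\geq 2d+6\log n\}\leq 1/n^2$ for each $k$.

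Finally, I would apply a union bound over $k\in[n]$:
\[
\mathbb{P}\Big\{\max_{k\in[n]}X_k\geq 2d+6\log n\Big\}\leq \sum_{k\in[n]}\mathbb{P}\{X_k\geq 2d+6\log n\}\leq n\cdot\tfrac{1}{n^2}=\tfrac{1}{n},
\]
which gives the desired probability $\geq 1-\tfrac{1}{n}$. No step is really an obstacle here; the only minor judgment call is choosing $t=2\log n$ (rather than $\log n$) so that the union bound cost of $n$ is absorbed by the $e^{-t}$ factor, and then verifying the constants work out so that the deviation collapses to the stated form $2d+6\log n$ via AM-GM.
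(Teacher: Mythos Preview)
Your proposal is correct and matches the paper's proof essentially step for step: the paper also applies Proposition~\ref{prop.lemma1} with $t:=2\log n$, invokes AM--GM to bound $d+2\sqrt{dt}+2t\leq 2d+3t$, and finishes with a union bound. The only cosmetic difference is that the paper performs the AM--GM simplification in the generic parameter $t$ before substituting $t=2\log n$, whereas you substitute first and then simplify.
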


\begin{proof}
By the arithmetic mean--geometric mean inequality and Proposition~\ref{prop.lemma1}, we have
\[
\mathbb{P}\{X_k\geq 2d+3t\}
\leq\mathbb{P}\{X_k\geq d+2\sqrt{dt}+2t\}
\leq e^{-t}.
\]
Take $t:=2\log n$ and apply a union bound to get the result.
\end{proof}

\begin{proposition}[Corollary~5.35 in~\cite{Vershynin:12}]
\label{prop.gaussian spectral norm bound}
Let $G$ be an $m\times n$ matrix with independent standard Gaussian entries.
Then for every $t\geq0$, it holds that $\|G\|_{2\to2}\leq\sqrt{m}+\sqrt{n}+t$ with probability $\geq1-2e^{-t^2/2}$.
\end{proposition}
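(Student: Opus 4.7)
The plan is to decompose the tail bound into a bound on $\mathbb{E}\|G\|_{2\to 2}$ plus Gaussian concentration of $\|G\|_{2\to 2}$ around this mean. This two-step approach is standard for supremum-type functionals of Gaussian matrices and yields the claimed tail bound, with the prefactor $2$ providing slack that can absorb, for instance, the difference between mean and median.

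For the expectation bound, I would write $\|G\|_{2\to 2} = \sup_{(u,v)\in S^{m-1}\times S^{n-1}} u^\top G v$ and recognize this as the supremum of a centered Gaussian process indexed by $(u,v)$. Applying Slepian's (or Gordon's) comparison inequality with the auxiliary process $\langle g_1, u\rangle + \langle g_2, v\rangle$, where $g_1 \in \mathbb{R}^m$ and $g_2 \in \mathbb{R}^n$ are independent standard Gaussian vectors, produces
\[
\mathbb{E}\|G\|_{2\to 2} \leq \mathbb{E}\sup_{u\in S^{m-1}}\langle g_1, u\rangle + \mathbb{E}\sup_{v\in S^{n-1}}\langle g_2, v\rangle = \mathbb{E}\|g_1\| + \mathbb{E}\|g_2\| \leq \sqrt{m}+\sqrt{n},
\]
where the last inequality uses Jensen on the concave square root applied to chi-squared means. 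The key technical verification is the covariance-dominance hypothesis: one must check that the Gaussian increment variance of $u^\top G v$ is dominated by that of the comparison process, which after expansion reduces to the inequality $2 - 2\langle u_1, u_2\rangle\langle v_1, v_2\rangle \leq 2(1-\langle u_1, u_2\rangle) + 2(1-\langle v_1, v_2\rangle)$ on the product of unit spheres.

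For the concentration step, I would invoke the Borell--TIS inequality: any $1$-Lipschitz function of a standard Gaussian vector in $\mathbb{R}^N$ satisfies $\mathbb{P}\{F \geq \mathbb{E} F + t\} \leq e^{-t^2/2}$. Viewing $\|G\|_{2\to 2}$ as a function of the $mn$ Gaussian entries, it is $1$-Lipschitz with respect to the Euclidean (Frobenius) norm since $|\|A\|_{2\to 2} - \|B\|_{2\to 2}| \leq \|A-B\|_{2\to 2} \leq \|A-B\|_F$ by the reverse triangle inequality and the standard operator/Frobenius comparison. Combining with the mean bound from Slepian yields $\mathbb{P}\{\|G\|_{2\to 2} \geq \sqrt{m}+\sqrt{n}+t\} \leq e^{-t^2/2}$, which is even sharper than the claimed $2e^{-t^2/2}$ bound.

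The main obstacle is the Slepian/Gordon comparison step, specifically the covariance-dominance verification; once that is in hand, the Lipschitz identification and Borell--TIS application are routine. An alternative to this plan avoids Slepian by using an $\varepsilon$-net on $S^{m-1}\times S^{n-1}$ (for example a $1/4$-net of cardinality $9^{m+n}$) together with a union bound over the standard Gaussian tail $\mathbb{P}\{u^\top G v > s\} \leq e^{-s^2/2}$, but this route produces suboptimal constants and would need an additional tightening argument to recover the clean $\sqrt{m}+\sqrt{n}$ term.
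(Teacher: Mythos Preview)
The paper does not supply its own proof of this proposition; it is quoted verbatim as Corollary~5.35 from Vershynin's notes and used as a black box. Your sketch is correct and is in fact the argument that appears in the cited reference: Gordon/Slepian comparison to bound $\mathbb{E}\|G\|_{2\to2}\le\sqrt{m}+\sqrt{n}$, followed by Gaussian concentration (Borell--TIS) using the $1$-Lipschitz property of the operator norm in the Frobenius metric. Your observation that the one-sided upper tail only needs $e^{-t^2/2}$ is also right; the factor $2$ in Vershynin's statement arises because Corollary~5.35 is stated two-sidedly (simultaneously bounding $s_{\min}$ from below and $s_{\max}$ from above), and the paper has simply copied the probability bound from that two-sided version.
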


Throughout, we let $\mathbb{E}_G$ denote expectation conditioned on $G$, and we assume $n\geq s\geq d\geq k$ without mention.
(These inequalities are typically implied by the hypotheses of our results, but even so, it is helpful to keep these bounds in mind when interpreting the analysis.)

\begin{lemma}
\label{lem.spectral norm of sample with replacement}
Let $G:=\{g_i\}_{i\in[n]}$ denote a tuple of independent random vectors $g_i\sim\mathsf{N}(0,I_d)$, let $S:=\{i_j\}_{j\in[s]}$ denote a tuple of independent random indices $i_j\sim\mathsf{Unif}([n])$, and let $H=H(G,S)\in\mathbb{R}^{d\times s}$ denote the random matrix whose $j$th column is $g_{i_j}$.
Assuming $s\geq 15d\log d$, it holds that $\mathbb{E}_G\|H\|_{2\to2}^2\leq 5s$ with probability $\geq 1-\frac{1}{n}-e^{-\Omega(n/(d+\log n)^2)}$.
\end{lemma}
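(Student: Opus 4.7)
The plan is to condition on the Gaussian ensemble $G$, apply the matrix Chernoff inequality to the resulting sum of iid rank-one terms, and then control the two quantities that appear unconditionally via the Gaussian tail estimates already on hand.

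First, write $X:=[g_1\,|\,\cdots\,|\,g_n]\in\mathbb{R}^{d\times n}$, so that $H=XR$, where $R$ is the $n\times s$ (random) column-selection matrix determined by $S$. Conditional on $G$, we have
\[
HH^\top=\sum_{j\in[s]}g_{i_j}g_{i_j}^\top,
\]
a sum of $s$ iid (in $S$) psd matrices with $\lambda_{\max}(g_{i_j}g_{i_j}^\top)=\|g_{i_j}\|^2\leq L:=\max_{i\in[n]}\|g_i\|^2$ almost surely. Because the $i_j$ are uniform with replacement, $\mathbb{E}_G[g_{i_j}g_{i_j}^\top]=\frac{1}{n}XX^\top$, so $\mathbb{E}_G[HH^\top]=\frac{s}{n}XX^\top$ and Proposition~\ref{prop.matrix chernoff} gives
\[
\mathbb{E}_G\|H\|_{2\to2}^2
=\mathbb{E}_G\lambda_{\max}(HH^\top)
\leq 1.72\cdot\tfrac{s}{n}\|X\|_{2\to2}^2+L\log d.
\]

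Second, I would bound the two random quantities on the right unconditionally. Corollary~\ref{cor.bound of max of chi squared} bounds $L\leq 2d+6\log n$ with probability $\geq 1-1/n$. For the spectral norm, apply Proposition~\ref{prop.gaussian spectral norm bound} with $t$ proportional to $\sqrt{n}/(d+\log n)$; this gives $\|X\|_{2\to2}\leq \sqrt{n}+\sqrt{d}+t$ with probability $\geq1-\exp(-\Omega(n/(d+\log n)^2))$, and squaring shows $\frac{1}{n}\|X\|_{2\to2}^2\leq \gamma$ for a constant $\gamma$ arbitrarily close to $1$ once $n$ is large enough relative to $(d+\log n)^2$ (precisely the regime in which the stated failure probability is nontrivial).

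Third, on the intersection of these two events (whose total failure probability matches what the lemma claims), the matrix Chernoff bound reads
\[
\mathbb{E}_G\|H\|_{2\to2}^2\leq 1.72\,\gamma\,s+(2d+6\log n)\log d.
\]
The hypothesis $s\geq15d\log d$ absorbs $2d\log d$ into a tiny fraction of $s$, and in the regime $n\gtrsim(d+\log n)^2$ the term $6\log n\cdot\log d$ is similarly small relative to $s$; combined with the slack between $1.72\gamma$ and $5$, this yields $\mathbb{E}_G\|H\|_{2\to2}^2\leq 5s$. The main obstacle is not conceptual but bookkeeping: one must pick the constant $c$ in $t=c\sqrt{n}/(d+\log n)$ small enough that $\gamma$ is near $1$, and then verify that the constants $1.72$, $15$, and $5$ fit together with the $2d+6\log n$ bound on $L$ across the intended parameter regime. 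The choice of the final constant $5$ on the right-hand side is deliberately loose precisely to absorb these lower-order contributions.
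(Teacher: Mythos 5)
Your proposal takes a simpler route than the paper but has a genuine gap. You apply the Matrix Chernoff inequality directly to $\sum_{j}g_{i_j}g_{i_j}^\top$ with the almost-sure bound $L=\max_{i\in[n]}\|g_i\|^2$, and after restricting to a high-probability event you get $L\leq 2d+6\log n$. The resulting estimate is
\[
\mathbb{E}_G\|H\|_{2\to2}^2\leq 1.72\cdot\tfrac{s}{n}\|X\|_{2\to2}^2+(2d+6\log n)\log d.
\]
The hypothesis $s\geq 15d\log d$ absorbs the $2d\log d$ piece, but it gives you no control at all over the $6\log n\cdot\log d$ piece. The lemma places no upper bound on $n$ relative to $s$ or $d$, so for fixed $d$ and $s$ one can take $n$ large enough that $6\log n\cdot\log d$ alone exceeds $5s$; your bound then fails even though the probability $1-\tfrac{1}{n}-e^{-\Omega(n/(d+\log n)^2)}$ is closest to $1$ precisely in that large-$n$ regime. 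Your remark that ``in the regime $n\gtrsim(d+\log n)^2$ the term $6\log n\cdot\log d$ is similarly small relative to $s$'' conflates a \emph{lower} bound on $n$ (which is what $n\gtrsim(d+\log n)^2$ is) with an \emph{upper} bound; the former does nothing to cap $\log n$.

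The paper's proof is built precisely to evade this issue: it truncates each column at a level $\tau:=s/\log d$, applies Matrix Chernoff only to the truncated vectors $h_j^-$ (so the relevant $L$ is $\tau$ and the penalty term $\tau\log d$ equals exactly $s$, independent of $n$), and then bounds the expected contribution of the untruncated tail $\sum_j\|h_j^+\|^2$ via the Dvoretzky--Kiefer--Wolfowitz inequality together with the chi-squared tail. The $\log n$ dependence then only enters multiplied by the DKW discrepancy $E(G)\lesssim 1/(d+\log n)$, so the product stays $O(1)$. If you want to salvage your direct Chernoff approach, you would need an extra hypothesis such as $s\gtrsim\log n\cdot\log d$, which is not part of the lemma; otherwise the truncation is essential.
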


\begin{proof}
Fix a threshold $\tau>0$ to be selected later, and for any vector $x\in\mathbb{R}^d$, denote
\[
x^-
:=\left\{\begin{array}{cl}
x&\text{if }\|x\|^2\leq\tau\\
0&\text{otherwise}
\end{array}\right\},
\qquad
x^+
:=\left\{\begin{array}{cl}
x&\text{if }\|x\|^2>\tau\\
0&\text{otherwise}
\end{array}\right\}.
\]
Letting $h_j$ denote the $j$th column of $H$, then we are interested in the quantity
\begin{align*}
\|H\|_{2\to2}^2
&=\bigg\|\sum_{j\in[s]}h_jh_j^\top\bigg\|_{2\to2}
\leq\bigg\|\sum_{j\in[s]}h_j^-{h_j^-}^\top\bigg\|_{2\to2}+\bigg\|\sum_{j\in[s]}h_j^+{h_j^+}^\top\bigg\|_{2\to2}.
\end{align*}
We will bound the expectation of the first term using the Matrix Chernoff inequality (Proposition~\ref{prop.matrix chernoff}) and the expectation of the second term using the Dvoretzky--Kiefer--Wolfowitz inequality (Proposition~\ref{prop.dkw inequality}).
First, $\mathbb{E}_G[h_j^-{h_j^-}^\top]=\frac{1}{n}\sum_{i\in[n]}g_i^-{g_i^-}^\top$ for each $j\in[s]$, and so
\[
\lambda_{\mathrm{max}}\bigg(\sum_{j\in[s]}\mathbb{E}_G\Big[h_j^-{h_j^-}^\top\Big]\bigg)
=\lambda_{\mathrm{max}}\bigg(\frac{s}{n}\sum_{i\in[n]}g_i^-{g_i^-}^\top\bigg)
\leq\lambda_{\mathrm{max}}\bigg(\frac{s}{n}\sum_{i\in[n]}g_ig_i^\top\bigg)
=\frac{s}{n}\|G\|_{2\to2}^2,
\]
where the inequality uses the fact that $\frac{s}{n}\sum_{i\in[n]}g_i^+{g_i^+}^\top\succeq0$.
Thus, Matrix Chernoff gives
\[
\mathbb{E}_G\bigg\|\sum_{j\in[s]}h_j^-{h_j^-}^\top\bigg\|_{2\to2}
\leq1.72\cdot\frac{s}{n}\|G\|_{2\to2}^2+\tau\log d.
\]
Next, we bound the expectation of
\[
\bigg\|\sum_{j\in[s]}h_j^+{h_j^+}^\top\bigg\|_{2\to2}
\leq\sum_{j\in[s]}\|h_j^+{h_j^+}^\top\|_{2\to2}
=\sum_{j\in[s]}\|h_j^+\|^2.
\]
Let $F_G^+$ denote the empirical distribution function of $\{\|g_i^+\|^2\}_{i\in[n]}$.
Then
\begin{equation}
\label{eq.integral to bound}
\frac{1}{s}\mathbb{E}_G\bigg\|\sum_{j\in[s]}h_j^+{h_j^+}^\top\bigg\|_{2\to2}
\leq \mathbb{E}_G\|h_1^+\|^2
=\int_0^\infty(1-F_G^+(x))dx.
\end{equation}
Let $F_G$ denote the empirical distribution function of $\{\|g_i\|^2\}_{i\in[n]}$, put $\|G\|_{1\to2}^2:=\max_{i\in[n]}\|g_i\|^2$, and observe that
\[
F_G^+(x)=\left\{\begin{array}{cl}F_G(\tau)&\text{if }x\leq\tau\\ F_G(x)&\text{if }x>\tau\end{array}\right\}
\quad
\text{and}
\quad
F_G(x)=1
\quad
\forall x>\|G\|_{1\to2}^2.
\]
In particular, we may assume $\|G\|_{1\to2}^2\geq\tau$ without loss of generality, since otherwise the integral in \eqref{eq.integral to bound} equals zero.
We will estimate this integral in pieces:
\[
\int_0^\infty
=\int_0^\tau+\int_\tau^{\|G\|_{1\to2}^2}+\int_{\|G\|_{1\to2}^2}^\infty.
\]
We have $\int_{\|G\|_{1\to2}^2}^\infty=0$, and we will estimate the other two integrals in terms of the quantity
\[
E(G):=\sup_{x\in\mathbb{R}}|F_G(x)-F(x)|,
\]
where $F$ denotes the cumulative distribution function of the chi-squared distribution with $d$ degrees of freedom.
(Later, we will apply the Dvoretzky--Kiefer--Wolfowitz inequality to bound $E(G)$ with high probability on $G$.)
First, we have
\[
\int_0^\tau(1-F_G^+(x))dx
=\tau(1-F_G(\tau))
\leq\tau(1-F(\tau)+E(G))
\leq\tau(e^{-\tau/5}+E(G)),
\]
where the last step applies Corollary~\ref{cor.chi squared tail} under the assumption that $\tau\geq5d$.
Next, for each $x\in(\tau,\|G\|_{1\to2}^2)$, we have 
\[
1-F_G^+(x)
=1-F_G(x)
\leq 1-F(x)+E(G),
\]
and so integrating gives
\[
\int_\tau^{\|G\|_{1\to2}^2}(1-F_G^+(x))dx
\leq\int_\tau^\infty(1-F(x))dx+\|G\|_{1\to2}^2\cdot E(G).
\]
We estimate the first term using Corollary~\ref{cor.chi squared tail}:
\[
\int_\tau^\infty(1-F(x))dx
\leq\int_\tau^\infty e^{-x/5}dx
=5e^{-\tau/5}.
\]
All together, we have
\[
\mathbb{E}_G\|H\|_{2\to2}^2
\leq 1.72\cdot\frac{s}{n}\|G\|_{2\to2}^2+\tau\log d
+s\Big(\tau(e^{-\tau/5}+E(G))+5e^{-\tau/5}+\|G\|_{1\to2}^2\cdot E(G)\Big).
\]
It remains to bound this random variable in a high-probability event.
Proposition~\ref{prop.gaussian spectral norm bound} gives $\|G\|_{2\to2}^2=O(n)$ with high probability when $n\geq d$.
This means the first term in our bound will be $O(s)$, and so we select $\tau:=s/\log d$ so that the second term has the same order.
(Note that our assumption $\tau\geq 5d$ then requires $s\geq 5d\log d$.)
The remaining term is $s$ times
\begin{equation}
\label{eq.want to be O(1)}
\tau(e^{-\tau/5}+E(G))+5e^{-\tau/5}+\|G\|_{1\to2}^2\cdot E(G)
\leq (\tau+5)e^{-\tau/5}+2\|G\|_{1\to2}^2\cdot E(G),
\end{equation}
where the inequality follows from the bound $\tau\leq\|G\|_{1\to2}^2$.
We want \eqref{eq.want to be O(1)} to be $O(1)$.
The first term in \eqref{eq.want to be O(1)} is smaller than $1$ provided $\tau\geq15$.
For the second term in \eqref{eq.want to be O(1)}, we recall from Corollary~\ref{cor.bound of max of chi squared} that $\|G\|_{1\to2}^2=O(d+\log n)$ with high probability.
This suggests that we restrict to an event in which $E(G)=O(1/(d+\log n))$, which we can estimate using the Dvoretzky--Kiefer--Wolfowitz inequality.
For this bound, the failure probability will be $2e^{-2n\epsilon^2}=\exp(-\Omega(n/(d+\log n)^2))$, and this informs how sharply we can bound $\|G\|_{2\to2}^2$.
Overall, we restrict to an event in which three things occur simultaneously:
\[
\|G\|_{2\to2}^2\leq 1.1n,
\qquad
\|G\|_{1\to2}^2\leq 2d+6\log n, 
\qquad
E(G)\leq\frac{1}{4d+12\log n}.
\]
A union bound gives that this event has probability $\geq 1-\frac{1}{n}-e^{-\Omega(n/(d+\log n)^2)}$, and over this event, it holds that $\mathbb{E}_G\|H\|_{2\to2}^2\leq 1.72\cdot 1.1s+s+s(1+1)\leq 5s$.
\end{proof}

\begin{lemma}
\label{lem.bounds on E_ X SDP}
Draw $X:=\{x_i\}_{i\in[n]}$ in $\mathbb{R}^d$ from a mixture of $k$ gaussians with equal weights and identity covariance.
Explicitly, take any $\mu_1,\ldots,\mu_k\in\mathbb{R}^d$, draw $\{t_i\}_{i\in[n]}$ independently with distribution $\mathsf{Unif}([k])$, draw $\{g_i\}_{i\in[n]}$ independently with distribution $\mathsf{N}(0,I_d)$, and take $x_i:=\mu_{t_i}+g_i$.
Next, draw $\{i_j\}_{j\in[s]}$ independently with distribution $\mathsf{Unif}([n])$ and define the random tuple $Y:=\{y_j\}_{j\in[s]}$ by $y_j:=x_{i_j}$.
Then provided $s\geq 15d\log d$, it holds that
\[
d-6k-1
\leq\mathbb{E}_X\operatorname{SDP}(Y,k)
\leq\operatorname{IP}(X,k)
\leq d+1
\]
with probability $\geq 1-\frac{1}{n}-e^{-\Omega(n/(d+\log n)^2)}$.
\end{lemma}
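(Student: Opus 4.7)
The middle inequality $\mathbb{E}_X\operatorname{SDP}(Y,k)\leq\operatorname{IP}(X,k)$ is immediate from Lemma~\ref{lem.ip sketch bound}, so the plan reduces to the two outer bounds. For the upper bound $\operatorname{IP}(X,k)\leq d+1$, I would evaluate the $k$-means objective at the oracle partition $\Gamma=\{\{i:t_i=t\}\}_{t\in[k]}$. For $i$ with $t_i=t$ we have $x_i-\bar{x}_t=g_i-\bar{g}_t$, and since the centroid minimizes the sum of squared distances from a set of points,
\[
\operatorname{IP}(X,k)\leq\frac{1}{n}\sum_{t\in[k]}\sum_{i:t_i=t}\|g_i-\bar{g}_t\|^2\leq\frac{1}{n}\sum_{i\in[n]}\|g_i\|^2.
\]
The right-hand side is $\frac{1}{n}$ times a $\chi^2$ variable with $nd$ degrees of freedom, so applying Proposition~\ref{prop.lemma1} with parameter $\log n$ yields $\operatorname{IP}(X,k)\leq d+2\sqrt{d\log n/n}+2\log n/n$, which is bounded by $d+1$ with probability $\geq1-1/n$ once $n$ is appropriately large relative to $d$ and $\log n$.

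For the lower bound $\mathbb{E}_X\operatorname{SDP}(Y,k)\geq d-6k-1$, I would apply Lemma~\ref{lem.spectral lower bound on sdp} to the sketch $Y$, choosing $P$ to be the orthogonal projection onto the orthogonal complement of $\operatorname{span}\{\mu_1,\ldots,\mu_k\}$, so that $P\mu_t=0$ for every $t$ and hence $Px_i=Pg_i$. Writing $d':=\operatorname{rank}(P)\geq d-k$, this gives
\[
\operatorname{SDP}(Y,k)\geq\frac{1}{s}\bigl(\|PY\|_F^2-k\|PY\|_{2\to 2}^2\bigr).
\]
Since the columns of $Y$ are conditionally i.i.d.\ uniform from $\{x_i\}_{i\in[n]}$, linearity gives $\mathbb{E}_X\|PY\|_F^2=(s/n)\|PX\|_F^2$, and $\|PX\|_F^2=\sum_{i\in[n]}\|Pg_i\|^2$ is $\chi^2$ with $nd'\geq n(d-k)$ degrees of freedom. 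The lower tail in Proposition~\ref{prop.lemma1} with parameter $\log n$ then yields $\frac{1}{n}\|PX\|_F^2\geq d-k-1$ with probability $\geq 1-1/n$ for $n$ sufficiently large.

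To control the spectral term, I would identify $\operatorname{range}(P)$ with $\mathbb{R}^{d'}$ so that $\{Pg_i\}_{i\in[n]}$ become i.i.d.\ standard Gaussians in dimension $d'\leq d$, and then invoke Lemma~\ref{lem.spectral norm of sample with replacement}; the hypothesis $s\geq 15d'\log d'$ is implied by $s\geq 15d\log d$, so this produces $\mathbb{E}_X\|PY\|_{2\to 2}^2\leq 5s$ with probability at least $1-1/n-e^{-\Omega(n/(d+\log n)^2)}$. Combining on the intersection of the good events then gives $\mathbb{E}_X\operatorname{SDP}(Y,k)\geq(d-k-1)-5k=d-6k-1$, and a final union bound collects the stated failure probability. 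The main obstacle is the spectral estimate $\mathbb{E}_X\|PY\|_{2\to 2}^2\lesssim s$: this is exactly what Lemma~\ref{lem.spectral norm of sample with replacement} was built for, and without its Matrix Chernoff plus Dvoretzky--Kiefer--Wolfowitz hybrid one would lose an extra factor of $d$ or $\log n$ that would destroy the $d-O(k)$ lower bound.
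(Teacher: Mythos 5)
Your proof is correct and follows the paper's argument essentially step for step: Lemma~\ref{lem.ip sketch bound} for the middle inequality, the planted partition plus a chi-squared tail bound for the upper bound, and Lemma~\ref{lem.spectral lower bound on sdp} with $P$ the projection onto $(\operatorname{span}\{\mu_t\})^\perp$, combined with Lemma~\ref{lem.spectral norm of sample with replacement}, for the lower bound. The only deviations are cosmetic: the paper chooses $t\propto n/d$ in Proposition~\ref{prop.lemma1} (producing failure probabilities $e^{-n/(16d)}$ and $e^{-n/(4d)}$ that are absorbed into the $e^{-\Omega(n/(d+\log n)^2)}$ term with no auxiliary ``$n$ sufficiently large'' condition), and it bounds the spectral term by $\|PH\|_{2\to2}\leq\|H\|_{2\to2}$ and applies Lemma~\ref{lem.spectral norm of sample with replacement} directly to $H$ rather than passing to the $d'$-dimensional range of $P$.
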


\begin{proof}
Lemma~\ref{lem.ip sketch bound} gives the middle inequality in our claim.
Next, we obtain an upper bound on $\operatorname{IP}(X,k)$ by passing to the partition $\Gamma\in\Pi(n,k)$ defined by the level sets of the planted assignment $i\mapsto t_i$:
\[
\operatorname{IP}(X,k)
\leq\frac{1}{n}\sum_{S\in\Gamma}\sum_{i\in S}\bigg\|x_i-\frac{1}{|S|}\sum_{j\in S}x_j\bigg\|^2
\leq\frac{1}{n}\sum_{S\in\Gamma}\sum_{i\in S}\|x_i-\mu_{t_i}\|^2
=\frac{1}{n}\sum_{i\in[n]}\|g_i\|^2,
\]
where the second inequality follows from the fact that the centroid of a tuple of points minimizes the sum of squared distances from those points.
Next, $\sum_{i\in[n]}\|g_i\|^2$ has chi-squared distribution with $dn$ degrees of freedom, and so an application of Proposition~\ref{prop.lemma1} gives that $\operatorname{IP}(X,k)\leq d+1$ with probability $\geq1-e^{-n/(16d)}$.
It remains find a lower bound on $\mathbb{E}_X\operatorname{SDP}(Y,k)$.
For this, we will apply Lemma~\ref{lem.spectral lower bound on sdp} with $P$ representing the orthogonal projection map onto $(\operatorname{span}\{\mu_t\}_{t\in[k]})^\perp$.
Letting $H:=\{h_j\}_{j\in[s]}$ denote the random tuple defined by $h_j:=g_{i_j}$, which satisfies $Ph_j=Pg_{i_j}=Px_{i_j}=Py_j$, we have
\[
\operatorname{SDP}(Y,k)
\geq\frac{1}{s}\Big(\|PY\|_F^2-k\|PY\|_{2\to2}^2\Big)
\geq\frac{1}{s}\Big(\|PH\|_F^2-k\|H\|_{2\to2}^2\Big).
\]
Letting $G\in\mathbb{R}^{d\times n}$ denote the matrix whose $i$th column is $g_i$, we take expectations of both sides to get
\begin{equation}
\label{eq.exp sdp to lower bound}
\mathbb{E}_X\operatorname{SDP}(Y,k)
\geq\frac{1}{s}\mathbb{E}_G\|PH\|_F^2-\frac{k}{s}\mathbb{E}_G\|H\|_{2\to2}^2.
\end{equation}
The first term can be rewritten as
\[
\frac{1}{s}\mathbb{E}_G\|PH\|_F^2
=\frac{1}{s}\mathbb{E}_G\sum_{j\in[s]}\|Ph_j\|^2
=\frac{1}{s}\sum_{j\in[s]}\mathbb{E}_G\|Ph_j\|^2
=\frac{1}{n}\sum_{i\in[n]}\|Pg_i\|^2
=\frac{1}{n}\|PG\|_F^2.
\]
Furthermore, $\|PG\|_F^2$ has chi-squared distribution with $(d-k')n$ degrees of freedom, where $k'$ denotes the dimension of $\operatorname{span}\{\mu_t\}_{t\in[k]}$.
Thus, Proposition~\ref{prop.lemma1} gives $\frac{1}{n}\|PG\|_F^2\geq d-k'-1$ with probability $\geq1-e^{-n/(4d)}$.
The second term in \eqref{eq.exp sdp to lower bound} is bounded by Lemma~\ref{lem.spectral norm of sample with replacement}, and the result follows from a union bound.
\end{proof}

\begin{proof}[Proof of Theorem~\ref{thm.hoefding monte carlo}(b)]
Recall that Lemma~\ref{lem.bounds on E_ X SDP} gives
\[
d-6k-1
\leq\mathbb{E}_X\operatorname{SDP}(Y,k)
\leq\operatorname{IP}(X,k)
\leq d+1
\]
with probability $\geq 1-\frac{1}{n}-e^{-\Omega(n/(d+\log n)^2)}$.
We will use Hoeffding's inequality to show that
\[
B_H
\geq \mathbb{E}_X\operatorname{SDP}(Y,k)-1
\]
with probability $\geq1-\epsilon$, from which the result follows by a union bound.
Our use of Hoeffding's inequality requires a bound on $b$.
To this end, Lemma~\ref{lem.deterministic kmeans++ approximation ratio} gives that
\[
b
\leq 4\operatorname{inf}(f)^2
\leq 4f(\{\mu_t\}_{t\in[k]})^2,
\]
which in turn is at most $4$ times the maximum of $n$ independent chi-squared random variables with $d$ degrees of freedom.
Corollary~\ref{cor.bound of max of chi squared} bounds this quantity by $4(2d+6\log n)$, and this bound holds in the event considered in Lemma~\ref{lem.bounds on E_ X SDP}.
Combining the bounds $b\leq4(2d+6\log n)$ and $\ell\geq128(d+3\log n)^2\log(1/\epsilon)$ then gives $t:=(\tfrac{b^2}{2\ell}\log(\tfrac{1}{\epsilon}))^{1/2}
\leq\frac{1}{2}$, and so $t\leq1-t$.
Hoeffding's inequality then gives
\begin{align*}
\mathbb{P}\{B_H < \mathbb{E}_X\operatorname{SDP}(Y,k)-1\}
&=\mathbb{P}\bigg\{\frac{1}{\ell}\sum_{i\in[\ell]}\operatorname{SDP}(Y_i,k)-t < \mathbb{E}_X\operatorname{SDP}(Y,k)-1\bigg\}\\
&\leq\exp(-\tfrac{2\ell(1-t)^2}{b^2})
\leq\exp(-\tfrac{2\ell t^2}{b^2})
=\epsilon,
\end{align*}
where the last step applies the definition of $t$.
\end{proof}

\begin{proof}[Proof of Theorem~\ref{thm.markov monte carlo}(b)]
We have from the proof of the upper bound in Lemma~\ref{lem.bounds on E_ X SDP} that $\operatorname{IP}(X,k)\leq d+1$ with probability $\geq1-e^{-n/(16d)}$.
Thus, it suffices to show that $B_M\geq d-3k-2$ with probability $\geq1-e^{-s/(8d)}-2e^{-s/54}$.
To this end, we first observe that
\begin{equation}
\label{eq.simple bounds for B_M}
d-3k-2
\leq (1-\tfrac{1}{d})(d-3k-1)
\qquad
\text{and}
\qquad
\epsilon^{1/\ell}
\geq 1-\tfrac{1}{d}.
\end{equation}
Indeed, the first inequality can be seen by expanding, while the second inequality follows from the assumption $\ell\geq d\log(1/\epsilon)$.
In particular, $d\geq1$ implies $\frac{1}{d}\leq\log(\frac{1}{1-1/d})$, and so
\[
\ell
\geq d\log(1/\epsilon)
\geq \frac{\log(1/\epsilon)}{\log(\frac{1}{1-1/d})},
\]
and rearranging gives the desired inequality.
We apply \eqref{eq.simple bounds for B_M} with a union bound to get
\begin{align*}
\mathbb{P}\Big\{B_M<d-3k-2\Big\}
&\leq\mathbb{P}\Big\{\epsilon^{1/\ell}\min_{i\in[\ell]}\operatorname{SDP}(Y_i,k)<(1-\tfrac{1}{d})(d-3k-1)\Big\}\\
&\leq\mathbb{P}\Big\{\min_{i\in[\ell]}\operatorname{SDP}(Y_i,k)<d-3k-1\Big\}\\
&\leq\ell\cdot\mathbb{P}\Big\{\operatorname{SDP}(Y,k)<d-3k-1\Big\},
\end{align*}
where $Y$ is a random matrix with the same distribution as each $Y_i$.
In particular, the columns of $Y$ are drawn uniformly without replacement from $X$.
Let $S:=\{i_j\}_{j\in[s]}$ denote the random indices such that the $j$th column of $Y$ is $y_j=x_{i_j}$.
By the law of total probability, it suffices to bound the conditional probability
\[
\mathbb{P}_S\Big\{\operatorname{SDP}(Y,k)<d-3k-1\Big\}
\]
uniformly over all possible realizations of $S$.
Recall that $x_i=\mu_{t_i}+g_i$, where $t_i\sim\mathsf{Unif}([k])$ and $g_i\sim\mathsf{N}(0,I_d)$.
Let $H\in\mathbb{R}^{d\times s}$ denote the random matrix whose $j$th column is $h_j:=g_{i_j}$.
Similar to the proof of the lower bound in Lemma~\ref{lem.bounds on E_ X SDP}, we take $P$ to be the orthogonal projection onto the $(d-k')$-dimensional subspace $(\operatorname{span}\{\mu_t\}_{t\in[k]})^\perp$ and then apply Lemma~\ref{lem.spectral lower bound on sdp} to get
\[
\operatorname{SDP}(Y,k)
\geq\frac{1}{s}\Big(\|PH\|_F^2-k\|H\|_{2\to2}^2\Big).
\]
This implies
\begin{align}
\nonumber
\mathbb{P}_S\Big\{\operatorname{SDP}(Y,k)<d-3k-1\Big\}
&\leq \mathbb{P}_S\Big\{\tfrac{1}{s}\big(\|PH\|_F^2-k\|H\|_{2\to2}^2\big)<d-3k-1\Big\}\\
\label{eq.conditional probabilities to bound}
&\leq \mathbb{P}_S\Big\{\tfrac{1}{s}\|PH\|_F^2<d-k-1\Big\}+ \mathbb{P}_S\Big\{\tfrac{k}{s}\|H\|_{2\to2}^2>2k\Big\}.
\end{align}
Conditioned on $S$, the entries of $H$ are independent with standard gaussian distribution.
As such, $\|PH\|_F^2$ has chi-squared distribution with $(d-k')s$ degrees of freedom, and so we apply the second part of Proposition~\ref{prop.lemma1} with $t=s/(4(d-k'))$ to bound the first term in \eqref{eq.conditional probabilities to bound} by $e^{-s/(4d)}$.
Next, we apply Proposition~\ref{prop.gaussian spectral norm bound} with $t=(\sqrt{2}-\sqrt{d/s}-1)\sqrt{s}$ to bound the second term in \eqref{eq.conditional probabilities to bound} by $2e^{-t^2/2}$, which in turn is at most $2e^{-s/27}$ since $s\geq54d\log\ell\geq54d$ by assumption.
Overall, we have
\begin{align*}
\mathbb{P}\Big\{B_M<d-3k-2\Big\}
&\leq\ell\cdot\mathbb{P}\Big\{\operatorname{SDP}(Y,k)<d-3k-1\Big\}\\
&=\ell\cdot\mathbb{E}\Big[\mathbb{P}_S\Big\{\operatorname{SDP}(Y,k)<d-3k-1\Big\}\Big]\\
&\leq e^{s/(54d)}\cdot(e^{-s/(4d)}+2e^{-s/27})
\leq e^{-s/(8d)}+2e^{-s/54},
\end{align*}
as desired.
\end{proof}

\section*{Acknowledgments}

Part of this research was conducted while SV was a Research Fellow at the Simons Institute for Computing, University of California at Berkeley.
DGM was partially supported by NSF DMS 1829955, AFOSR FA9550-18-1-0107, and an AFOSR Young Investigator Research Program award.
SV is partially supported by ONR N00014-22-1-2126, NSF CISE 2212457, an AI2AI Amazon research award, and the NSF–Simons Research Collaboration on the Mathematical and Scientific Foundations of Deep Learning (MoDL) (NSF DMS 2031985).

\end{document}